\documentclass[11pt]{article}
\usepackage[margin=1in]{geometry}
\usepackage{setspace}
\usepackage{latexsym}
\usepackage{hyperref}
\usepackage{graphics}
\usepackage{amsmath}
\usepackage{xspace}
\usepackage{amssymb}
\usepackage{epsfig}
\usepackage{fullpage}
\usepackage{amsmath,amsthm}
\usepackage{epsfig}
\usepackage{enumerate}
\usepackage{bbm}
\usepackage{bm}
\usepackage{xcolor} 
\usepackage{algorithm}
\usepackage{algpseudocode}
\usepackage{appendix}
\usepackage{enumitem}
\usepackage{authblk}

\usepackage{natbib}
 \bibpunct[, ]{(}{)}{,}{a}{}{,}%

\newtheorem{theorem}{Theorem}
\newtheorem{lemma}{Lemma}

\newtheorem{proposition}{Proposition}
\newtheorem{corollary}{Corollary}

\newtheorem{definition}{Definition}

\newtheorem{condition}{Condition}

\newcommand{\EE}{{\mathbb{E}}}

\newcommand{\E}{\mathbb E}
\newcommand{\Pp}{\mathbb P}

\newcommand{\barY}{\bar Y}
\newcommand{\barX}{\bar X}

\newcommand{\tillam}{\widetilde\lambda}

\newcommand{\norm}[1]{\left\lVert #1\right\rVert}

\newcommand{\bmu}{\bm{\mu}}
\newcommand{\bn}{\mathbf{n}}
\newcommand{\bx}{\mathbf{x}}

\newcommand{\bN}{\mathbf{N}}

\newcommand{\ycedit}[1]{{\color{black}#1}}
\newcommand{\jledit}[1]{{\color{black}#1}}

\begin{document}

\title{Characterizing Bias in Post-Bandit Inference under Index Algorithms}
\author[1]{Lisu Wang}
\author[1]{Yilun Chen}
\author[1,2]{Jiaqi Lu}

\affil[1]{School of Data Science, The Chinese University of Hong Kong, Shenzhen}
\affil[2]{School of Management and Economics, The Chinese University of Hong Kong, Shenzhen}

\date{}
\maketitle
\begin{abstract}
    \jledit{Bandit algorithms generate data for downstream inference, but adaptive sampling biases post-bandit sample means. We analyze this bias for stable index algorithms, including \texttt{UCB1} and its generalizations, and derive sharp leading-order expressions for the sample-mean bias and expected $Z$-statistic. Our characterization reveals the algorithmic origin of bias through a key index-function-dependent quantity, which we term \emph{effective exploration rate}. For example, under \texttt{UCB1}, the effective exploration rate is of order $\sqrt{\log T}$, and the standardized bias of any arm (that is not uniquely optimal) decays at the extremely slow rate $1/\sqrt{\log T}$. We also show how the choice of the index function affects both regret and bias, which reveals a regret-bias trade-off: more exploratory algorithm reduces bias but increases regret. Our sharp characterization for bias uses a novel empirical fluid approximation of the algorithm's sampling dynamics, which may be of independent interest.}
\end{abstract}

\section{Introduction}

The widespread adoption of decision analytics in modern operations has generated vast amounts of data. Unlike conventional i.i.d. samples, such data is typically adaptively collected. For example, an online content platform uses an online learning algorithm to recommend content to users; the resulting engagement data of each content providers are shaped by how the algorithm allocates user exposure, which in turn depends on past observed user responses. This feedback loop introduces inherent policy dependence, making the data nonstandard from a statistical perspective. In particular, the sample mean estimator of such data can be systematically biased due to the selection effect, \ycedit{raising challenges when such data is to be used in statistical inference and consequently other downstream decision-making tasks.}

\ycedit{The inferential challenge posed by bias is distinct from that posed by standard error. A large standard error reduces precision and statistical power, whereas bias systematically shifts an estimator away from the truth and can distort confidence intervals and hypothesis tests. This is particularly concerning when bias is comparable to or larger than the standard error. It is therefore essential to characterize bias, especially at the scale relevant for statistical inference.}



We study this question under the multi-armed bandit (MAB) model, a canonical framework for online learning and sequential allocation in both theory and in practice. Prior literature has documented the bias phenomenon in MAB data: \cite{nie2018adaptively} shows that arm-specific sample means can be systematically biased downward. However, sharp characterizations of this bias under widely used bandit algorithms remain largely unavailable. The key challenge is full adaptivity: bandit algorithms repeatedly update their allocation rules based on accumulated observations, thereby inducing complex temporal dependencies among past outcomes, future decisions, and the resulting estimator. Moreover, these dependencies vary with the algorithmic design, making the magnitude of bias algorithm-dependent and analytically subtle. Existing work do not address these challenges, and instead derive general asymptotic bounds \cite{shin2019bias,shin2019sample, russo2016controlling} or focus on stylized settings with limited adaptivity \cite{nie2018adaptively,zhang2020inference}. This naturally poses the following research questions:\\

\emph{What is the sample mean bias induced by canonical bandit algorithms? How does the specific form of the algorithm impact the magnitude of bias?}\\

In this paper, we answer these questions for index-based algorithms, an important class of bandit algorithms that includes the canonical \texttt{UCB1} algorithm and its generalizations. We provide sharp leading-order characterizations of both the sample-mean bias and the bias of the corresponding standardized $Z$-statistic, under the essential condition that the algorithm is stable. Our analysis identifies a critical algorithm-dependent quantity, which we call the \emph{effective exploration rate}, as the key driver of the bias.  The resulting characterizations reveal fundamental differences between instances with a unique optimal arm and those with multiple optimal arms. They further show how algorithmic design goals, such as regret minimization, may trade off against the bias. We summarize our contributions below.

\subsection{Contributions}
\emph{Sharp characterization of bias.} 
\jledit{We prove that, under certain regularity conditions on the index algorithm and for sub-Gaussian reward distributions, the bias of each arm's Z-statistic, $Z_{a,T}$, admits an explicit leading-order expression as horizon $T \to \infty$:
\begin{align}
    & \Gamma_{a,T}\mathbb E[Z_{a,T}]
    \longrightarrow
    - \left(1-\frac{1}{|\mathcal O|}\mathbbm 1\{a\in\mathcal O\}\right)\sigma_a,
\end{align}
where $\mathcal O$ is the set of optimal arms, $\sigma_a$ is the reward standard deviation of arm $a$, and $\Gamma_{a,T}$ is an index-function-specific quantity that we term \emph{effective exploration rate}. We also prove a similar bias expression of the corresponding sample mean, which has the same leading order as the Z-statistic bias after standardization. 
Specializing to the class of generalized \texttt{UCB} algorithms with index $I_t(x, n) = x + \frac{f_t}{\sqrt{n}}$, the bias of the Z-statistic reduces to
\begin{align*}
    \mathbb E[Z_{a,T}]\asymp-\frac{1}{f_T}
\end{align*}
except when arm $a$ is the single optimal arm, in which case the bias is order-of-magnitude smaller.}

\emph{Algorithmic insights.}
Our sharp characterization uncovers the algorithmic origin of the bias and yields several important insights. For generalized \texttt{UCB} algorithms, the standardized sample-mean bias and the corresponding $Z$-statistic bias are\footnote{Except for the optimal arm when it is unique.} $\asymp 1/f_T$, while the instance-dependent regret is\footnote{When there exists at least one suboptimal arm $a$ with $\Delta_a>0$.} $\asymp f_T^2$. Hence, the bias scales as $1/\sqrt{\mathrm{regret}}$. In particular, for \texttt{UCB1}, the bias decays at the extremely slow rate $1/\sqrt{\log T}$. This shows that, even when the sample mean is asymptotically centered, the bias induced by adaptive allocation may remain substantial at the inferential scale, especially under algorithms designed to minimize regret aggressively. The characterization further reveals a qualitative distinction between the unique-optimal-arm setting and the multiple-optimal-arm setting. Under generalized \texttt{UCB}, the expected $Z$-statistic of an optimal arm is of order $1/f_T$ in the presence of multiple optimal arms. By contrast, when the optimal arm is unique, its expected $Z$-statistic is of the much smaller order bounded by $f_T/T$. This contrast aligns with the broader intuition that bandit instances are statistically most challenging when several arms are indistinguishable, or nearly indistinguishable, from the optimum.

\emph{Novel empirical fluid approximation.} 
To pin down bias in a tractable way, we develop a novel empirical fluid approximation of $\bm N_{T}$. The key difficulty in the analysis of a fully adaptive bandit algorithm is that the number of pulls and sample means of all arms are coupled in a complex way throughout the entire sampling procedure. We exploit the structure of index algorithms, which can be viewed as approximately equalizing the indices of all arms. This leads first to a deterministic fluid approximation $\bm n_{T}$ of $\bm N_{T}$, obtained by replacing empirical means with their population counterparts $\bm\mu$ in the index-equalization equations. We then introduce a second-order empirical fluid approximation $\tilde {\bm n}_{T}$ of $\bm N_{T}$ by reintroducing, in the index-equalization equations, the sample means evaluated at the fluid allocation count $\bm n_T$. This approximation remains tractable because it is not coupled with the entire adaptive sample path, yet it retains the first-order dependence between pull counts and reward fluctuations. We show that, under regularity conditions, $\tilde n_{a,T}$ is a sufficiently accurate proxy for $N_{a,T}$ to identify the leading covariance term that determines sample-mean bias. This empirical fluid framework is central to our bias characterization and may be useful for analyzing other inferential properties of index-based bandit algorithms.


\subsection{Other related work}
Inference after adaptive experiments has long been an important problem. Classical sequential analysis recognized that when the sample size is determined by the data, sample-mean-based inference can differ substantially from fixed-sample inference \cite{anscombe1952large}; subsequent work developed confidence intervals and bias corrections for estimators computed at data-dependent stopping times \cite{siegmund1978estimation,woodroofe1992estimation,lai2006bias}. More recent work studies inference under broader adaptive experiments, asking when ordinary inferential methods remain valid, when they fail, and which correction strategies are appropriate \cite{russo2016controlling,nie2018adaptively,bibaut2025demystifying}.


Multi-armed bandits are a canonical model of adaptive allocation and sequential decision-making, dating back to \cite{robbins1952some}. Modern applications include digital platforms, online marketplaces, and experimental systems \cite{li2010contextual,schwartz2017customer,scott2010modern}. A central objective in the bandit literature is regret minimization, and many canonical algorithms are designed for this purpose \cite{thompson1933likelihood,lai1985asymptotically,Auer2002,lattimore2020bandit}. As bandit algorithms are increasingly deployed as data-collection mechanisms, statistical inference with bandit-generated data has become an important and timely problem \cite{zhang2020inference,simchi2023multi}.


Existing post-bandit inference studies mainly focus on bias and asymptotic normality of sample means. In terms of bias, discussions are mainly around its sign and general bound (\cite{nie2018adaptively,shin2019bias,shin2019sample}), or obtain exact analysis in simple settings with limited adaptivity (\cite{nie2018adaptively,zhang2020inference}). We provide a sharp leading-order characterization of bias under fully adaptive canonical bandit algorithms including \texttt{UCB1}. Existing work also distinguishes stable algorithms, such as \texttt{UCB} (\cite{kalvit2021closer,khamaru2024inference,han2024ucb}), from unstable algorithms such as \texttt{Thompson Sampling}, \texttt{$\epsilon$-greedy}, and \texttt{MOSS}-type methods (\cite{deshpande2018accurate,zhang2020inference,zhang2021statistical,praharaj2025instability}), and focuses on stabilization techniques (\cite{halder2505stable,sengupta2026design}). Our results refine this view by showing that even among stable algorithms, standardized sample-mean bias can decay as slowly as $1/\sqrt{\log T}$ under \texttt{UCB1} or much faster under more exploratory rules.

Another related stream studies debiasing methods for post-policy inference. A major approach is inverse propensity weighting and its variants \cite{horvitz1952generalization,robins1994estimation,dudik2011doubly,hadad2021confidence}. These methods require that the sampling probability of the arm at each time is bounded away from zero, which is not satisfied by deterministic index algorithms. Thus classical propensity-score-based debiasing methods are not directly applicable. Our sharp bias formula complements this literature by suggesting algorithm-specific bias correction for stable index algorithms.



\subsection{Notation}
To simplify exposition, we sometimes use the classic Bachmann-Landau notation. 
{In particular, for two nonnegative sequences $\{a_T\}$ and $\{b_T\}$, we write $a_T = O(b_T)$ if there exist constants $C>0$ and $T_0$ such that $a_T \le C b_T$ for all $T \ge T_0$, and $a_T = \Omega(b_T)$ if $b_T = O(a_T)$. We write $a_T = \Theta(b_T)$ if $a_T = O(b_T)$ and $a_T = \Omega(b_T)$. Moreover, $a_T = o(b_T)$ means $a_T/b_T \to 0$ as $T\to\infty$, and $a_T = \omega(b_T)$ means $a_T/b_T \to \infty$ as $T\to\infty$. We also use $\tilde{O}(\cdot)$, $\tilde{\Omega}(\cdot)$, and $\tilde{\Theta}(\cdot)$ to suppress factors that are polylogarithmic in $T$.} All limits are taken w.r.t. $T$ unless otherwise specified. For any positive integer $L$, $[L]$ denotes $\{1, 2, \dots, L\}$, the set of positive integers up to $L.$

\section{The MAB model}
\label{sec:model}
 We consider a stochastic multi-armed bandit with a fixed number of arms \(a\in[K]:=\{1,\ldots,K\}\).  Arm \(a\) is associated with an infinite \emph{i.i.d.} sequence of rewards $X_{a, 1}, X_{a, 2}, \dots$, each drawn independently from a reward distribution with mean $\mu_a$ and variance $\sigma_a$. Let $\mu^{\star} = \max_{a \in [K]} \mu_a$ and $\mathcal{O} = \{a: \mu_a = \mu^{\star}\}$ denote the set of optimal arms. 
 We allow \(|\mathcal O|>1\).  For \(a\notin\mathcal O\), define the fixed gap $\Delta_a = \mu^{\star} - \mu_a > 0$ as its sub-optimality gap.

A bandit algorithm is a sequential sampling rule. Let \(A_t\in[K]\) denote the arm pulled at time \(t\), and let $N_a(t)=\sum_{u=1}^t \mathbf 1\{A_u=a\}$ be the number of pull of arm $a$ up to time $t$. If \(A_t=a\), the observed reward is \(R_t=X_{a,N_a(t)}\). The algorithm is adaptive in the sense that \(A_t\) is chosen based only on the past history $\mathcal F_{t-1}=\sigma(A_1,R_1,\ldots,A_{t-1},R_{t-1}),$ possibly together with external randomness independent of the rewards.

Let $\bar X_a(s)=\frac1 {s}\sum_{\ell=1}^{s} X_{a,\ell}$ denote the running sample mean of arm \(a\) based on its first \(s\) observed rewards. By the terminal horizon \(T\), the data collected from arm \(a\) are $\{X_{a,1},\ldots,X_{a,N_a(T)}\}.$ We consider post-bandit inference. Our main object of interest is the \emph{post-bandit sample mean bias} $\EE[\bar X_a(N_a(T)) - \mu_a]$, denoted by $\textrm{Bias}_{a, T}$. We also study the bias of the associated \emph{post-bandit \(z\)-statistic}
\[
 \EE[Z_{a,T}]
    =
    \EE\left[\frac{\sqrt{N_a(T)}}{\sigma_a}\{\bar X_{a}(N_a(T))-\mu_a\}\right].
\]

\section{Index Algorithms}\label{sec:index-algo}
An important class of bandit algorithms is the class of \emph{index algorithms}, which make allocation decisions by comparing arm-specific indices. In this paper, we focus on index algorithms in which the index of an arm depends on its current sample mean and current number of pulls.

\begin{definition}[Index algorithm]\label{def:index-algo}
Fix a sequence of index functions \(I_t(\cdot,\cdot)\). The algorithm first pulls each arm once for initialization. For each time \(t\ge K+1\),  the algorithm computes index
\[ 
I_t\!\left(\bar X_a(N_a(t-1)),\,N_a(t-1)\right)
\]
for each arm \(a\in[K]\), and then pulls an arm with the largest index with ties broken according to a fixed rule.
\end{definition}

This formulation covers many commonly used bandit algorithms. \jledit{For example, the canonical \texttt{UCB1}  corresponds to $I_t(x,n)=x+\sqrt{\frac{2\log t}{n}}.$ Other examples include \texttt{KL-UCB} and \texttt{MOSS} (detailed expressions omitted). Note that there are other index algorithms, such as the Gittins index and \texttt{UCB-V}, that require additional state variables. There are also popular non-index (and non-deterministic) algorithms such  as \texttt{Thompson Sampling} and \texttt{Exp3}. These algorithms are beyond the index algorithms discussed here.} 




\jledit{We next introduce an important quantity associated with the index function, which we term \emph{effective exploration rate}. This quantity, as we show later, crucially determines the size of sample mean bias under the algorithm. }
\begin{definition}[Effective Exploration Rate]\label{def:exploration rate}
Assume the index function \(I_t(x,n)\) is differentiable. Define its local effective
exploration rate at \((x,n,t)\) by
\[
    \Gamma_t(x,n)
    :=
    -\,2n^{3/2}
    \frac{\partial_n I_t(x,n)}{\partial_x I_t(x,n)}.
\]
\end{definition}
The effective exploration rate can be understood through quantifying the change of sample size $n$ in response to a one-standard-error perturbation of the sample mean $x$ in order to keep the index unchanged. Intuitively, suppose a sample size change $\Delta n = n \delta$ offsets a typical sample mean perturbation of $\Delta x \asymp 1/{\sqrt{n}}$. We obtain from linearizing $I_t$ around $(x, n)$ that 
\[
    \partial_x I_t(x,n)\frac{1}{\sqrt n}
    +
    \partial_n I_t(x,n)n\delta
    \approx 0,
\]
leading to $\delta \approx 2/\Gamma_t(x, n)$. Thus larger \(\Gamma_t(x,n)\) makes the index-based allocation less responsive to sample mean fluctuations, thereby alleviating adaptivity-induced selection bias. Our main theorem formalizes this intuition by proving a \(1/\Gamma_{a,T}\) convergence rate of the standardized sample mean bias. For generalized \texttt{UCB} indices \(I_t(x,n)=x+f_t/\sqrt n\), the effective exploration rate is simply \(\Gamma_t=f_t\).


\subsection{Empirical Fluid Approximation}

Index algorithms induce a discrete-time dynamical system with state $(\bar{\mathbf X}_t,\mathbf N_t),$ where \(\bar{\mathbf X}_t\) and \(\mathbf N_t\) denote the vectors of sample means and number of pulls of the \(K\) arms at time \(t\). The exact dynamics of this system are generally difficult to analyze. We therefore introduce an approximation system based on two simplifications. First, we replace the discrete-time one-pull updates by a continuous-time allocation. Second, we ignore stochastic fluctuations and replace the sample means by the population mean vector $\bmu$. Suppose the index function is smooth and {exploration-encouraging}, in the sense that \(I_t(x,n)\) is increasing in \(t\) and decreasing in \(n\). Then under the above approximations, the dynamics are deterministic and    effectively characterized by equalization of the arm indices at each time $t$.  
\begin{definition}[Fluid approximation]\label{def: fluid system}
For each  \(t\), define $\bn_t= (n_{1,t},\ldots,n_{K,t})$ as the unique solution to
\begin{align}
I_t(\mu_a,n_{a,t})=\lambda_t,
\qquad a\in[K],
\qquad
\sum_{a=1}^K n_{a,t}=t,
\label{eq:n_aT}
\end{align}
where \(\lambda_t\) is the common index level, determined endogenously by the constraint on the total allocation. We call \(\mathbf n_t\) the {fluid approximation} to \(\mathbf N_t\). 
\end{definition}
\jledit{Later we show that the fluid approximation to $\bm N_t$ must exist and is unique under mild regularity conditions of the index function.} 

To capture the nontrivial \jledit{adaptivity in bandit data}, we need to characterize the correlation between the number of pulls and the empirical rewards. Therefore the deterministic fluid approximation~\eqref{eq:n_aT} is insufficient. This motivates a second-order approximation that reintroduces the stochastic fluctuations, \jledit{and more importantly, adaptivity,} into the system. Particularly, we replace \(\mu_a\) in \eqref{eq:n_aT} by the realized sample mean evaluated at the fluid state, \(\bar X_a(n_{a,t})\). When \(n_{a,t}\) is non-integer, \(\bar X_a(n_{a,t})\) is interpreted using a fixed rounding convention.  
\begin{definition}[Empirical fluid approximation]\label{def:empirical fluid system}
For each  \(t\), define $\widetilde\bn_t= (\widetilde n_{1,t},\ldots, \widetilde n_{K,t})$ as the unique solution to
\begin{align}
I_t\bigl(\bar X_a(n_{a,t}),\widetilde n_{a,t}\bigr)
=
\widetilde\lambda_t,
\qquad a\in[K],
\qquad
\sum_{a=1}^K \widetilde n_{a,t}=t,
\label{eq:second-order-fluid}
\end{align}
\jledit{where $\tilde\lambda_t$ is the endogenously determined common index level.}
We call $\bm{\widetilde n}_t$ the empirical fluid approximation to $\bN_t$.
\end{definition}
\jledit{Similar to $\bm n_t$, later we show that the empirical fluid approximation to $\bm N_t$ must exist and is unique under mild regularity conditions of the index function.} 

Our analysis proceeds by showing that while \(\mathbf n_T\) provides a first-order approximation to \(\mathbf N_T\), the empirical fluid approximation \(\widetilde{\mathbf n}_T\) is a sharper approximation. Importantly, the empirical fluid approximation retains the dependence between the number of pulls and the realized
rewards. We prove that this dependence captures the leading-order correlation
between \(\mathbf N_T\) and the sample mean, which is the key ingredient for characterizing bias.






\section{Main Results}
Our main results characterize, up to the leading term, the biases of post-bandit sample mean and $Z$-statistic for index algorithms under certain conditions. \jledit{We start by discussing the conditions that we impose on the index algorithms of study.}

\subsection{Conditions}\label{sec:conditions}
The first set of conditions are index regularity conditions. 

\begin{condition}\label{ass:index-regularity}
    The index function $I_s(x,n)$ is continuously differentiable, strictly increasing in $x$ and strictly decreasing in $n$. \jledit{Moreover, $I_s(x,n)\rightarrow\infty$ as $n\rightarrow0$, and $I_s(x,n)\rightarrow x$ as $n\rightarrow\infty$.}    
    In addition, for any constant $x_0>0$, sequence of values $n_s\uparrow\infty$, $\varepsilon_s\downarrow0$ and $\delta_s\downarrow0$ as $s\rightarrow\infty$, define $\mathcal B_s=\{(x,n): |x-x_0|\le\varepsilon_s,\ |n-n_s|\le\delta_sn_s\}.$ Then we have
    \begin{align*}
    \sup_{(x,n)\in\mathcal B_s}\left|\frac{\partial_xI_s(x,n)}{\partial_xI_s(x_0,n_s)}-1\right|\to0, \ \ \sup_{(x,n)\in\mathcal B_s}\left|\frac{\partial_nI_s(x,n)}{\partial_nI_s(x_0,n_s)}-1\right|\to0, \quad \text{as } s\rightarrow\infty.
    \end{align*}
\end{condition}
\jledit{Condition~\ref{ass:index-regularity} is a regularity condition on the index function. Monotonicity in the sample mean and pull count captures the basic structure of index algorithms: better empirical performance increases an arm’s index, while additional pulls reduce it. The boundary conditions ensure that when an arm has been sampled very little, its index becomes sufficiently large to encourage exploration, while heavily sampled arms receive diminishing exploration bonuses. The local derivative stability requirement guarantees that the index responds smoothly to small perturbations near the fluid allocation. These are all naturally conditions satisfied by many stable index algorithms including \texttt{UCB1}.

Condition~\ref{ass:index-regularity} immediately implies the existence and uniqueness of $\bm n_T$ and $\bm{\widetilde n}_T$.
\begin{lemma}\label{lem:existence_uniqueness_of_n}
    Under Condition~\ref{ass:index-regularity}, the fluid approximation $\bm n_t$ in Definition~\ref{def: fluid system} and the empirical fluid approximation $\bm{\widetilde n}_t$ in Definition~\ref{def:empirical fluid system} both exist and are unique.
\end{lemma}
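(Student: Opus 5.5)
The plan is to treat both systems the same way. Each is an instance of the following problem: given real numbers $x_1,\ldots,x_K$ and a fixed $t$, find $(n_1,\ldots,n_K,\lambda)$ with $n_a>0$ such that
\[
I_t(x_a,n_a)=\lambda \quad\text{for all } a\in[K], \qquad \sum_{a=1}^K n_a=t .
\]
For the fluid approximation we take $x_a=\mu_a$. For the empirical fluid approximation we take $x_a=\bar X_a(n_{a,t})$. Once $\bn_t$ is determined, these are fixed numbers for each realization, so the second case reduces to the first, applied pathwise.

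\textbf{Step 1: invert each arm's index.} Fix $x$ and $t$. By Condition~\ref{ass:index-regularity}, the map $n\mapsto I_t(x,n)$ is continuous and strictly decreasing on $(0,\infty)$. It tends to $\infty$ as $n\to0$ and to $x$ as $n\to\infty$. By the intermediate value theorem and strict monotonicity, it is therefore a bijection from $(0,\infty)$ onto $(x,\infty)$. For every $\lambda>x$ there is a unique $g_x(\lambda)\in(0,\infty)$ with $I_t(x,g_x(\lambda))=\lambda$. The inverse $g_x$ is continuous and strictly decreasing on $(x,\infty)$, since it is the inverse of a continuous strictly monotone map. It also satisfies
\[
g_x(\lambda)\to\infty \ \text{as } \lambda\downarrow x, \qquad g_x(\lambda)\to 0 \ \text{as } \lambda\to\infty .
\]

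\textbf{Step 2: solve for the common level.} Set $\underline\lambda=\max_a x_a$ and define
\[
G(\lambda)=\sum_{a=1}^K g_{x_a}(\lambda) \quad\text{on } (\underline\lambda,\infty).
\]
Every summand is defined there, and $G$ is continuous and strictly decreasing. As $\lambda\to\infty$, $G(\lambda)\to0$. As $\lambda\downarrow\underline\lambda$, $G(\lambda)\to\infty$, because the summand of an arm attaining the maximum diverges while the others stay positive. Since $t>0$, there is exactly one $\lambda$ with $G(\lambda)=t$; this is $\lambda_t$, respectively $\widetilde\lambda_t$.

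\textbf{Step 3: existence and uniqueness.} Setting $n_a=g_{x_a}(\lambda)$ gives a solution, which proves existence. For uniqueness, take any solution $(\mathbf n',\lambda')$ with positive entries. Then $\lambda'=I_t(x_a,n'_a)>x_a$ for every $a$, so $\lambda'$ lies in the domain of $G$, and $n'_a=g_{x_a}(\lambda')$ by Step 1. The constraint then reads $G(\lambda')=t$, so $\lambda'=\lambda$ and $\mathbf n'=\mathbf n$. This establishes the claim for $\bn_t$. Applying the same argument with $x_a=\bar X_a(n_{a,t})$, which are fixed under the paper's rounding convention, establishes it for $\widetilde\bn_t$.

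\textbf{Main obstacle.} The only delicate point is confirming the boundary behavior $G\to\infty$ at the left endpoint. This requires the limit $I_t(x,n)\to x$ as $n\to\infty$, with approach from above, to pin the range of $I_t(x,\cdot)$ as exactly $(x,\infty)$. Strict decrease ensures the limit $x$ is never attained, so the range is the open interval. The derivative-stability part of Condition~\ref{ass:index-regularity} is not needed for this lemma.
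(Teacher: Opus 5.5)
Your proof is correct and complete. The paper omits this proof, saying only that it follows immediately from Condition~\ref{ass:index-regularity}. Your argument is the routine one that remark has in mind: invert $n\mapsto I_t(x_a,n)$ as a decreasing bijection $(0,\infty)\to(x_a,\infty)$, reduce to the scalar monotone equation $G(\lambda)=t$ on $(\max_a x_a,\infty)$, and then apply it pathwise with $x_a=\bar X_a(n_{a,t})$ to get $\widetilde{\mathbf n}_t$. You are also right that this uses only continuity, strict monotonicity in $n$, and the two boundary limits, and not the derivative-stability part of the condition.
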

We omit the proof.
}

\begin{condition}\label{ass:index_regularity2}
     For any optimal arm $a\in\mathcal O$ and suboptimal arm $b\notin\mathcal O$, we have $n_{b, T} = \omega(1),\,\ \frac{n_{b,T}}{n_{a,T}}=o(1)$, 
    and  $\frac{\partial_n I_T(\mu_a, n_{a,T})}{\partial_n I_T(\mu_b, n_{b,T})} = o(1).$
\end{condition}
{ Condition~\ref{ass:index_regularity2} is the fluid scaling conditions. The assumption $n_{b,T}=\omega(1)$ is information-theoretically required to ensure the concentration of the system's state $(\bar\bx, \bN)$ around its fluid approximation $(\bmu, \bn)$. This is consistent with the Lai--Robbins regret lower bound. The remaining conditions reflect the allocation structure of low-regret algorithms: optimal arms are pulled a positive fraction of the time, while suboptimal arms are sampled mainly to support exploration. The derivative ratio requirement aligns with the intuition that the index function becomes increasingly flat in $n$ as $n$ grows, making it progressively harder to change the index through additional number of pulls. These assumptions are naturally satisfied by \texttt{UCB1}, where $\Gamma_{a,T}=\sqrt{2\log T}$.

}

The next set of conditions are stability conditions.
\begin{condition}\label{ass:concentration}
    For each arm $a\in[K]$, define $\Gamma_{a,T}=\Gamma_T(\mu_a, n_{a,T})$ as the arm's induced effective exploration rate. We assume $\Gamma_{a,T}=\omega\!\left(1\right).$ Moreover, there exist constants $s, p>1$ such that
    \begin{align}
        &\Gamma_{a,T}\norm{\frac{\tilde n_{a,T}-n_{a,T}}{n_{a,T}}}_s=O(1),\label{eq:n-tilde-concentration}\\
        &\Gamma_{a,T}\norm{\frac{N_a(T)-\tilde n_{a,T}}{n_{a,T}}}_p=o(1).\label{eq:N-concentration}
    \end{align}
    Moreover, for any $r>1$,
    \begin{align}\label{eq:sample-mean-concentration}
    \sqrt{n_{a,T}}\norm{\bar X_a(N_a(T))-\bar X_a(n_{a,T})}_r=o(1).
    \end{align}
\end{condition}
Unlike the preceding two conditions, Condition~\ref{ass:concentration} is not imposed directly on the primitives of the model. Instead, it should be understood as a verification condition under which our main results hold. Its requirements can be viewed as strengthening of the stability conditions in the literature: $\frac{N_a(T)-n_{a,T}}{n_{a,T}} \overset{p}{\to} 0.$ (see for example, \cite{praharaj2025instability}) $\Gamma_{a, T}$ is identified as the convergence rate of $N_a(T)$ on $n_{a, T}$. $\Gamma_{a, T} = \omega(1)$ implies stability.  Condition~\ref{ass:concentration} additionally requires $N_a(T)$ to concentrate around $\tilde n_{a,T}$ at a faster rate when compared to its concentration around $n_{a, T}$. In this sense, $\tilde n_{a,T}$ is a refined approximation. The last condition ensures that the difference between the random-time and deterministic-time sample means is negligible at the $\sqrt{n_{a,T}}$ scale. Since Condition~\ref{ass:concentration} is a strengthening of the stability condition, it automatically excludes unstable algorithms such as \texttt{MOSS} in the presence of multiple optimal arms. 
Verifying this condition for stable algorithms constitutes a key technical challenge in the bias analysis.

\subsection{Main results}
\jledit{We give a sharp characterization of the leading term of bias under any index algorithm that satisfies the regularity conditions in Section \ref{sec:conditions}.}

\begin{theorem}\label{thm:bias}
Under Conditions~\ref{ass:index-regularity}--\ref{ass:concentration}, it is true that for any arm $a\in[K]$,
\begin{align*}
&\lim_{T\rightarrow\infty} \Gamma_{a,T}\sqrt{n_{a,T}}\,\E[(\bar X_{a}(N_a(T))-\mu_a)]
=-\jledit{2}\left(1-\frac{1}{|\mathcal O|}\mathbbm 1\{a\in\mathcal O\}\right)\sigma_a^2,\\
&\lim_{T\rightarrow\infty} \Gamma_{a,T}\E[Z_{a,T}]
=-\left(1-\frac{1}{|\mathcal O|}\mathbbm 1\{a\in\mathcal O\}\right)\sigma_a;
\end{align*} 
\end{theorem}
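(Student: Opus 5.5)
The plan is to reduce the bias to a covariance between pull counts and sample-mean fluctuations via Wald's identity, and then evaluate that covariance using the linearized empirical fluid system \eqref{eq:second-order-fluid}.

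\textbf{Step 1: Wald reduction.} Write $S_a(s)=\sum_{\ell\le s}X_{a,\ell}$. The event that the $k$-th reward of arm $a$ is observed, $\{N_a(T)\ge k\}$, is determined by information strictly before $X_{a,k}$ is revealed. Wald's identity (using $N_a(T)\le T$) therefore gives
\[
\E[N_a(T)(\bar X_a(N_a(T))-\mu_a)]=\E[S_a(N_a(T))-N_a(T)\mu_a]=0 .
\]
Since this expectation vanishes, we can subtract it and divide by the deterministic $n_{a,T}$ to get
\[
\mathrm{Bias}_{a,T}=-\frac{1}{n_{a,T}}\E\bigl[(\bar X_a(N_a(T))-\mu_a)(N_a(T)-n_{a,T})\bigr].
\]
Next I split $N_a(T)-n_{a,T}=(N_a(T)-\tilde n_{a,T})+(\tilde n_{a,T}-n_{a,T})$ and $\bar X_a(N_a(T))=\bar X_a(n_{a,T})+(\bar X_a(N_a(T))-\bar X_a(n_{a,T}))$, and control each remainder by H\"older's inequality.
\begin{itemize}
\item The cross term with $N_a(T)-\tilde n_{a,T}$ is $o(1/(\Gamma_{a,T}\sqrt{n_{a,T}}))$. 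This uses \eqref{eq:N-concentration} with exponent $p$, together with $\|\bar X_a(N_a(T))-\mu_a\|_{p'}=O(n_{a,T}^{-1/2})$. The latter follows from sub-Gaussianity at the fixed time $n_{a,T}$ plus \eqref{eq:sample-mean-concentration}.
\item The term with $\bar X_a(N_a(T))-\bar X_a(n_{a,T})$ is also $o(1/(\Gamma_{a,T}\sqrt{n_{a,T}}))$. This uses \eqref{eq:sample-mean-concentration} with $r=s'$ and \eqref{eq:n-tilde-concentration}.
\end{itemize}
Writing $\varepsilon_b:=\bar X_b(n_{b,T})-\mu_b$, it therefore suffices to compute $\E[\varepsilon_a(\tilde n_{a,T}-n_{a,T})]$.

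\textbf{Step 2: Linearizing the empirical fluid.} Subtract \eqref{eq:n_aT} from \eqref{eq:second-order-fluid} and apply the mean value theorem. With $c_b=\partial_xI_T(\mu_b,n_{b,T})$ and $w_b=-1/\partial_nI_T(\mu_b,n_{b,T})>0$, this gives, up to first order,
\[
\tilde n_{b,T}-n_{b,T}\approx w_b\bigl(c_b\varepsilon_b-(\tillam_T-\lambda_T)\bigr).
\]
The constraint $\sum_b(\tilde n_{b,T}-n_{b,T})=0$ forces $\tillam_T-\lambda_T\approx \sum_b w_bc_b\varepsilon_b/W$, where $W=\sum_bw_b$.

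The $\varepsilon_b$ are independent across arms, and $\E\varepsilon_a^2=\sigma_a^2/n_{a,T}$. Hence
\[
\E[\varepsilon_a(\tilde n_{a,T}-n_{a,T})]\approx w_ac_a\frac{\sigma_a^2}{n_{a,T}}\Bigl(1-\frac{w_a}{W}\Bigr).
\]
By Definition~\ref{def:exploration rate}, $w_ac_a=2n_{a,T}^{3/2}/\Gamma_{a,T}$, so this covariance is $2\sqrt{n_{a,T}}\,\sigma_a^2(1-w_a/W)/\Gamma_{a,T}$.

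To evaluate $w_a/W$, I use Condition~\ref{ass:index_regularity2}: $w_b/w_a\to0$ for $b\notin\mathcal O$ and $a\in\mathcal O$. All optimal arms solve the same equation $I_T(\mu^\star,n)=\lambda_T$, so they share the same $n_{a,T}$ and hence the same $w_a$. It follows that $w_a/W\to \mathbbm 1\{a\in\mathcal O\}/|\mathcal O|$. Plugging into Step 1 gives the first claimed limit, $-2(1-\mathbbm 1\{a\in\mathcal O\}/|\mathcal O|)\sigma_a^2$.

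\textbf{Step 3: The $Z$-statistic.} Expand $\sqrt{N_a(T)}=\sqrt{n_{a,T}}\bigl(1+\tfrac{N_a(T)-n_{a,T}}{2n_{a,T}}+R\bigr)$, where $R=O\bigl(((N_a(T)-n_{a,T})/n_{a,T})^2\bigr)$. This yields
\[
\sigma_a\E[Z_{a,T}]\approx\sqrt{n_{a,T}}\,\mathrm{Bias}_{a,T}+\frac{1}{2\sqrt{n_{a,T}}}\E\bigl[(\bar X_a(N_a(T))-\mu_a)(N_a(T)-n_{a,T})\bigr].
\]
By Steps 1--2 the two terms equal $-2\kappa/\Gamma_{a,T}$ and $+\kappa/\Gamma_{a,T}$ respectively, with $\kappa=(1-\mathbbm 1\{a\in\mathcal O\}/|\mathcal O|)\sigma_a^2$. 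Their sum is $-\kappa/\Gamma_{a,T}$, and dividing by $\sigma_a$ gives the second claimed limit. The remainder term $R$ is controlled by combining \eqref{eq:n-tilde-concentration} and \eqref{eq:N-concentration} with H\"older's inequality. Those two bounds give $\|(N_a(T)-n_{a,T})/n_{a,T}\|=O(1/\Gamma_{a,T})$ in a suitable norm, so the quadratic remainder is $O(\Gamma_{a,T}^{-2})=o(\Gamma_{a,T}^{-1})$.

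\textbf{Main obstacle.} The hard part is making Step 2 rigorous: showing that the linearization error in $\tilde n_{a,T}-n_{a,T}$ contributes $o(\sqrt{n_{a,T}}/\Gamma_{a,T})$ to the covariance. I would first work on the high-probability event where $|\varepsilon_b|\le\epsilon_T$ for all $b$ and $|\tilde n_{b,T}/n_{b,T}-1|\le\delta_T$, with $\epsilon_T,\delta_T\downarrow0$ slowly. On this event, the derivative stability in Condition~\ref{ass:index-regularity} makes the mean-value derivatives equal $c_b,\,1/w_b$ up to factors $1+o(1)$, so the error is a vanishing multiple of the main term. On the complement, I would use \eqref{eq:n-tilde-concentration} together with sub-Gaussian tails of $\varepsilon_a$ and H\"older's inequality to show the contribution is negligible. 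The rounding convention for non-integer $n_{a,T}$ only changes $\varepsilon_a$ at order $O(1/n_{a,T})$ and can be absorbed in the same way.
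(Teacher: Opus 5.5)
\textbf{The gap is in the Step~3 remainder.} Your Steps~1 and~2 follow the paper's argument: the Wald identity of Lemma~\ref{lem:identities}, the same three-term split handled by H\"older, and the linearization of \eqref{eq:second-order-fluid} on a good event with weights $\pi_{a,T}=w_a/W$. Your expansion of $\sqrt{N_a(T)}$ in Step~3 is algebraically the paper's decomposition of $\Lambda_q(N_{a,T})$ with $q=n_{a,T}$. The step that fails as written is the bound on $R$.

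With $x=(N_a(T)-n_{a,T})/n_{a,T}$, you bound $|R|\lesssim x^2$ and conclude $\sqrt{n_{a,T}}\,\E[|R|\,|\bar X_a(N_a(T))-\mu_a|]=O(\Gamma_{a,T}^{-2})$. By H\"older this needs $\norm{x}_q=O(\Gamma_{a,T}^{-1})$ for some $q>2$, since you need $2/q+1/u=1$ with $u<\infty$; $\sqrt{n_{a,T}}(\bar X_a(N_a(T))-\mu_a)$ is not bounded in $L^\infty$. Condition~\ref{ass:concentration} plus the triangle inequality gives this only for $q=\min(s,p)$, which is merely assumed to exceed $1$. In the paper's \texttt{UCB1} verification, $p<\beta<\rho/(2\sigma^2)$ can be forced arbitrarily close to $1$. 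So your argument does not cover the theorem as stated.

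\textbf{The fix is to avoid squaring.} Use the exact form $R=-(\sqrt{N_a(T)}-\sqrt{n_{a,T}})^2/(2n_{a,T})$. This gives $|R|=\tfrac12|x|\,\theta_T$ with $\theta_T=|\sqrt{N_a(T)}-\sqrt{n_{a,T}}|/(\sqrt{N_a(T)}+\sqrt{n_{a,T}})\le\min\{1,|x|\}$. Then
\[
\Gamma_{a,T}\sqrt{n_{a,T}}\,|R|\,|\bar X_a(N_a(T))-\mu_a|=\tfrac12\cdot\frac{\Gamma_{a,T}}{\sqrt{n_{a,T}}}|N_a(T)-n_{a,T}|\,|\bar X_a(N_a(T))-\mu_a|\cdot\theta_T .
\]
The factor before $\theta_T$ is uniformly integrable. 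This follows from H\"older with exponents strictly inside the slack left by $\min(s,p)>1$, using every-$L^u$ boundedness of $\sqrt{n_{a,T}}(\bar X_a(N_a(T))-\mu_a)$. Also $\theta_T\le 1$ and $\theta_T\to0$ in probability because $N_a(T)/n_{a,T}\to1$, so the expectation is $o(1)$. This is how the paper closes the $Z$-statistic part via Lemma~\ref{lem:remainders}. It gives only $o(\Gamma_{a,T}^{-1})$ rather than your claimed $O(\Gamma_{a,T}^{-2})$, but that is all the second limit requires.
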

\jledit{Theorem~\ref{thm:bias} shows that, for any arm $a$ that is not the unique optimal arm, its sample mean bias after standardization $\asymp\frac{1}{\Gamma_{a,T}}$, which is the same rate as the bias of the Z-statistic. This rate vanishes to zero by Condition~\ref{ass:index_regularity2}. However, it can vanish to zero very slowly if the index function's effective exploration rate grows slowly in time. A striking example is the canonical \texttt{UCB1} algorithm, whose effective exploration rate grows like $\sqrt{\log T}$. This implies that the bias of the standardized sample mean and Z-statistic vanish as slowly as $\frac{1}{\sqrt{\log T}}$, for all arms except the unique optimal arm. 

When arm $a$ is uniquely optimal, the algorithm-induced sample mean bias is smaller in order of magnitude compared to when it is not uniquely optimal. The latter case is degenerate, where the bandit algorithm oscillates among the optimal arms, incurring larger selection bias. In all cases except for the unique optimal arm, the arm's induced effective exploration rate $\Gamma_{a,T}$ turns out to be an accurate first-order approximation of the algorithm's selection bias.}



\section{The \texttt{UCB} case}
In this section, we specialize to one of the most popular bandit algorithms: $\texttt{UCB}$ and characterize its bias. Consider a generalized UCB index function $I_t(x, n ) = x + \frac{f_t}{\sqrt{n}}$. When $f_t = \sqrt{2 \log t}$, we recover the \texttt{UCB1} algorithm. For the proof, we allow $f_t = \sqrt{\rho \log t}$ satisfying\footnote{Note that this includes \texttt{UCB1} for Bernoulli bandits, as it sets $\rho = 2$, and the variance of a Bernoulli r.v. is at most $1/4$.} $\rho > 2 \max_{a \in [K]}\sigma_a^2$, or $f_t = t^{\alpha}$ for $0 < \alpha < \frac{1}{2}$. This is a variant that can prioritize exploration at the cost of larger regret. \jledit{In the rest of the paper, we use generalized \texttt{UCB} for short, without repeating the conditions on $f_t$.}

\begin{theorem}\label{thm:ucb bias}
    Under the generalized \texttt{UCB} algorithm, 
it is true that for any arm $a\in[K]$,
\begin{align*}
&\lim_{T\rightarrow\infty} f_T \sqrt{n_{a,T}}\,\E[\bar X_{a}(N_a(T))-\mu_a]
=-\jledit{2}\left(1-\frac{1}{|\mathcal O|}\mathbbm 1\{a\in\mathcal O\}\right)\sigma_a^2,\\
&\lim_{T\rightarrow\infty} f_T\E[Z_{a,T}]
=-\left(1-\frac{1}{|\mathcal O|}\mathbbm 1\{a\in\mathcal O\}\right)\sigma_a.
\end{align*}
Particularly, when $|O| = 1$, we have the sharper bounds for the optimal arm $a\in\mathcal O$:
\begin{align*}
   \E[\bar X_{a}(N_a(T))-\mu_a] = O\left(\frac{f_T}{T^{3/2}}\right)\ ; \ \quad  \E[Z_{a,T}] = O\left(\frac{f_T}{T}\right).
\end{align*}
\end{theorem}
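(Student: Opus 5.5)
Theorem~\ref{thm:ucb bias} will follow from Theorem~\ref{thm:bias} once we verify Conditions~\ref{ass:index-regularity}--\ref{ass:concentration} for $I_t(x,n)=x+f_t/\sqrt n$, and then treat the unique-optimal-arm case separately.

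\emph{Index regularity and fluid scaling.} Here $\partial_x I_t=1$ and $\partial_n I_t=-\tfrac12 f_t n^{-3/2}$, so $\Gamma_t(x,n)=f_t$ for every $(x,n)$. Hence $\Gamma_{a,T}=f_T$ for all arms, and Theorem~\ref{thm:bias} gives exactly the first two displays. Condition~\ref{ass:index-regularity} is immediate: both derivative ratios are $1$ or $(n/n_s)^{-3/2}\to1$ uniformly on $\mathcal B_s$. For Condition~\ref{ass:index_regularity2}, solve the fluid equations explicitly. They read $\mu_a+f_T/\sqrt{n_{a,T}}=\lambda_T$, so $n_{a,T}=f_T^2/(\lambda_T-\mu_a)^2$. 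Because $f_T^2=o(T)$, we get $\lambda_T-\mu^\star\sim f_T\sqrt{|\mathcal O|/T}$. Therefore $n_{a,T}\sim T/|\mathcal O|$ for $a\in\mathcal O$ and $n_{b,T}\sim f_T^2/\Delta_b^2=\omega(1)$ for $b\notin\mathcal O$. The derivative ratio is $(n_{b,T}/n_{a,T})^{3/2}=o(1)$.

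\emph{Concentration (Condition~\ref{ass:concentration}); the main obstacle.} Linearizing the empirical fluid equations gives
\[
\frac{\tilde n_{a,T}-n_{a,T}}{n_{a,T}}\approx \frac{2\sqrt{n_{a,T}}}{f_T}\bigl(\bar X_a(n_{a,T})-\mu_a-\text{(weighted average over }\mathcal O)\bigr).
\]
Sub-Gaussian moment bounds then give~\eqref{eq:n-tilde-concentration}. Condition~\eqref{eq:sample-mean-concentration} follows from maximal inequalities for partial sums, using a Kolmogorov/Doob bound over $|s-n_{a,T}|\le\delta n_{a,T}$. This requires $(N_a(T)-n_{a,T})/n_{a,T}\to0$ in $L^q$ with tail control, which we obtain from a crude high-probability stability argument. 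That argument is standard UCB regret-style reasoning, and it is where $\rho>2\max\sigma_a^2$ is used to make the bad events polynomially rare so that the $L^p$ norms are controlled. The hard step is~\eqref{eq:N-concentration}, namely $N_a(T)-\tilde n_{a,T}=o(n_{a,T}/f_T)$ in $L^p$. I would use the fact that at the last pull time $\tau_a\le T$ of each arm, its index was maximal. Together with the integer sandwich this gives
\[
I_T(\bar X_a(N_a(T)),N_a(T))=\Lambda_T+O(\text{one-pull}),
\]
with a common level $\Lambda_T$ up to errors from $f_{\tau}$ versus $f_T$ and one extra pull. The index differences are $O(f_T n^{-3/2})$, which is negligible after scaling. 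Subtracting the empirical fluid equation and linearizing in $n$, the error is driven by $\bar X_a(N_a(T))-\bar X_a(n_{a,T})$, which is $o(n_{a,T}^{-1/2})$ by~\eqref{eq:sample-mean-concentration}. It is multiplied by $\sqrt{n_{a,T}}/f_T$ relative scale, giving $o(1/f_T)$ as required. Handling the random last-pull times requires uniform-in-time bounds, which is the delicate part.

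\emph{Unique optimal arm.} If $\mathcal O=\{a\}$, write $N_a(T)=T-\sum_{b\ne a}N_b(T)$. By Wald-type identities,
\[
\E[N_a(T)(\bar X_a(N_a(T))-\mu_a)]=0
\]
for the stopped sum. Hence
\[
T\,\E[\bar X_a-\mu_a]=\E\Bigl[\sum_{b\ne a}N_b(T)(\bar X_a(N_a(T))-\mu_a)\Bigr].
\]
Since $N_b=O(f_T^2)$ and $\bar X_a-\mu_a=O(T^{-1/2})$, Cauchy--Schwarz gives a crude bound $f_T^2/T^{3/2}$. To obtain $f_T/T^{3/2}$, we use the centered fluctuation $N_b-n_{b,T}$, which is $O(n_{b,T}/f_T)=O(f_T)$ by the concentration just proved, and the fact that $n_{b,T}$ is deterministic. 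We then bound $n_{b,T}\E[\bar X_a-\mu_a]$ recursively, which is lower order. This yields $\E[\bar X_a-\mu_a]=O(f_T/T^{3/2})$, and multiplying by $\sqrt{n_{a,T}}\asymp\sqrt T$ gives $\E[Z_{a,T}]=O(f_T/T)$.
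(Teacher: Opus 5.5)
Your architecture is the paper's: $\Gamma_t\equiv f_t$, the explicit fluid solution for Conditions~\ref{ass:index-regularity}--\ref{ass:index_regularity2}, verification of Condition~\ref{ass:concentration}, then Theorem~\ref{thm:bias}. For a unique optimal arm you use Lemma~\ref{lem:identities} with $q=n_{a,T}$, the budget identity $N_a(T)-n_{a,T}=-\sum_{b\notin\mathcal O}(N_b(T)-n_{b,T})$, and H\"older; your ``recursive'' step is just $T-\sum_b n_{b,T}=n_{a,T}$.

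You depart from the paper on the tracking bound \eqref{eq:N-concentration}. The paper uses witness times at which a count crosses $\tilde n_{a,T}\pm d$, subtracts the empirical fluid equation from the index comparison there, and bounds the sample-mean increments with Lemma~\ref{lem:local-modulus}, dyadic peeling, and the events $B_\kappa,W_\eta$. You instead equalize the terminal indices $J_c=I_T(\bar X_c(N_c(T)),N_c(T))$ and linearize against the empirical fluid system, reducing everything to $\bar X_a(N_a(T))-\bar X_a(n_{a,T})$. That reduction is sound and arguably cleaner. It may also avoid some of the frozen-target issues the paper raises for Poly-UCB, since $f_\tau\le f_T$ works in your favor on the pulled arm's side.

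However, the equalization step is the heart of the theorem, and it is both unproved and understated. At the last pull $\tau_i$ of an optimal arm $i$, you learn only that $i$ beats each competitor $r$ at its time-$\tau_i$ state $N_r(\tau_i-1)$, not at $N_r(T)$. With $|\mathcal O|\ge2$, the index difference of two tied arms behaves like a random walk with steps of order $1/T$ and restoring drift of order $f_T/T^{3/2}$. So the competitors typically receive of order $T/f_T^2$ pulls after $\tau_i$. The terminal indices then differ by order $1/(f_T\sqrt T)$, not by a one-pull amount $O(1/T)$ plus an $f_\tau$-versus-$f_T$ correction.

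What you need is a drift-versus-fluctuation maximal inequality along $r$'s own pull sequence:
\[
I_{\tau_i}\bigl(\bar X_r(N_r(\tau_i-1)),N_r(\tau_i-1)\bigr)\ge J_r-O_{L^p}\bigl(1/(f_T\sqrt T)\bigr).
\]
This converts into a contribution $O(T/f_T^2)=o(T/f_T)$ to $N_i(T)-\tilde n_{i,T}$. Without it, the case that produces the nonzero limit for tied optimal arms is not established.

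There are two smaller issues. First, the crude stability $N_a(T)/n_{a,T}\to1$ that you feed into \eqref{eq:sample-mean-concentration} does not follow from regret-style reasoning when $|\mathcal O|\ge2$. Regret bounds control suboptimal pulls but say nothing about how the remaining pulls split among tied optimal arms, which is exactly where MOSS fails. You need a balance argument such as Lemma~\ref{lem:ucb-opt-low-imbalance}, or your own sandwich run first at crude precision.

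Second, for a unique optimal arm you cannot obtain $\E[Z_{a,T}]$ by multiplying the bias by $\sqrt{n_{a,T}}$, because $N_a(T)$ is correlated with $\bar X_a(N_a(T))$. Use instead the second identity of Lemma~\ref{lem:identities}, together with $|\Lambda_{n_{a,T}}(N_a(T))|\le|N_a(T)-n_{a,T}|/\sqrt{n_{a,T}}$. H\"older with $\|N_a(T)-n_{a,T}\|_p=O(f_T)$ and $\|\bar X_a(N_a(T))-\mu_a\|_u=O(T^{-1/2})$ then gives the claimed $O(f_T/T)$.
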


\jledit{Theorem~\ref{thm:ucb bias} makes Theorem~\ref{thm:bias} explicit for generalized \texttt{UCB} algorithms. Since the effective exploration rate is $f_T$, the standardized sample-mean bias and the expected $Z$-statistic are both of order $1/f_T$ for any arm that is not uniquely optimal. For \texttt{UCB1}, $f_T \asymp \sqrt{\log T}$, so the bias decays only at the slow rate $1/\sqrt{\log T}$. 
Theorem~\ref{thm:ucb bias} also gives a sharper bound for the bias of the unique optimal arm, which shows that the bias vanishes polynomially fast, at least with a square-root rate. 
Finally, the theorem shows that bias depends on reward variance: for arms with the same mean reward, the higher-variance arm has larger bias and greater distortion in the associated $Z$-statistic.}\\

\jledit{\emph{Regret-bias trade-off.} 
In many applications of bandit algorithms, the primary objective is to minimize expected (pseudo-)regret (hereafter termed as ``regret''), defined by $\text{Reg}_T=\sum_{a\notin\mathcal O}\mathbb E[N_a(T)]\Delta_a$. That is, regret measures the expected loss in total reward compared to always pulling the optimal arms (see \cite{lattimore2020bandit}). The next proposition characterizes regret for the algorithms considered in this paper. 
\begin{proposition}\label{prop:regret}
    Assume $\mathcal O\neq[K]
    $. Under Conditions~\ref{ass:index-regularity}--\ref{ass:concentration}, it is true that
    \begin{align*}
        \lim_{T\rightarrow\infty}\frac{\textup{Reg}_T}{\sum_{a\notin\mathcal O}n_{a,T}\Delta_a}=1.
    \end{align*}
    In particular, under the generalized \texttt{UCB} algorithm,
    we have
    \begin{align*}
        \lim_{T\rightarrow\infty}\frac{\textup{Reg}_T}{f_T^2\sum_{a\notin\mathcal O}1/\Delta_a}=1.
    \end{align*}
\end{proposition}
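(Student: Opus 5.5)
The regret is $\sum_{a\notin\mathcal O}\E[N_a(T)]\Delta_a$, so it suffices to show that $\E[N_a(T)]/n_{a,T}\to 1$ for each suboptimal arm $a$. The ratio then converges because it is a finite weighted average of these per-arm ratios, with weights $n_{a,T}\Delta_a/\sum_b n_{b,T}\Delta_b$. The hypothesis $\mathcal O\neq[K]$ guarantees that the denominator is a nonempty sum.

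For the per-arm statement I would split
\[
\frac{\E[N_a(T)]-n_{a,T}}{n_{a,T}}=\E\Bigl[\frac{N_a(T)-\tilde n_{a,T}}{n_{a,T}}\Bigr]+\E\Bigl[\frac{\tilde n_{a,T}-n_{a,T}}{n_{a,T}}\Bigr].
\]
Bound each term by its $L^1$ norm, which is at most its $L^p$ (resp. $L^s$) norm, and apply Condition~\ref{ass:concentration}. The first term is $o(1/\Gamma_{a,T})$ and the second is $O(1/\Gamma_{a,T})$. Since $\Gamma_{a,T}=\omega(1)$, both vanish, which proves the first claim. This step is direct; the only care needed is that the conditions hold uniformly for each of the finitely many arms.

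For generalized UCB, Proposition~\ref{prop:regret}'s first claim applies once Theorem~\ref{thm:ucb bias}'s setting is granted to satisfy Conditions~\ref{ass:index-regularity}--\ref{ass:concentration}. I take this as the verification underlying Theorem~\ref{thm:ucb bias}. It remains to compute the fluid solution asymptotically. The equations are $\mu_b+f_T/\sqrt{n_{b,T}}=\lambda_T$, so $n_{b,T}=f_T^2/(\lambda_T-\mu_b)^2$, subject to $\sum_b n_{b,T}=T$.

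Optimal arms share a common $n_{o,T}=f_T^2/(\lambda_T-\mu^\star)^2$. Because $f_T^2=o(T)$ under both choices of $f_t$, the constraint forces $\lambda_T-\mu^\star\to0$, and in fact $\lambda_T-\mu^\star\asymp f_T/\sqrt{T}$. Hence for $a\notin\mathcal O$, $\lambda_T-\mu_a=\Delta_a+o(1)$, so $n_{a,T}=f_T^2/\Delta_a^2\,(1+o(1))$. Substituting gives $\sum_{a\notin\mathcal O}n_{a,T}\Delta_a\sim f_T^2\sum_{a\notin\mathcal O}1/\Delta_a$, and the second limit follows.

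The main obstacle is the verification of Condition~\ref{ass:concentration} for generalized UCB, not anything in this reduction. If that verification is not already part of the proof of Theorem~\ref{thm:ucb bias}, I would need a separate argument, at least for the suboptimal arms. For that I would use a standard tail bound showing $N_a(T)\le n_{a,T}(1+o(1))$ with high probability, together with sub-Gaussian concentration of $\bar X_a$ at scale $n_{a,T}$. Only $L^1$ convergence of $N_a(T)/n_{a,T}$ is needed here, which is much weaker than what Theorem~\ref{thm:ucb bias} requires.
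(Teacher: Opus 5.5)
Your proposal is correct and takes essentially the paper's route. The paper likewise splits $N_a(T)-n_{a,T}$ into $(N_a(T)-\widetilde n_{a,T})+(\widetilde n_{a,T}-n_{a,T})$ and uses Condition~\ref{ass:concentration} with $\Gamma_{a,T}=\omega(1)$ to get $\E[N_a(T)]/n_{a,T}\to1$ for each suboptimal arm, then uses $n_{a,T}\sim f_T^2/\Delta_a^2$ for generalized \texttt{UCB}; your derivation of $\lambda_T-\mu^\star\asymp f_T/\sqrt{T}$ just spells out the fluid computation the paper gives in its appendix verification of Conditions~\ref{ass:index-regularity}--\ref{ass:index_regularity2}.
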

\begin{proof}[Proof of Proposition~\ref{prop:regret}]
    Triangle inequality $|N_a(T)-n_{a,T}|\leq|N_a(T)-\widetilde n_{a,T}|+|\widetilde n_{a,T}-n_{a,T}|$ and Condition~\ref{ass:concentration} imply that $\frac{\mathbb E[N_a(T)]}{n_{a,T}}\rightarrow 1$ for each suboptimal arm $a$.
    The first result hence follows. For the second result, note that under generalized \texttt{UCB}, $n_{a,T}\sim\frac{f_T^2}{\Delta_a^2}$.
\end{proof}
For bandit algorithms that are deployed to achieve low regret, how much bias must be induced? Combining Theorem \ref{thm:ucb bias} and Proposition \ref{prop:regret} gives a sharp characterization of the regret-bias trade-off under generalized \texttt{UCB} algorithm.

\begin{corollary}\label{cor:regret-stability-tradeoff}
    Assume $\mathcal O\neq[K]
    $. Under the generalized \texttt{UCB} algorithm, 
    it is true that
    \begin{align*}
        \lim_{T\rightarrow\infty}\mathbb E[Z_{a,T}]\cdot\sqrt{\textup{Reg}_T}=\kappa_a
    \end{align*}
    for any arm that is not uniquely optimal, where $\kappa_a<0$ is a constant independent of the algorithm.
\end{corollary}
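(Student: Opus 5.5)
The corollary follows by multiplying the two limits in Theorem~\ref{thm:ucb bias} and Proposition~\ref{prop:regret}, once we check that the relevant scalings line up.

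\emph{Step 1: regret asymptotics.} Since $\mathcal O\neq[K]$, the set of suboptimal arms is nonempty. Hence $S:=\sum_{b\notin\mathcal O}1/\Delta_b$ is a finite positive constant. By the second part of Proposition~\ref{prop:regret}, $\textup{Reg}_T/(f_T^2 S)\to 1$. Because the square root is continuous at $1$, this gives
\[
\sqrt{\textup{Reg}_T}\Big/\bigl(f_T\sqrt{S}\bigr)\to 1 .
\]

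\emph{Step 2: bias asymptotics.} Fix an arm $a$ that is not uniquely optimal. Then either $a\notin\mathcal O$, or $a\in\mathcal O$ with $|\mathcal O|\ge 2$. In both cases Theorem~\ref{thm:ucb bias} gives
\[
f_T\,\E[Z_{a,T}]\to -c_a\sigma_a,
\qquad
c_a:=1-\tfrac{1}{|\mathcal O|}\mathbbm 1\{a\in\mathcal O\}.
\]
The constant $c_a$ is strictly positive: it equals $1$ if $a\notin\mathcal O$, and equals $1-1/|\mathcal O|\ge 1/2$ if $a\in\mathcal O$ with $|\mathcal O|\ge2$. We also need $\sigma_a>0$, i.e.\ a nondegenerate reward distribution. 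This is implicit in the definition of $Z_{a,T}$, which divides by $\sigma_a$.

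\emph{Step 3: combine.} Write
\[
\E[Z_{a,T}]\sqrt{\textup{Reg}_T}
=\bigl(f_T\,\E[Z_{a,T}]\bigr)\cdot\frac{\sqrt{\textup{Reg}_T}}{f_T\sqrt S}\cdot\sqrt S .
\]
The first factor tends to $-c_a\sigma_a$ and the second tends to $1$. Therefore the product converges to
\[
\kappa_a:=-\Bigl(1-\tfrac{1}{|\mathcal O|}\mathbbm 1\{a\in\mathcal O\}\Bigr)\sigma_a\sqrt{\textstyle\sum_{b\notin\mathcal O}\Delta_b^{-1}}<0 .
\]
This constant depends only on the instance (the $\sigma_a$, the $\Delta_b$, and $|\mathcal O|$). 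It does not depend on the choice of $f_t$, which is the sense in which it is independent of the algorithm.

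\emph{Main check.} The argument is entirely limit algebra. The only point that needs care is confirming the scaling $n_{b,T}\sim f_T^2/\Delta_b^2$ used in Proposition~\ref{prop:regret}; that is, both results must be stated for the same generalized \texttt{UCB} class, so that the factor $f_T$ cancels exactly. To verify this, I would solve the fluid equations~\eqref{eq:n_aT} directly. Since the optimal arms absorb almost all $T$ pulls, $\lambda_T=\mu^\star+o(1)$. Then $f_T/\sqrt{n_{b,T}}=\lambda_T-\mu_b$, which forces $n_{b,T}=f_T^2/(\Delta_b+o(1))^2$.
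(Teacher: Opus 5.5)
Your proposal is correct and follows the paper's route. The paper states the corollary as a direct combination of Theorem~\ref{thm:ucb bias} and Proposition~\ref{prop:regret}. Multiplying $f_T\E[Z_{a,T}]\to -\bigl(1-\tfrac{1}{|\mathcal O|}\mathbbm 1\{a\in\mathcal O\}\bigr)\sigma_a$ by $\sqrt{\textup{Reg}_T}/f_T\to\bigl(\sum_{b\notin\mathcal O}\Delta_b^{-1}\bigr)^{1/2}$ yields exactly your explicit $\kappa_a<0$.

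Your extra check of $n_{b,T}\sim f_T^2/\Delta_b^2$ is already done in Lemma~\ref{lem:verification of conditons1-2 for ucb1}, which gives $n_{b,T}=f_T^2/\Delta_b^2+O(f_T^3/\sqrt T)$, so nothing more is needed.
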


Corollary~\ref{cor:regret-stability-tradeoff} sheds light on algorithm design within the generalized \texttt{UCB} algorithm class. At one extreme, \texttt{UCB1} (with proper constant $\rho$) has an optimal regret rate that grows like $\log T$, but the induced standardized bias vanishes slowly like $\frac{1}{\sqrt{\log T}}$ for a non-unique optimal arm whose sample size is as large as $\asymp T$. A faster growing rate of $f_t$ explores more, decreases this arm's standardized bias to rate $\frac{1}{f_T}$, while increases regret at rate $f_T^2$. At the extreme, when $f_t$ grows at a near-square-root rate, the arm's standardized bias reduces to nearly $\frac{1}{\sqrt{T}}$ rate, with close-to-linear regret.
}

\section{Proof}
\subsection{Proof of Theorem~\ref{thm:bias}}
\jledit{We first provide a general expression for post-bandit sample mean bias. This expression connects bias to the covariance between the sample mean and the number of pulls.}
\begin{lemma}\label{lem:identities}
For every arm and every deterministic $q>0$,
\begin{equation}
\E[\barX_a(N_{a,T}) - \mu_a]=-\frac{1}{q}\E[(N_{a,T}-q)(\barX_a(N_{a,T}) - \mu_a)].
\label{eq:bias-identity}
\end{equation}
Moreover, with 
$\jledit{\Lambda}_q(k)=\sqrt{k}-k/\sqrt{q}$,
\begin{equation}
\E[\sqrt N_{a,T}\,(\barX_a(N_{a,T}) - \mu_a)]=\E[\jledit{\Lambda}_q(N_{a,T})(\barX_a(N_{a,T}) - \mu_a)].
\label{eq:Z-statistic-identity}
\end{equation}
\end{lemma}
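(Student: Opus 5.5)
The identity \eqref{eq:bias-identity} is equivalent to the statement $\E[N_{a,T}(\barX_a(N_{a,T})-\mu_a)]=0$. Indeed, expanding the right-hand side gives
\[
-\frac1q\E[N_{a,T}(\barX_a(N_{a,T})-\mu_a)]+\E[\barX_a(N_{a,T})-\mu_a],
\]
so everything reduces to this single zero-mean claim. I will prove it by writing $N_{a,T}(\barX_a(N_{a,T})-\mu_a)=\sum_{\ell=1}^{N_{a,T}}(X_{a,\ell}-\mu_a)=\sum_{\ell\ge1}\mathbbm 1\{N_{a,T}\ge \ell\}(X_{a,\ell}-\mu_a)$, which is a Wald-type identity.

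The key observation is that the event $\{N_{a,T}\ge \ell\}$ is determined before the $\ell$-th reward of arm $a$ is observed. Precisely, it equals $\{\tau_\ell\le T\}$, where $\tau_\ell$ is the time of the $\ell$-th pull of arm $a$. Whether $\tau_\ell\le T$ depends only on the rewards of other arms, on $X_{a,1},\dots,X_{a,\ell-1}$, and on external randomness. Since the $X_{a,\ell}$ are i.i.d. and independent of everything else, $\mathbbm 1\{N_{a,T}\ge\ell\}$ is independent of $X_{a,\ell}$. It follows that
\[
\E[\mathbbm 1\{N_{a,T}\ge\ell\}(X_{a,\ell}-\mu_a)]=\Pp(N_{a,T}\ge \ell)\cdot 0=0 .
\]
Because $N_{a,T}\le T$, the sum has at most $T$ nonzero terms. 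Exchanging sum and expectation is therefore trivial (finite integrability only), and the identity follows.

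For \eqref{eq:Z-statistic-identity}, subtract: $\sqrt{N}-\Lambda_q(N)=N/\sqrt q$. The difference between the two sides is then $\frac{1}{\sqrt q}\E[N_{a,T}(\barX_a(N_{a,T})-\mu_a)]$, which is $0$ by the same Wald identity. One technical point needs care: $\barX_a(N_{a,T})$ must be well defined, which holds because initialization guarantees $N_{a,T}\ge1$.

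The only real obstacle is stating the measurability/independence of $\{N_{a,T}\ge \ell\}$ from $X_{a,\ell}$ cleanly. The cleanest route is the reward-table model, where all $X_{b,k}$ are drawn in advance, together with an induction on $t$ showing that the event $\{N_a(t)\ge\ell\}$ is measurable with respect to $\sigma(\{X_{b,k}\}_{b\ne a},X_{a,1},\dots,X_{a,\ell-1},\text{external randomness})$. This in turn holds because $A_t$ is $\mathcal F_{t-1}$-measurable, and on $\{N_a(t-1)<\ell\}$ the history $\mathcal F_{t-1}$ involves only $X_{a,k}$ with $k\le \ell-1$.
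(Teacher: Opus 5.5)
Your proof is correct and follows the paper's route: you reduce both identities to the Wald-type fact $\E[N_{a,T}(\barX_a(N_{a,T})-\mu_a)]=0$, then do the algebra, adding and subtracting $N_{a,T}/\sqrt q$ for the $Z$-statistic version. The only difference is cosmetic. The paper proves the Wald fact by summing over times $t$ and conditioning on $\mathcal F_{t-1}$, while you sum over pull indices $\ell$ and use the independence of $\{N_{a,T}\ge \ell\}$ from $X_{a,\ell}$; this is the same underlying fact, and you state its measurability more carefully.
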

Next, we give proofs of Lemma~\ref{lem:identities} and Theorem~\ref{thm:bias}. Proofs of Lemmas~\ref{lem:main-bias} and~\ref{lem:remainders} are provided in Section~\ref{sec:proof of lemmas for thm1}.
\begin{proof}[Proof of Lemma~\ref{lem:identities}]
We first prove an identity $\EE[N_{a,T}\barX_a(N_a(T))] = \mu_a\EE[N_{a,T}]$ for all $a$.
\begin{equation*}
    \EE[N_{a,T}\barX_a(N_a(T))] = \EE\left[\sum_{t= 1}^{N_{a,T}} X_{a, t}\right] = \EE\left[\sum_{t = 1}^T \mathbbm{1}(A_t = a) 
    \EE\left[X_{a, N_a(t)} | \mathcal{F}_{t-1}\right]\right] = \mu_a \EE[N_a(T)].
\end{equation*}
Then for any deterministic $q>1$,
\begin{align*}
    \E[(N_{a,T}-q)(\barX_a(N_{a,T}) - \mu_a)] & = \E\left[N_{a,T}\barX_a(N_{a,T})\right] + q\mu_a - \EE\left[N_{a,T}\right] \mu_a -q\E\left[\barX_a(N_{a,T})\right] \\
    &= -q \E\left[\barX_a(N_{a,T})-\mu_a\right].
\end{align*}
Equation \eqref{eq:bias-identity} follows by rearranging terms. Moreover, with $\Gamma_q(k)=\sqrt{k}-k/\sqrt{q}$,
equation \eqref{eq:Z-statistic-identity} follows by adding and subtracting $N_{a,T}/\sqrt q$.
\end{proof}

\jledit{The next lemma gives the key leading term of bias using the empirical fluid approximation.}
\begin{lemma}\label{lem:main-bias}
Suppose that Conditions~\ref{ass:index-regularity}--\ref{ass:concentration} hold for all arms. Then, for each arm $a$,
\begin{equation}
\jledit{\frac{\Gamma_{a,T}}{\sqrt{n_{a,T}}}}\E[(\tilde n_{a,T}-n_{a,T})(\barX_a(n_{a,T}) - \mu_a)]
=\jledit{2}\left(1-\jledit{\frac{1}{|\mathcal O|}\mathbbm 1\{a\in\mathcal O\}}\right)\sigma_a^2+o(1).
\label{eq:bias}
\end{equation}
\end{lemma}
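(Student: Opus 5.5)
The plan is to linearize the empirical index-equalization system \eqref{eq:second-order-fluid} around the fluid solution \eqref{eq:n_aT}. This expresses $\tilde n_{a,T}-n_{a,T}$ as an explicit linear combination of the fluctuations $\xi_b:=\bar X_b(n_{b,T})-\mu_b$, $b\in[K]$, plus a remainder. The covariance with $\xi_a$ can then be read off using independence across arms.

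\textbf{Linearization.} Write $\delta_b=\tilde n_{b,T}-n_{b,T}$ and $\eta=\tilde\lambda_T-\lambda_T$. Subtracting $I_T(\mu_b,n_{b,T})=\lambda_T$ from $I_T(\bar X_b(n_{b,T}),\tilde n_{b,T})=\tilde\lambda_T$ and applying the mean value theorem gives
\[
\partial_x I_T\,\xi_b+\partial_n I_T\,\delta_b=\eta,
\]
with the derivatives evaluated at intermediate points. On the event where $|\xi_b|\le\varepsilon_T$ and $|\delta_b|\le\delta_T n_{b,T}$, Condition~\ref{ass:index-regularity} lets us replace these derivatives by $\alpha_b:=\partial_xI_T(\mu_b,n_{b,T})$ and $\beta_b:=\partial_nI_T(\mu_b,n_{b,T})<0$, at a multiplicative cost of $1+o(1)$. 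Hence
\[
\delta_b\approx(\eta-\alpha_b\xi_b)/\beta_b .
\]
Since $\sum_b\delta_b=0$, we get $\eta\approx\sum_b(\alpha_b\xi_b/\beta_b)\big/\sum_b(1/\beta_b)$. By Condition~\ref{ass:index_regularity2}, $1/|\beta_b|=o(1/|\beta_a|)$ for $b\notin\mathcal O$, $a\in\mathcal O$. So the weights $w_b:=(1/\beta_b)/\sum_c(1/\beta_c)$ are asymptotically $\mathbbm 1\{b\in\mathcal O\}/|\mathcal O|$: for optimal arms, $\beta_b$ are asymptotically comparable since $n_{b,T}$ solve the same equation with the same $\mu^\star$. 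This gives
\[
\delta_a\approx-\frac{\alpha_a}{\beta_a}\Big(\xi_a-\sum_b w_b\frac{\alpha_b\beta_a}{\alpha_a\beta_b}\xi_b\Big).
\]

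\textbf{Covariance computation.} Multiply by $\xi_a$ and take expectations. Cross terms with $b\neq a$ vanish by independence of the arms' reward sequences, and $\E[\xi_a^2]=\sigma_a^2/n_{a,T}$ (up to rounding). The $b=a$ term carries weight $w_a$. This yields
\[
\E[\delta_a\xi_a]\approx-\frac{\alpha_a}{\beta_a}(1-w_a)\frac{\sigma_a^2}{n_{a,T}}.
\]
Now $-\alpha_a/\beta_a=\Gamma_{a,T}/(2n_{a,T}^{3/2})$ by Definition~\ref{def:exploration rate}, but we must be careful with the prefactor. The lemma's left side is $\frac{\Gamma_{a,T}}{\sqrt{n_{a,T}}}\E[\delta_a\xi_a]$ and should equal $2(1-w_a)\sigma_a^2$. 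The scaling to check is $\delta_a\sim n_{a,T}\sqrt{n_{a,T}}\,\xi_a\cdot 2/\Gamma_{a,T}$, consistent with the heuristic $\delta\approx2/\Gamma$ per standard error. So I expect the correct identification to be $-\alpha_a/\beta_a=2n_{a,T}^{3/2}/\Gamma_{a,T}$. I will fix the normalization so that it matches the stated constant $2$, under which the $\Gamma$ factors cancel and give $2(1-w_a)\sigma_a^2$, with $w_a\to\mathbbm 1\{a\in\mathcal O\}/|\mathcal O|$.

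\textbf{Main obstacle: controlling the remainder.} The linearization holds only on a good event, and the $(1+o(1))$ derivative errors multiply random quantities. The plan is to write $\delta_a=L_a+R_a$, where $L_a$ is the linear term, and show $\frac{\Gamma_{a,T}}{\sqrt{n_{a,T}}}\E[R_a\xi_a]=o(1)$. On the good event, $|R_a|\le o(1)\cdot\frac{n_{a,T}^{3/2}}{\Gamma_{a,T}}\sum_b|\xi_b|\sqrt{n_{b,T}/n_{a,T}}$-type terms, and Cauchy--Schwarz with $\|\xi_b\|_2\asymp n_{b,T}^{-1/2}$ gives $o(1)$. On the bad event, use H\"older with exponent $s$ from \eqref{eq:n-tilde-concentration}, which bounds $\Gamma_{a,T}\|\delta_a/n_{a,T}\|_s$, together with sub-Gaussian moments $\|\sqrt{n_{a,T}}\xi_a\|_{s'}=O(1)$ and an exponentially small (hence $o$ of any power) probability of $|\xi_b|>\varepsilon_T$. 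The rate $\varepsilon_T$ must be chosen to shrink slowly enough, say $\varepsilon_T=n_{\min,T}^{-1/4}$, using $n_{b,T}=\omega(1)$. The bad-event contribution on $\delta$ is handled by Markov's inequality via \eqref{eq:n-tilde-concentration}, since $\Gamma_{a,T}=\omega(1)$ forces $\delta_a/n_{a,T}\to0$ in probability.

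A secondary subtlety is the error introduced by approximating the weights $w_b$ and the ratios $\alpha_b\beta_a/(\alpha_a\beta_b)$ for optimal $b$. Here I would use that all optimal arms share $\mu^\star$, so their fluid states coincide exactly and the ratio equals $1$.
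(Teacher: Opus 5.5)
Your proposal is correct and is essentially the paper's argument. You linearize the empirical index-equalization equations around the fluid solution on a good event, eliminate $\tilde\lambda_T-\lambda_T$ through $\sum_b(\tilde n_{b,T}-n_{b,T})=0$, read off the diagonal coefficient $-\frac{\alpha_a}{\beta_a}(1-w_a)$ (the paper's $\frac{L_{a,T}}{M_{a,T}}(1-\pi_{a,T})$) with cross terms vanishing by independence, and handle the bad event by H\"older with \eqref{eq:n-tilde-concentration} and sub-Gaussian moments.

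Two small slips to fix. First, Definition~\ref{def:exploration rate} gives $-\alpha_a/\beta_a=2n_{a,T}^{3/2}/\Gamma_{a,T}$ directly; your first formula was inverted, and there is no normalization to choose. Second, the coefficient of $\xi_b$ for $b\neq a$ should be $w_a\alpha_b/\beta_b$, not $w_b\alpha_b/\beta_b$, which does not affect the covariance since only the diagonal term survives.
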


\jledit{The next lemma controls the residual terms.}
\begin{lemma}\label{lem:remainders}
Under Conditions~\ref{ass:concentration}, for any $u > 1$
\begin{align}
    &\norm{\sqrt n_{a,T}(\barX_a(N_{a,T}) - \mu_a)}_u = O(1), \quad\text{for any }u>1,\nonumber\\
&\jledit{\frac{\Gamma_{a,T}}{\sqrt{n_{a,T}}}}E[|N_a(T) - \tilde n_{a,T}|\,|\barX_a(N_{a,T}) - \mu_a|]\to0,
\label{eq:remainder term 1}\\
&\jledit{\frac{\Gamma_{a,T}}{\sqrt{n_{a,T}}}}E[|\tilde n_{a,T} - n_{a,T}|\,|\barX_a(N_a(T))-\barX_a(n_{a,T})|]\to0.
\label{eq:remainder term 2}
\end{align}
Furthermore, $N_a(T)/n_{a,T}\to1$ in probability and
\begin{equation}
\left\{
\jledit{\frac{\Gamma_{a,T}}{\sqrt{n_{a,T}}}}|N_a(T)-n_{a,T}|\,|\barX_a(N_a(T)) - \mu_a|\right\}_{T}
\label{eq:uniformly integrable}
\end{equation}
is uniformly integrable.
\end{lemma}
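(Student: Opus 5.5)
The plan is to reduce every claim of Lemma~\ref{lem:remainders} to H\"older's inequality, combining the moment bounds of Condition~\ref{ass:concentration} with moment bounds for a sample mean of \emph{deterministic} size. Throughout, write $n=n_{a,T}$, $N=N_a(T)$, $\tilde n=\tilde n_{a,T}$ and $\Gamma=\Gamma_{a,T}$.

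\textbf{Moment bound.} I start from the decomposition
\[
\sqrt n(\barX_a(N)-\mu_a)=\sqrt n(\barX_a(N)-\barX_a(n))+\sqrt n(\barX_a(n)-\mu_a).
\]
By \eqref{eq:sample-mean-concentration}, the first term is $o(1)$ in $L^r$ for every $r>1$. The second term is a standardized mean of $\lfloor n\rfloor$ (or the rounded value of $n$) i.i.d.\ sub-Gaussian variables, where the index is deterministic. Its $L^u$ norm is therefore bounded uniformly in $T$, for instance via the sub-Gaussian moment bound $\|\cdot\|_u\lesssim \sigma\sqrt u$ or via Rosenthal's inequality. The rounding only changes the normalization by a factor tending to one, because $n\to\infty$. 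Minkowski's inequality then gives the first display.

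\textbf{Remainder terms.} For \eqref{eq:remainder term 1}, I rewrite the quantity as $\E\big[\Gamma\frac{|N-\tilde n|}{n}\cdot\sqrt n|\barX_a(N)-\mu_a|\big]$. H\"older with exponents $p$ and $p'=p/(p-1)$ bounds it by
\[
\Gamma\Big\|\tfrac{N-\tilde n}{n}\Big\|_p\cdot\big\|\sqrt n(\barX_a(N)-\mu_a)\big\|_{p'}=o(1)\cdot O(1),
\]
using \eqref{eq:N-concentration} for the first factor and the moment bound above for the second. For \eqref{eq:remainder term 2}, I apply the same argument with exponent $s$, so the bound becomes
\[
\Gamma\Big\|\tfrac{\tilde n-n}{n}\Big\|_s\cdot\big\|\sqrt n(\barX_a(N)-\barX_a(n))\big\|_{s'}=O(1)\cdot o(1).
\]
Here the first factor is controlled by \eqref{eq:n-tilde-concentration} and the second by \eqref{eq:sample-mean-concentration} with $r=s'$.

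\textbf{Convergence in probability and uniform integrability.} Set $m=\min(p,s)>1$. The triangle inequality together with \eqref{eq:n-tilde-concentration} and \eqref{eq:N-concentration} gives
\[
\Gamma\|(N-n)/n\|_m\le \Gamma\|(N-\tilde n)/n\|_p+\Gamma\|(\tilde n-n)/n\|_s=O(1).
\]
Since $\Gamma=\omega(1)$, it follows that $\|(N-n)/n\|_m\to0$, and hence $N/n\to1$ in probability. For uniform integrability of \eqref{eq:uniformly integrable}, I write the quantity as the product of $\Gamma|N-n|/n$, which is bounded in $L^m$, and $\sqrt n|\barX_a(N)-\mu_a|$, which is bounded in every $L^u$. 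I pick $u$ large enough that $v$ defined by $1/v=1/m+1/u$ satisfies $v>1$. H\"older then shows the product is bounded in $L^v$, and a family bounded in $L^v$ with $v>1$ is uniformly integrable by de la Vall\'ee-Poussin's criterion.

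\textbf{Main obstacle.} The only step that is not pure bookkeeping is the uniform $L^u$ bound on $\sqrt n(\barX_a(n)-\mu_a)$ when $n$ is non-integer. This needs the sub-Gaussian (or at least all-moments) assumption on rewards and a check that the rounding convention for $\barX_a(n)$ is harmless. Everything else follows by H\"older's inequality from Condition~\ref{ass:concentration}.
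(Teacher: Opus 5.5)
Your proposal is correct and follows essentially the same route as the paper. You use the same decomposition of $\sqrt{n_{a,T}}(\bar X_a(N_a(T))-\mu_a)$ into a fixed-size sub-Gaussian term plus a term controlled by \eqref{eq:sample-mean-concentration}, H\"older's inequality against \eqref{eq:N-concentration} and \eqref{eq:n-tilde-concentration} for the two remainders, and the splitting $N_a(T)-n_{a,T}=(N_a(T)-\tilde n_{a,T})+(\tilde n_{a,T}-n_{a,T})$ for the convergence in probability and uniform integrability; your explicit choice $1/v=1/m+1/u<1$ simply spells out what the paper calls using exponents ``strictly inside the H\"older slack.''
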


\jledit{We are now ready to prove Theorem \ref{thm:bias}.}
\begin{proof}[\bf{Proof of Theorem \ref{thm:bias}}]
\jledit{We first prove the sample mean bias.} By Lemma~\ref{lem:identities}, \jledit{pick $q=n_{a,T}$, we have}
\begin{equation*}
    \E[\barX_a(N_{a,T}) - \mu_a]=-\frac{1}{n_{a,T}}\E[(N_{a,T}-n_{a,T})(\barX_a(N_{a,T}) - \mu_a)].
\end{equation*}
\jledit{It is also true that} 
\begin{align*}
   \E[(N_{a,T}-n_{a,T})(\barX_a(N_{a,T}) - \mu_a)] & =  \E[(\tilde n_{a,T}-n_{a,T})(\barX_a(n_{a,T})- \mu_a)] \\ 
   & \ \ + \E[(N_{a,T} -\tilde n_{a,T})(\barX_a(N_{a,T})- \mu_a)] \\
   & \ \ + \E[(\tilde n_{a,T}-n_{a,T})(\barX_a(N_{a,T})- \barX_a(n_{a,T}))].
\end{align*}
By Lemma~\ref{lem:main-bias} and \ref{lem:remainders}, \jledit{the sample mean bias} 
is obtained.

\jledit{We now prove the Z-statistic bias.} Again, by Lemma~\ref{lem:identities},
\begin{equation*}
\E[\sqrt N_{a,T}\,(\barX_a(N_{a,T}) - \mu_a)]=\E[\jledit{\Lambda}_q(N_{a,T})(\barX_a(N_{a,T}) - \mu_a)].
\end{equation*}
\jledit{By the definition of $\Lambda_q(k)$ in Lemma~\ref{lem:identities}, pick $q=n_{a,T}$ and $k=N_{a,T}$, we have
\begin{align*}
    \Lambda_q(N_{a,T})=-\frac{N_{a,T}-n_{a,T}}{2\sqrt n_{a,T}}-\frac{(\sqrt N_{a,T}-\sqrt n_{a,T})^2}{2\sqrt n_{a,T}}.
\end{align*}}
The first term gives \jledit{the leading Z-statistic bias, utilizing the above treatment for the pull-count/sample-mean covariance.} 
For the second term,
\begin{align*}
    \frac{(\sqrt N_{a,T}-\sqrt n_{a,T})^2}{2\sqrt{n_{a,T}}}
\le \frac{|N_{a,T}-n_{a,T}|}{2\sqrt{n_{a,T}}}\theta_T,
\qquad
\theta_T=\frac{|\sqrt N_{a,T}-\sqrt n_{a,T}|}{\sqrt N_{a,T}+\sqrt{n_{a,T}}}\le1.
\end{align*}
Lemma~\ref{lem:remainders} gives $N_{a,T}/n_{a,T}\to1$, so $\theta_T \to 0$ in probability; and it also gives the uniform integrability of \jledit{$\frac{\Gamma_{a,T}}{\sqrt{n_{a,T}}}|N_a(T)-n_{a,T}|\,|\barX_a(N_a(T)) - \mu_a|$}.  Therefore the remainder vanishes after multiplication by \jledit{$\Gamma_{a,T}$,} 
proving \jledit{the Z-statistic bias.}
\end{proof}

\subsection{Proof of Theorem~\ref{thm:ucb bias}}

\begin{proof}[Proof of Theorem~\ref{thm:ucb bias}]
It's fairly straightforward to verify  
Conditions~\ref{ass:index-regularity} and \ref{ass:index_regularity2} for generalized \texttt{UCB}, which we formalize in  Lemma~\ref{lem:verification of conditons1-2 for ucb1} in the Appendix~\ref{sec:condition verification}. The verification of Condition~\ref{ass:concentration} is divided into several parts, stated and proved as Lemma~\ref{lem:verification of conditon3 for ucb1 suboptimal arm} and Lemma~\ref{lem:verification of conditon3 for ucb1 optimal arm} also in the Appendix~\ref{sec:condition verification}. Combining the above with Theorem~\ref{thm:bias} completes the proof for $|\mathcal O| \ge 2$.

Now let's consider the special case of single optimal arm ($|\mathcal O| = 1$) and $a$ is optimal.  By Lemma~\ref{lem:identities} and H\"older's inequality, we have the negative bias $-\EE[\bar X_a(N_a(T)) - \mu_a]$ is upper bounded by
\begin{align*}
    \frac{1}{n_{a, T}}\EE[(N_a(T) - n_{a, T})(\bar X_a(N_a(T)) - \mu_a)] \le \frac{1}{n_{a, T}} \left\|\frac{N_a(T) - n_{a, T}}{\sqrt{n_{a,T}}}\right\|_p\|\sqrt{n_{a, T}}(\bar X_a(N_a(T)) - \mu_a)\|_u,
\end{align*}
for a pair of $p, u > 1$ satisfying $\frac{1}{p}  +\frac{1}{u} < 1.$ Lemma~\ref{lem:remainders} implies $\|\sqrt{n_{a, T}}(\bar X_a(N_a(T)) - \mu_a)\|_u = O(1)$. Next observe that $N_{a,T} - n_{a, T} = - \sum_{b \notin \mathcal{O}} (N_{b, T} - n_{b, T})$ due to the fixed total allocation budget $T$. Condition~\ref{ass:concentration} implies that for each suboptimal arm $b \notin \mathcal{O}$, $\Gamma_{b, T}\|(N_b(T) - n_{b, T})/n_{b, T}\|_{s \wedge p} = O(1)$. Combining with a triangle inequality, we have 
\[
\left\|\frac{N_a(T) - n_{a, T}}{\sqrt{n_{a,T}}}\right\|_p \le \frac{1}{\sqrt{n_{a, T}}}\sum_{b \notin \mathcal{O}}\left\|N_{b}(T) - n_{b, T}\right\|_p = \frac{1}{\sqrt{n_{a,T}}}O\left(\sum_{b\notin\mathcal{O}}\frac{n_{b,T}}{\Gamma_{b, T}}\right) = O\left(\frac{\sum_{b\notin\mathcal{O}}n_{b, T}}{f_T\sqrt{n_{a, T}}}\right),
\]
where the last equality follows from the fact that $\Gamma_{b, T} = f_T$ under a generalized \texttt{UCB} algorithm. Finally, we notice that $n_{b, T} \asymp f_T^2$ for a suboptimal arm $b$. Combining all the above we conclude that $\EE[\bar X_a(N_a(T)) - \mu_a] = O\left(\frac{f_T}{n_{a,T}\sqrt{n_{a, T}}}\right)$. The proof for the $Z$-statistic bias is similar and we omit. 

\end{proof}

\section{Concluding Remarks}

\jledit{This paper studies the bias of sample-mean estimators computed from data generated by bandit algorithms. We focus on stable index algorithms and show that the bias admits a sharp leading-order characterization. The resulting formula identifies the algorithmic origin of bias: it is governed by the effective exploration rate of the index function, except for the unique optimal arm whose bias is smaller. Specializing the result to generalized \texttt{UCB} algorithms, we show that the standardized bias can vanish very slowly under UCB1, at the rate $1/\sqrt{\log T}$. We also show how the general choice of the exploration function affects both regret and bias in the opposite direction.


Several directions remain open. First, it would be useful to extend the sharp bias characterization beyond stable index policies to other widely used algorithms, including \texttt{Thompson Sampling}, \text{MOSS}-type algorithms, and contextual or linear bandits. Second, the bias formula suggests natural algorithm-specific corrections, but the construction of practical confidence intervals with improved finite-sample performance remains an important task and requires a more refined understanding of the distribution of the sample mean estimator. Finally, our results point to a broader question in adaptive decision-making: how should algorithms be designed when they must serve both online learning objectives and downstream statistical inference? Understanding this trade-off is essential for the responsible use of adaptive algorithms in modern data-driven systems.}

\bibliographystyle{abbrvnat}
\bibliography{bandit}

\appendix
\section{Proofs of Lemma~\ref{lem:main-bias} and~\ref{lem:remainders}}\label{sec:proof of lemmas for thm1} 
\begin{proof}[Proof of Lemma~\ref{lem:main-bias}]
For notation simplification, write 
\begin{equation*}
\begin{aligned}
    L_{a,T} &= \partial_x I_T(\mu_a,n_{a,T}),
    &\quad M_{a,T} &= -\partial_k I_T(\mu_a,n_{a,T}), \quad \jledit{\gamma_{a,T} = \frac{M_{a,T}}{L_{a,T}},} &\quad \bar Y_{a,T} &= \bar X_a(n_{a,T}) - \mu_a.
\end{aligned}
\end{equation*}
Choose shrinking deterministic sequences $\varepsilon_T, \delta_T \downarrow0$ slowly enough that, for all arms,
\begin{equation*}
\varepsilon_T\sqrt{n_{a,T}}\to\infty,\qquad \delta_T\jledit{\Gamma_{a,T}}
\to\infty,
\end{equation*}
and also \jledit{fast} enough that the derivative errors in Condition \ref{ass:index-regularity}, multiplied by $\varepsilon_T\sqrt{n_{a,T}}+\delta_t\jledit{\Gamma_{a,T}}
$, still tend to zero.  This is possible because the derivative convergence holds on every shrinking box, \jledit{and that $n_{a,T}\rightarrow\infty$ by Condition~\ref{ass:index_regularity2} and $\Gamma_{a,T}\rightarrow\infty$ by Condition~\ref{ass:concentration}.}.

Let
\begin{equation*}
    \mathcal E_T=\left\{\max_{a\in[K]}|\bar Y_{a,T}|\le\varepsilon_T, \quad \max_{a\in[K]} |\tilde n_{a,T} - n_{a,T}|\le\delta_Tn_{a,T}\right\}.
\end{equation*}
The \jledit{$\bar Y_{a,T}$} part of $\mathcal E_T^c$ is negligible in probability by subGaussian concentration. For the \jledit{$\tilde n_{a,T}$} part, the event $|\tilde n_{a,T} - n_{a,T}|>\delta_Tn_{a,T}$ is $|(\tilde n_{a,T} - n_{a,T})\jledit{\Gamma_{a,T}/n_{a,T}}
|>\delta_T\jledit{\Gamma_{a,T}}
$. 
By \eqref{eq:n-tilde-concentration} in Condition \ref{ass:concentration} \jledit{ and $\delta_T\jledit{\Gamma_{a,T}}
\to\infty$}, this part is negligible in probability.
Again, by \eqref{eq:n-tilde-concentration} in Condition \ref{ass:concentration} \jledit{and the subGaussian rewards}, the products $((\tilde n_{a,T} - n_{a,T})\jledit{\Gamma_{a,T}/n_{a,T}}
)(\sqrt{n_{a,T}}\jledit{\bar Y_{a,T}}
)$ are uniformly integrable, which implies that 
\begin{equation}
\jledit{\frac{\Gamma_{a,T}}{\sqrt
n_{a,T}}}
\E[(\tilde n_{a,T}-n_{a,T})(\barX_a(n_{a,T}) - \mu_a)\mathbb{I}_{\mathcal{E}_T^c}]
= o(1)
\label{eq:bias-main-term}
\end{equation}

On $\mathcal{E}_T$, the integral form of Taylor's theorem and Condition~\ref{ass:index-regularity} give 
\begin{equation}
I_T(\mu_a+\bar Y_{a,T},\tilde n_{a,T})-I_T(\mu_a,n_{a,T})
=L_{a,T}\bar Y_{a,T}-M_{a,T}(\tilde n_{a,T} - n_{a,T})+r_{a,T}, 
\label{eq:linearized-coupled}
\end{equation}
where\footnote{\jledit{We use the notation \(A_T=o_{L^s}(a_T)\) to mean that \(\|A_T/a_T\|_s=o(1)\).}
} $r_{a,T}=o_{L^s}(\frac{L_{a,T}}{\sqrt{n_{a,T}}})$ for any fixed $s>1$ \jledit{on $\mathcal E_T$}. This is because the derivative errors in Condition \ref{ass:index-regularity}, multiplied by $\varepsilon_T\sqrt{n_{a,T}}+\delta_t\gamma_{a,T}n_{a,T}^\frac{3}{2}$, still tend to zero. \jledit{Using the definition of $\lambda_{a,T}$ and $\tilde\lambda_{a,T}$ in Definition~\ref{def: fluid system}--\ref{def:empirical fluid system}, we have }
\begin{equation*}
    L_{a,T}\bar Y_{a,T}-M_{a,T}(\tilde n_{a,T} - n_{a,T})=\tillam_T-\lambda_T-r_{a,T}.
\end{equation*}
Thus,
\begin{equation*}
\tilde n_{a,T} - n_{a,T}=\frac{L_{a,T}}{M_{a,T}}\bar Y_{a,T} - \frac{\tillam_T-\lambda_T}{M_{a,T}}+\frac{r_{a,T}}{M_{a,T}},
\end{equation*}
where $\frac{r_{a,T}}{M_{a,T}} = o_{L^s}(\frac{1}{\gamma_{a,T}\sqrt{n_{a,T}}})$ \jledit{on $\mathcal E_T$}. Summing over $a$ and using $\sum_{a=1}^K(\tilde n_{a,T} - n_{a,T})=0$ yields
\begin{equation*}
\tillam_T-\lambda_T
=\frac{\sum_{c=1}^K(L_{c,T}/M_{c,T})\bar Y_{a,T}}{\sum_{c=1}^KM_{c,T}^{-1}}+\frac{\sum_{c=1}^Kr_{c,T}M_{c,T}^{-1}}{\sum_{c=1}^KM_{c,T}^{-1}}.
\end{equation*}
For notation simplification, denote by 
\begin{equation*}
    \rho_{a,T} = \frac{r_{a,T}}{M_{a,T}}-\pi_{a,T}\sum_{c=1}^K\frac{r_{c,T}}{M_{c,T}}, \quad \pi_{a,T}= \frac{M_{a,T}^{-1}}{\sum_{c=1}^KM_{c,T}^{-1}}
\end{equation*}
A following result is 
\begin{equation*}
\tilde n_{a,T} - n_{a,T}=\frac{L_{a,T}}{M_{a,T}}\bar Y_{a,T} - \pi_{a,T}\sum_{c=1}^K\frac{L_{c,T}}{M_{c,T}}\bar Y_{c,T} + \rho_{a,T},
\qquad
\norm{\rho_{a,T}\gamma_{a,T}\sqrt{n_{a,T}}}_s\to0.
\label{eq:derived-B-expansion}
\end{equation*}
Then for term $ \E[(\tilde n_{a,T}-n_{a,T})\bar Y_{a,T}]$ on the event $\mathcal{E}_T$, the diagonal coefficient of $\tilde n_{a,T}-n_{a,T}$ is $\frac{L_{a,T}}{M_{a,T}}(1-\pi_{a,T})$, and all off-diagonal covariance terms vanish by independence across reward streams.  The remainder is negligible because
\begin{equation*}
   \gamma_{a,T}n_{a,T}|\E[\rho_a\bar Y_{a,T}]| \le \norm{\rho_{a,T}\gamma_{a,T}\sqrt{n_{a,T}}}_s\norm{\sqrt{n_{a,T}}\bar Y_{a,T}}_{s'}=o(1)
\end{equation*}
where $s'$ is finite and $\frac{1}{s}+\frac{1}{s'}=1$ by H\"older's inequality.  Hence,
\begin{align}
\gamma_{a,T}n_{a,T}\E[(\tilde n_{a,T}-n_{a,T})(\barX_a(n_{a,T}) - \mu_a)\mathbb{I}_{\mathcal{E}_T}]
=(1-\pi_{a,T})\sigma_a^2+o(1).
\label{eq:bias-small-term}
\end{align}
\jledit{It's also easy to verify that $\pi_{a,T}\rightarrow\frac{1}{|\mathcal O|}\mathbbm 1\{a\in\mathcal O\}$ from Condition~\ref{ass:index_regularity2}.} Combining equations~\eqref{eq:bias-main-term} and~\eqref{eq:bias-small-term} completes the proof.
\end{proof}

\begin{proof}[Proof of Lemma~\ref{lem:remainders}]
\jledit{We first prove the first moment bound.} Since 
\begin{equation*}
    \sqrt n_{a,T}(\barX_a(N_{a,T}) - \mu_a) = \sqrt n_{a,T}(\barX_a(n_{a,T}) - \mu_a) + \sqrt n_{a,T}(\barX_a(N_{a,T}) - \barX_a(n_{a,T})),
\end{equation*}
it suffices to control the two terms on the right-hand side. For any $u >1$, the \jledit{first term} 
is bounded in $L^u$ \jledit{by sub-Gaussianity} and the second term is bounded in $L^u$ by \eqref{eq:sample-mean-concentration} in Condition \ref{ass:concentration}.  Hence 
\begin{equation*}
    \norm{\sqrt n_{a,T}(\barX_a(N_{a,T}) - \mu_a)}_u=O(1).
\end{equation*}
For \eqref{eq:remainder term 1},
\begin{align*}
    \E\left[\jledit{\frac{\Gamma_{a,T}}{\sqrt{n_{a,T}}}}|N_a(T) - \tilde n_{a,T}|\,|\barX_a(N_{a,T}) - \mu_a|\right] = \E\left[\frac{\Gamma_{a,T}}{n_{a,T}}
    |N_a(T) - \tilde n_{a,T}|\,\sqrt n_{a,T}|\barX_a(N_{a,T}) - \mu_a|\right] \to 0
\end{align*}
by H\"older's inequality and equation \eqref{eq:N-concentration} and \eqref{eq:sample-mean-concentration} in Condition \ref{ass:concentration}.  For \eqref{eq:remainder term 2},
\begin{align*}
    \E\left[\jledit{\frac{\Gamma_{a,T}}{\sqrt{n_{a,T}}}}|\tilde n_{a,T} - n_{a,T}|\,|\barX_a(N_a(T))-\barX_a(n_{a,T})|\right] = \E\left[\frac{\Gamma_{a,T}}{n_{a,T}}|\tilde n_{a,T} - n_{a,T}|\,\sqrt n_{a,T}|\barX_a(N_a(T))-\barX_a(n_{a,T})|\right] &\to0.
\end{align*}
by H\"older's inequality, equations~\eqref{eq:n-tilde-concentration} and \eqref{eq:sample-mean-concentration} in Condition \ref{ass:concentration}. The convergence $N_a(T)/n_{a,T}\to1$ in probability follows from 
\begin{align*}
\frac{N_a(T)-n_{a,T}}{n_{a,T}}=\frac{\jledit{\frac{\Gamma_{a,T}}{n_{a,T}}}
(N_a(T)-\tilde n_{a,T})+ \jledit{\frac{\Gamma_{a,T}}{n_{a,T}}}
(\tilde n_{a,T} - n_{a,T})}{\jledit{\Gamma_{a,T}}} = o_{p}(1),
\end{align*}
where the last equality follows from
equations equations~\eqref{eq:n-tilde-concentration}, \eqref{eq:N-concentration}, \jledit{and $\Gamma_{a,T}=\omega(1)$} in Condition \ref{ass:concentration}
. Uniform integrability of \eqref{eq:uniformly integrable} follows by writing $N_a(T)-n_{a,T}=N_a(T)-\tilde n_{a,T}+\tilde n_{a,T} - n_{a,T}$ and using the same product bounds with exponents strictly inside the H\"older slack.
\end{proof}

\section{Empirical reward concentrations}
This section presents the key concentration and maximal tools used in the proof.
\begin{lemma}[Maximal subGaussian sums]\label{lem:maximal inequality}
If \ $Y_1,\ldots,Y_M$ are independent, centered, and $\nu$-subGaussian, and $S_k=\sum_{s=1}^kY_s$, then for every $x>0$,
\begin{equation*}
    \Pp\{\max_{1\le k\le M}S_k\ge x\}\le \exp\{-x^2/(2\nu^2M)\}
\end{equation*}
\end{lemma}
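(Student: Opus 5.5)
The plan is to prove the maximal inequality by a standard exponential-martingale argument: combine the sub-Gaussian moment generating function bound with Doob's maximal inequality applied to a nonnegative submartingale, then optimize over the exponential tilt.

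\emph{Step 1 (submartingale).} Fix $\theta>0$. Because the $Y_s$ are independent and centered, $(S_k)_{k\le M}$ is a martingale with respect to its natural filtration. Since $x\mapsto e^{\theta x}$ is convex and nonnegative, Jensen's inequality makes $(e^{\theta S_k})_{k\le M}$ a nonnegative submartingale.

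\emph{Step 2 (Doob).} Doob's maximal inequality then gives
\[
\Pp\{\max_{1\le k\le M}S_k\ge x\}=\Pp\{\max_{1\le k\le M}e^{\theta S_k}\ge e^{\theta x}\}\le e^{-\theta x}\,\E[e^{\theta S_M}].
\]
The first equality holds because $e^{\theta x}$ is increasing in $x$.

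\emph{Step 3 (bounding the MGF).} By independence and the $\nu$-sub-Gaussian property $\E[e^{\theta Y_s}]\le e^{\theta^2\nu^2/2}$, we get $\E[e^{\theta S_M}]\le e^{M\theta^2\nu^2/2}$.

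\emph{Step 4 (optimizing $\theta$).} Minimize the exponent $-\theta x+M\theta^2\nu^2/2$ over $\theta>0$. The minimizer is $\theta=x/(\nu^2M)$, which yields the bound $\exp\{-x^2/(2\nu^2M)\}$, as claimed.

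\emph{Main obstacle.} The only real point is taking the maximum over $k$ without paying a union-bound factor of $M$. Doob's inequality on the exponential submartingale handles exactly this. One should also check that the definition of $\nu$-sub-Gaussian is the MGF form $\E e^{\theta Y}\le e^{\theta^2\nu^2/2}$. If the paper instead uses a tail-based definition, the constant in the exponent changes. In that case, I would first convert the tail bound to the MGF form with the appropriate constant.
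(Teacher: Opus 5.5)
Your proposal is correct and is essentially the paper's argument: an exponential tilt, a maximal inequality for a nonnegative exponential (super/sub)martingale in place of a union bound, and the choice $\theta=x/(\nu^2M)$. The only cosmetic difference is the martingale used. The paper applies Ville's inequality to the compensated supermartingale $\exp\{\lambda S_k-\lambda^2\nu^2k/2\}$, using $k\le M$ on the event, whereas you apply Doob's inequality to $e^{\theta S_k}$ and bound $\E[e^{\theta S_M}]$ afterwards. Both rely on the MGF form of sub-Gaussianity, which is the form the paper implicitly uses.
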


\begin{lemma}[Local sample-mean fluctuation bound]\label{lem:local-modulus}
Fix $0<\alpha<1$.  For a centered $\nu$-subGaussian sequence define $\bar Y(k)=k^{-1}\sum_{s=1}^kY_s$, $H_n = 1 \vee (H \wedge 2n)$ and $\Omega(H)=\sup_{\alpha n\le k\le (1-\alpha)n,\ |k-n|\le H_n}|\bar Y(k)- \bar Y(n)|$. There are constants $C,c>0$ such that
\begin{equation*}
\Pp\{\Omega(H)>x\}\le C\exp\left\{-c\frac{n^2x^2}{\nu^2(1\vee H)}\right\}.
\end{equation*}
\end{lemma}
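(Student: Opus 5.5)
The plan is a direct decomposition plus two applications of Lemma~\ref{lem:maximal inequality}. The only structural facts about the index set I will use are that every admissible $k$ satisfies $k\ge \alpha n$ and $|k-n|\le H_n\le 2n$. The upper limit in the definition of $\Omega(H)$ plays no further role, so the argument does not depend on how that range is read. For any such $k$ write, with $S_k=\sum_{s\le k}Y_s$,
\begin{equation*}
\bar Y(k)-\bar Y(n)=\frac{S_k-S_n}{k}+S_n\Bigl(\frac1k-\frac1n\Bigr),
\end{equation*}
so that
\begin{equation*}
\Omega(H)\le \frac{1}{\alpha n}\max_{|k-n|\le H_n}|S_k-S_n|+\frac{H_n}{\alpha n^2}|S_n| .
\end{equation*}
Then $\{\Omega(H)>x\}$ is contained in the union of the event that the first term exceeds $x/2$ and the event that the second term exceeds $x/2$. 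I will bound these two events separately.

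\emph{Increment term.} For $k\ge n$, $S_k-S_n=\sum_{s=n+1}^{k}Y_s$ is a partial sum of at most $H_n$ independent centered $\nu$-subGaussian variables. For $k<n$, $S_n-S_k=\sum_{s=k+1}^{n}Y_s$ is a partial sum of the reversed sequence $Y_n,Y_{n-1},\dots$, again with at most $H_n$ terms. Applying Lemma~\ref{lem:maximal inequality} to each of these two sequences and to their negatives (four applications, each with $M\le H_n$) gives
\begin{equation*}
\Pp\Bigl\{\max_{|k-n|\le H_n}|S_k-S_n|\ge \tfrac{\alpha n x}{2}\Bigr\}\le 4\exp\Bigl\{-\frac{\alpha^2n^2x^2}{8\nu^2H_n}\Bigr\}.
\end{equation*}
Since $H_n\le 2(1\vee H)$, the right-hand side is at most $4\exp\{-c\,n^2x^2/(\nu^2(1\vee H))\}$, which is of the claimed form.

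\emph{Drift term.} Since $S_n$ is $\sqrt n\,\nu$-subGaussian,
\begin{equation*}
\Pp\Bigl\{|S_n|>\tfrac{\alpha n^2x}{2H_n}\Bigr\}\le 2\exp\Bigl\{-\frac{\alpha^2n^3x^2}{8\nu^2H_n^2}\Bigr\}.
\end{equation*}
The truncation $H_n\le 2n$ gives $n^3/H_n^2\ge n^2/(2H_n)$, so this bound is again at most $2\exp\{-c\,n^2x^2/(\nu^2(1\vee H))\}$. Combining the two events with $C=6$ and $c=\alpha^2/32$ yields the lemma.

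This last reduction is the step I would check most carefully. It is precisely where the truncation $H_n=1\vee(H\wedge 2n)$ is needed: without the cap $H_n\le 2n$, the drift term would only give an exponent of order $n^3x^2/H^2$, which is weaker than the claimed $n^2x^2/H$ once $H\gg n$. The lower cap $H_n\ge 1$ is what absorbs the $1\vee H$ in the denominator. Everything else is routine bookkeeping of constants.
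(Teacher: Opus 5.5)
Your proof is correct and is essentially the paper's argument: the same split $\bar Y(k)-\bar Y(n)=(S_k-S_n)/k+S_n(1/k-1/n)$ with $k\ge\alpha n$, Lemma~\ref{lem:maximal inequality} applied to the forward and backward increments, and a sub-Gaussian tail bound for $S_n$. You also spell out a step the paper leaves implicit: the cap $H_n\le 2n$ turns the drift exponent $n^3x^2/H_n^2$ into the claimed $n^2x^2/(1\vee H)$ form (note that $H_n\le 1\vee H$ holds directly, so your extra factor of $2$ is harmless but unnecessary).
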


\begin{proof}[Proof of Lemma~\ref{lem:maximal inequality}]
For $\lambda>0$, the process
\begin{align*}
    \exp\{\lambda S_k-\lambda^2\nu^2 k/2\},\qquad 0\le k\le M,
\end{align*}
is a nonnegative supermartingale. Ville's inequality gives
\begin{align*}
        \Pp\left\{\max_{1\le k\le M}S_k\ge x\right\}
        \le \exp\{-\lambda x+\lambda^2\nu^2M/2\}.
\end{align*}
Optimizing over $\lambda$ completes the proof. Applying the same argument to $-Y_s$ gives the lower-tail version.
\end{proof}

\begin{proof}[Proof of Lemma~\ref{lem:local-modulus}]
For $0 \le h \le H_n$,
\begin{align*}
\bar Y(n+h)-\bar Y(n) = \frac{S_{n+h}-S_n}{n+h} -
\frac{h}{n(n+h)}S_n.
\end{align*}
On the range $n+h\ge \alpha n$, this is bounded in absolute value by
\begin{align*}
C\left\{ \frac{1}{n}\max_{0\le h\le H_n}
|S_{n+h}-S_n| + \frac{H_n}{n^2}|S_n|\right\}.
\end{align*}
The same decomposition for $k=n-h$ gives the negative side. Lemma~\ref{lem:maximal inequality}, applied to the forward and backward increments and to $S_n$,completing the proof.
\end{proof}

\section{Proof of the Conditions}
\label{sec:condition verification}
Throughout this section, the index function is assumed to take the following form:
\begin{equation*}
    I_t(x,n) = x + \frac{f_t}{\sqrt n},
    \qquad
    f_T \to \infty
\end{equation*}
Two choices of $f_t$ are considered:
\begin{itemize}
    \item \texttt{UCB1}\footnote{We abuse the name \texttt{UCB1} here to allow for general value of $\rho$.}, with $f_t=\sqrt{\rho\log t}$ and $\rho/\sigma^2 > 2$;
    \item \texttt{Poly-UCB}, with $f_t=t^\alpha$ and $0<\alpha<1/2$.
\end{itemize}
The next step is to verify that both \texttt{UCB1} and \texttt{Poly-UCB} satisfy Conditions~\ref{ass:index-regularity}--\ref{ass:concentration}. We first verify Conditions~\ref{ass:index-regularity} and
\ref{ass:index_regularity2}. Denote $m=|\mathcal O|$.
\begin{lemma}[Verification of Conditions~\ref{ass:index-regularity}--\ref{ass:index_regularity2}]\label{lem:verification of conditons1-2 for ucb1}
For \texttt{UCB1} and \texttt{Poly-UCB}, 
\begin{align*}
    n_{i,T} \asymp \frac{T}{m} \quad \text{for} \quad  i\in\mathcal{O}  \qquad  n_{j,T} \asymp f_T^2 \quad \text{for} \quad j \notin\mathcal O
\end{align*}
Moreover, Conditions~\ref{ass:index-regularity} and~\ref{ass:index_regularity2} are satisfied for both algorithms.
\end{lemma}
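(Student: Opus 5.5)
We verify the lemma by solving the fluid equations \eqref{eq:n_aT} almost explicitly for $I_T(x,n)=x+f_T/\sqrt n$, and then checking the two conditions directly.

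\textbf{Condition~\ref{ass:index-regularity}.} The index is $C^1$ on $n>0$, strictly increasing in $x$ and strictly decreasing in $n$. It satisfies $I_T\to\infty$ as $n\to0$ and $I_T\to x$ as $n\to\infty$. The derivatives are
\[
\partial_x I_T\equiv1,\qquad \partial_n I_T(x,n)=-\tfrac12 f_T n^{-3/2}.
\]
The $x$-ratio in Condition~\ref{ass:index-regularity} is therefore identically $1$. On the box $\mathcal B_T$ the $n$-ratio equals $(n/n_T)^{-3/2}$, which lies in $[(1+\delta_T)^{-3/2},(1-\delta_T)^{-3/2}]$ and tends to $1$ uniformly. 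This also gives $\Gamma_T(x,n)=-2n^{3/2}\partial_nI_T/\partial_xI_T=f_T$, which we record for later use.

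\textbf{Solving the fluid system.} The equations $\mu_a+f_T/\sqrt{n_{a,T}}=\lambda_T$ give
\[
n_{a,T}=\frac{f_T^2}{(\lambda_T-\mu_a)^2},\qquad \lambda_T>\mu^\star,
\]
with $\lambda_T$ pinned down by $\sum_a n_{a,T}=T$. Write $\lambda_T=\mu^\star+\eta_T$. The right-hand side of the budget constraint is strictly decreasing in $\eta_T$, so $\eta_T$ is unique. Plugging in,
\[
T=\frac{m f_T^2}{\eta_T^2}+\sum_{j\notin\mathcal O}\frac{f_T^2}{(\Delta_j+\eta_T)^2}.
\]

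\textbf{Showing $\eta_T\to0$.} This step is the main obstacle: we must show $f_T^2=o(T)$ and that this forces $\eta_T\to0$. For \texttt{UCB1}, $f_T^2=\rho\log T$. For \texttt{Poly-UCB}, $f_T^2=T^{2\alpha}$ with $2\alpha<1$. In both cases $f_T^2=o(T)$. Suppose $\eta_T$ stayed bounded below along a subsequence. Then the right-hand side would be $O(f_T^2)=o(T)$, a contradiction. Hence $\eta_T\to0$. Since the suboptimal sum is bounded by $(K-m)f_T^2/\min_j\Delta_j^2=o(T)$, we get
\[
\frac{m f_T^2}{\eta_T^2}=T(1+o(1)),\qquad\text{so}\qquad \eta_T\sim f_T\sqrt{m/T}.
\]

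\textbf{Asymptotics of $n_{a,T}$.} For $i\in\mathcal O$, $n_{i,T}=f_T^2/\eta_T^2\sim T/m$. For $j\notin\mathcal O$, $n_{j,T}=f_T^2/(\Delta_j+\eta_T)^2\sim f_T^2/\Delta_j^2$. This proves $n_{i,T}\asymp T/m$ and $n_{j,T}\asymp f_T^2$.

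\textbf{Condition~\ref{ass:index_regularity2}.} Each requirement now follows from the asymptotics above:
\begin{itemize}
\item $n_{j,T}\to\infty$, because $f_T\to\infty$.
\item $n_{j,T}/n_{i,T}\asymp m f_T^2/T\to0$.
\item The derivative ratio satisfies
\[
\frac{\partial_nI_T(\mu_i,n_{i,T})}{\partial_nI_T(\mu_j,n_{j,T})}=\Bigl(\frac{n_{j,T}}{n_{i,T}}\Bigr)^{3/2}\to0.
\]
\end{itemize}

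Everything after the bound on $\eta_T$ is routine algebra, so the only genuine step is controlling $\eta_T$.
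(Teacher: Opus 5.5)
Your proposal is correct and follows essentially the paper's route. Both solve the fluid equations as $n_{a,T}=f_T^2/(\lambda_T-\mu_a)^2$, use the budget constraint $\sum_a n_{a,T}=T$ to get the asymptotics, and read off both conditions from $\partial_x I\equiv 1$ and $\partial_n I=-\tfrac12 f_T n^{-3/2}$. Your explicit argument that $\eta_T\to0$ (via $f_T^2=o(T)$) fills in a step the paper only asserts. The paper in turn records sharper error terms, namely $T/m+O(f_T^2)$ and $f_T^2/\Delta_j^2+O(f_T^3/\sqrt{T})$, whereas you state only $\sim$ asymptotics; those are still sufficient for the $\asymp$ claims and both conditions.
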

\begin{proof}
For Condition~\ref{ass:index-regularity},
the derivative identities are immediate. Thus, on any shrinking relative count box, for any arm $a \in [K]$,
\begin{equation*}
    \frac{\partial_x I_T(x,n)}{\partial_x I_T(\mu_a, n_{a,T})} = 1  \qquad \frac{-\partial_n I_T(x,n)}{-\partial_n I_T(\mu_a, n_{a,T})} = \left(\frac{n_{a,T}}{n}\right)^{3/2}
    \to 1.
\end{equation*}
To verify Condition~\ref{ass:index_regularity2}, we first recall the approximate dynamical system:
\begin{equation}
    \mu_a + \frac{f_T}{\sqrt{n_{a,T}}} = \lambda_T,
    \qquad
    \sum_a n_{a,T}=T,
\end{equation}
Then for $i\in\mathcal O$ and $j \notin\mathcal O$,
\begin{equation*}
    n_{i,T}=\frac{f_T^2}{(\lambda_T - \mu^{\star})^2}, \qquad  n_{j,T}=\frac{f_T^2}{(\Delta_j+\lambda_T - \mu^{\star})^2}. 
\end{equation*}
Hence
\begin{equation*}
    \frac{mf_T^2}{(\lambda_T - \mu^{\star})^2}
    + \sum_{j\notin\mathcal O}
    \frac{f_T^2}{(\Delta_j+\lambda_T - \mu^{\star})^2}
    =T.
\end{equation*}
Above two equations implies that for optimal arms and suboptimal arms,
\begin{equation*}
    n_{i,T}=\frac{T}{m}+O(f_T^2), \qquad n_{j,T} =  \frac{f_T^2}{\Delta_j^2} +O\!\left(\frac{f_T^3}{\sqrt T}\right).
\end{equation*} 
Combining with derivative identities, we have 
\begin{equation*}
    n_{j, T} = \omega(1), \qquad n_{j,T}/n_{i,T}=o(1), \qquad \frac{\partial_n I_T(\mu_i, n_{i,T})}{\partial_n I_T(\mu_j, n_{j,T})} = o(1),
\end{equation*}
which implies condition~\ref{ass:index_regularity2} holds for both \texttt{UCB1} and \texttt{Poly-UCB}. 
\end{proof}
\subsection{Logarithmic UCB1}
In this section, we introduce the following notation and events to simplify the presentation. Define
\begin{align*}
    & \mathcal{R}_j =\{n_{j,T}/\log n_{j,T} \le N_j(T) \le n_{j,T}\log n_{j,T}\} \quad \text{for} \quad j \notin \mathcal O \\
    & A_h = \{ \max_{s\notin\mathcal O}|\tilde n_{s,T} - n_{s,T}| \le h\} \quad H = \{\sum_{s\notin \mathcal{O}}N_s(T) \le T/4\} \quad U_g = \{ \max_{r\in\mathcal O}|\tilde n_{r,T} - n_{r,T}| \le g\}
\end{align*}
In the proof below, these events serve to control $N_a(T)$ and $\tilde n_{a,T}$ for any $a \in [k]$, ensuring that they remain within a typical scale; they therefore constitute certain “good” events. We also define
\begin{equation*}
    m = |\mathcal{O}|,\ \bar Y_j(s) = \bar X_j(s) - \mu_j, \ \bar{\mathbf Y}_T = \bigl(\bar Y_1(s),\ \bar Y_2(s),\ldots,\bar Y_K(s)\bigr),\ \sigma=\max_{a\in\{1,\ldots,K\}}\sigma_a.
\end{equation*}
Moreover, throughout the proof, constants that play a specific role will be denoted separately, while \(c,C>0\) denote generic constants that may depend on fixed problem parameters, are independent of \(T\), and may change from line to line.

\begin{lemma}[Fluid approximation tail]\label{lem:Tail bound of fluid approximation}
For any $y >0$ and any fixed arm $a$, there exists constants $c,C>0$ such that 
\begin{equation*}
    \Pp\left(\frac{|\tilde n_{a,T} - n_{a,T}|}{n_{a,T}/f_T} \geq y\right) \leq  C\exp\{-cy^2\} + C\exp\{-cf_T^2\} 
\end{equation*}
\end{lemma}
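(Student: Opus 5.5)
We will analyze the explicit equalization equations for the generalized UCB index. Write $\bar Y_a=\bar X_a(n_{a,T})-\mu_a$. Then $\tilde n_{a,T}$ solves
\[
\mu_a+\bar Y_a+\frac{f_T}{\sqrt{\tilde n_{a,T}}}=\tillam_T,\qquad \sum_a \tilde n_{a,T}=T,
\]
so $\tilde n_{a,T}=f_T^2/(\tillam_T-\mu_a-\bar Y_a)^2$. Analogously, $n_{a,T}=f_T^2/(\lambda_T-\mu_a)^2$. Set $D_a=\lambda_T-\mu_a=f_T/\sqrt{n_{a,T}}$ and $\eta=\tillam_T-\lambda_T$. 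Then
\[
\frac{\tilde n_{a,T}}{n_{a,T}}=\Bigl(1+\frac{\eta-\bar Y_a}{D_a}\Bigr)^{-2}.
\]
The quantity to control is therefore $(\eta-\bar Y_a)/D_a=\sqrt{n_{a,T}}(\eta-\bar Y_a)/f_T$. We need it to be $O(y/f_T)$ outside an event of probability $C e^{-cy^2}+Ce^{-cf_T^2}$.

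\emph{Step 1: good event for the sample means.} Consider the event $G=\{|\bar Y_b|\le \tfrac14 D_b \text{ for all } b\}$. By sub-Gaussianity, $\sqrt{n_{b,T}}\bar Y_b$ is sub-Gaussian, and $\sqrt{n_{b,T}}D_b=f_T$. Hence $\Pp(G^c)\le Ce^{-cf_T^2}$, and this bound produces the second term in the lemma. The same estimate gives $\Pp(\sqrt{n_{a,T}}|\bar Y_a|\ge y)\le Ce^{-cy^2}$.

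\emph{Step 2: control of $\eta$ via monotonicity.} The map $\ell\mapsto\sum_b f_T^2/(\ell-\mu_b-\bar Y_b)^2$ is strictly decreasing on $\ell>\max_b(\mu_b+\bar Y_b)$, and $\tillam_T$ is its level-$T$ crossing. On $G$, we evaluate this sum at the levels $\lambda_T\pm u$. If $u\ge \max_b|\bar Y_b|$, then every denominator at $\lambda_T+u$ is at least $D_b$, so the sum is at most $T$; likewise the sum at $\lambda_T-u$ is at least $T$. Therefore $|\eta|\le\max_b|\bar Y_b|$.

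This crude bound is not yet at the right scale for every arm. For a suboptimal arm $a$, the error $\bar Y_b$ of an optimal arm has size $1/\sqrt{T}$, which is much smaller than $1/\sqrt{n_{a,T}}$. For an optimal arm, a suboptimal arm's $\bar Y_b$ is of size $1/f_T$, which is much larger than $1/\sqrt T$. So a sharper estimate is needed.

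To get it, we linearize the constraint on $G$, where each term is a smooth function of $(\eta-\bar Y_b)/D_b\in[-\tfrac12,\tfrac12]$ once $|\eta|\le \tfrac14\min D$. This gives
\[
\eta=\sum_b\pi_b\bar Y_b+\text{(quadratic)},\qquad \pi_b\propto n_{b,T}/D_b\propto n_{b,T}^{3/2},
\]
which matches the weights $\pi_{a,T}$ in the proof of Lemma~\ref{lem:main-bias}. Since $n_{b,T}^{3/2}$ is of order $T^{3/2}$ for optimal arms and $f_T^3$ for suboptimal ones, $\eta$ is dominated by the optimal arms. We conclude $|\eta|\lesssim \max_{b\in\mathcal O}|\bar Y_b|+(f_T^3/T^{3/2})\max_{b\notin\mathcal O}|\bar Y_b|$, up to quadratic terms. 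Rather than tracking the quadratic remainder, we can instead bracket $\tillam_T$ at the levels $\lambda_T\pm u$ with $u$ equal to $C$ times this weighted maximum, using monotonicity and a one-sided convexity bound. Each arm's contribution to the sum at these levels can be checked against this bound, and this avoids Taylor remainders.

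\emph{Step 3: conclusion.} On $G$, $|\tilde n_{a,T}/n_{a,T}-1|\le C|\eta-\bar Y_a|/D_a$. For each arm, $\sqrt{n_{a,T}}|\eta|$ is dominated by sub-Gaussian variables of the form $\sqrt{n_{b,T}}|\bar Y_b|$ for $b\in\mathcal O$, scaled by $\sqrt{n_{a,T}/n_{b,T}}\le C$. The suboptimal contributions carry the tiny factor $f_T^3/T^{3/2}\cdot\sqrt{n_{a,T}/n_{b,T}}$, which is bounded. Hence
\[
\Pp\Bigl(\frac{|\tilde n_{a,T}-n_{a,T}|}{n_{a,T}/f_T}\ge y\Bigr)\le \Pp(G^c)+\sum_b\Pp\bigl(\sqrt{n_{b,T}}|\bar Y_b|\ge cy\bigr),
\]
which gives the claimed bound.

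I expect the main obstacle to be Step 2: obtaining a cross-arm bound on $\eta$ that is sharp at every arm's own scale, especially for an optimal arm influenced by the suboptimal arms' larger fluctuations. I would handle this with the weighted monotone bracketing described there.
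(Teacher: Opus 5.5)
Your proposal is correct and follows the same overall plan as the paper. Both proofs invert the UCB equalization equations explicitly, restrict to a good event whose complement costs $Ce^{-cf_T^2}$, control the level shift $\eta=\widetilde\lambda_T-\lambda_T$ by the $n_{b,T}^{3/2}$-weighted average of the $\bar Y_b$ (the paper's $\bar Y_{w,T}$), and finish with sub-Gaussian tails of $\sqrt{n_{b,T}}\bar Y_b$.

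The difference is how the nonlinearity is handled. The paper Taylor-expands the implicit maps $\Phi$ and $\Psi$ with integral remainders on $\xi=\{|\bar Y_{a,T}|\le f_T/\sqrt{n_{a,T}}\}$, and carries a remainder term $\Pp(|\mathcal R_{a,T}|\ge y/2)$ into the union bound. Its advantage is that it yields the explicit first-order formula $\tilde n_{a,T}-n_{a,T}\approx\frac{2n_{a,T}^{3/2}}{f_T}(\bar Y_{a,T}-\bar Y_{w,T})$, which is the same expansion that drives Lemma~\ref{lem:main-bias}.

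Your monotone bracketing of $\widetilde\lambda_T$ at $\lambda_T\pm u$ gives only an inequality, but it is a linear bound on $G$ with no remainder to control. Your margin $\tfrac14$ also keeps every denominator $D_b+\eta-\bar Y_b$ bounded away from zero. The paper's $\xi$ does not guarantee this, so its deterministic remainder bounds implicitly need such a margin anyway.

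When you write it out, the choice $u=2\max_{b\in\mathcal O}|\bar Y_b|+Cf_T^3T^{-3/2}\max_{b\notin\mathcal O}|\bar Y_b|$ works for both brackets on $G$. It also gives $(\eta-\bar Y_a)/D_a\ge-\tfrac34-o(1)$ for every arm, which is exactly what your Step~3 linear bound on $\tilde n_{a,T}/n_{a,T}-1$ needs.
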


\begin{lemma}[Suboptimal arm upper tail]\label{lem:ucb-suboptimal-upper-tail}
For any fixed $j\notin\mathcal O$ and any $1<\beta<\rho/(2\sigma^2)$, there exist constants $C,c,L>0$ such that, for $L n_{j,T}\le y\le T$, 
\begin{equation*}
\Pp\{N_j(T)\ge y\}\le Ce^{-cy}+Cy^{-\beta}.
\end{equation*}
and
\begin{equation*}
\Pp\{\sum_{j\notin \mathcal{O}}N_j(T)\geq y\}\le Ce^{-cy}+Cy^{-\beta}.
\end{equation*}
\end{lemma}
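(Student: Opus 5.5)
The plan is the classical \texttt{UCB} decomposition into a ``suboptimal arm looks too good'' event and an ``optimal arm looks too bad'' event, using Lemma~\ref{lem:maximal inequality} and the explicit fluid sizes $n_{j,T}\asymp f_T^2/\Delta_j^2$ from Lemma~\ref{lem:verification of conditons1-2 for ucb1}.

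\emph{Step 1: reduce to an index comparison.} Fix $j\notin\mathcal O$ and some $i\in\mathcal O$. Put $k=\lceil y\rceil-1$. If $N_j(T)\ge y$, then there is a time $t\le T$ at which arm $j$ is pulled with $N_j(t-1)=k$. Necessarily $t>k$, so $t\ge y$ up to rounding. At that time
\[
\bar X_j(k)+\frac{f_t}{\sqrt k}\ \ge\ \bar X_i(N_i(t-1))+\frac{f_t}{\sqrt{N_i(t-1)}} .
\]
Since $k\ge y/2\ge L n_{j,T}/2$ and $n_{j,T}\asymp f_T^2/\Delta_j^2$, choosing $L$ large gives $f_t/\sqrt k\le f_T/\sqrt k\le \Delta_j/2$. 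Hence the displayed inequality forces one of two events:
\begin{itemize}
\item[(a)] $\bar Y_j(k)\ge \Delta_j/4$;
\item[(b)] $\bar X_i(s)+f_t/\sqrt s<\mu^\star-\Delta_j/4+\Delta_j/4=\mu^\star$ for $s=N_i(t-1)\le t$.
\end{itemize}

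\emph{Step 2: the first event.} Event (a) concerns a single deterministic count $k$. Sub-Gaussianity gives $\Pp\{\bar Y_j(k)\ge\Delta_j/4\}\le \exp\{-k\Delta_j^2/(32\sigma^2)\}\le Ce^{-cy}$. This produces the exponential term.

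\emph{Step 3: the second event (main obstacle).} Since $f_t$ is increasing and $t\ge y$, event (b) is contained in
\[
\mathcal B_y=\Bigl\{\exists\, s\le T:\ S_i(s)\le -f_y\sqrt s\Bigr\},
\qquad f_y^2=\rho\log y .
\]
For this I would peel $s$ into geometric blocks $[\eta^{\ell},\eta^{\ell+1})$ and apply Lemma~\ref{lem:maximal inequality} on each block. Each block contributes at most $\exp\{-\rho\log y/(2\sigma^2\eta)\}=y^{-\rho/(2\sigma^2\eta)}$.

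The difficulty is the number of blocks, which is of order $\log T$ rather than $\log y$. Also $\log T\lesssim f_T^2\lesssim y$ only for the logarithmic choice of $f$, so a crude union over blocks costs a factor of $y$. Since $\beta<\rho/(2\sigma^2)$ strictly, I would first try to spend the slack as follows:
\begin{itemize}
\item Take $\eta$ close to $1$, so that $\rho/(2\sigma^2\eta)>\beta$.
\item Control the blocks with $s\le y^{M}$ by the peeling bound. There are $O(\log y)$ of them, and the resulting factor $\log y$ is absorbed into $y^{-\beta}$.
\item For $s>y^{M}$, replace the fixed threshold by a time-uniform boundary of law-of-iterated-logarithm type, e.g.\ Ville's inequality applied to a mixture supermartingale. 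This bounds $\Pp\{\exists s>y^M: S_i(s)\le -\sqrt{\rho s\log y}\}$ by $y^{-\rho/(2\sigma^2)+o(1)}$ uniformly in $T$, and avoids summing over all $\log T$ blocks.
\end{itemize}
If this fails, the fallback is to use that on the good event $H$ the optimal arm's count satisfies $s\gtrsim t\ge y$. This restricts $s$ to a range of $O(\log(T/y))$ blocks whose thresholds grow, and makes the tail summable.

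\emph{Step 4: the aggregate bound.} For the sum over suboptimal arms, note that $\sum_{j\notin\mathcal O}N_j(T)\ge y$ implies $N_j(T)\ge y/K$ for some $j$. Apply the single-arm bound with $y/K$, after enlarging $L$ so that $y/K\ge Ln_{j,T}$ for every $j$, and take a union over the at most $K$ arms. This only changes the constants $C$ and $c$.
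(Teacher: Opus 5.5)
\textbf{There is a genuine gap in Step 3.} Steps 1, 2 and 4 match the paper's argument: the index comparison at the pull that takes arm $j$ to count about $y$, the sub-Gaussian bound for the suboptimal arm, and the union over suboptimal arms. The gap is created when you write event (b). If (a) fails, the comparison actually gives $\bar X_i(s)+f_t/\sqrt s<\mu_j+\Delta_j/4+\Delta_j/2=\mu^\star-\Delta_j/4$, that is, $-\bar Y_i(s)>\Delta_j/4+f_t/\sqrt s$. You discard this $\Delta_j/4$ margin. Then, in passing to $\mathcal B_y$, you also discard the growth of the threshold, although trivially $t>k+s$ and hence $f_t\ge f_{\max(k,s)}$.

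The event $\mathcal B_y$ that remains is too large, so Step 3 cannot be completed as set up. For fixed $y$, $\Pp(\mathcal B_y)\to1$ as $T\to\infty$, because $\liminf_s S_i(s)/\sqrt s=-\infty$ almost surely. In the lemma's regime for \texttt{UCB1}, $y\asymp n_{j,T}\asymp\log T$. There, for Gaussian rewards, the normalized walk gets order $\log T$ roughly independent chances to cross the fixed level. So $\Pp(\mathcal B_y)$ is of order $\log T\cdot y^{-\rho/(2\sigma^2)}$ up to factors polylogarithmic in $y$, i.e.\ $y^{1-\rho/(2\sigma^2)+o(1)}$. This is not $O(y^{-\beta})$ once $\beta>\rho/(2\sigma^2)-1$, which covers every admissible $\beta$ when $\rho<4\sigma^2$.

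Your two proposed repairs do not fix this. The LIL/mixture repair asserts a false bound. The fixed boundary $-\sqrt{\rho s\log y}$ eventually lies above the LIL envelope. Even using $T\le e^{Cy}$, a time-uniform bound yields only $y^{1-\rho/(2\sigma^2)+o(1)}$, losing exactly the factor $y$ you wanted to avoid. The fallback via $H$ is not sound either:
\begin{itemize}
\item $H$ is an event about time $T$, so it gives no $s\gtrsim t$ at an intermediate decision time.
\item It gives nothing for a particular optimal arm $i$ when $|\mathcal O|\ge2$.
\item In the paper, $\Pp(H^c)$ is itself bounded by this lemma, so using $H$ here is circular.
\end{itemize}

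\textbf{The missing idea is to keep what the comparison gives you.} The paper chooses $L$ with $f_T/\sqrt y\le\Delta_j/4$ and obtains $-\bar Y_i(q)\ge\Delta_j/2+f_\tau/\sqrt q$ with $\tau\ge y+q$. For each deterministic $q$, the inequality $(u+v)^2\ge u^2+v^2$ and sub-Gaussianity give probability at most $e^{-cq}(y+q)^{-\rho/(2\sigma^2)}$. The factor $e^{-cq}$, supplied by the gap, makes a plain union bound over all $q\le T$ cost only a constant. This yields $Cy^{-\beta}$ with no peeling and no dependence on $T$.

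Alternatively, you can drop the margin but keep $f_t\ge f_{\max(k,s)}$ and peel with ratio $\eta$ close to $1$. The $O(\log k)$ blocks below $k$ each contribute at most $k^{-\rho/(2\sigma^2\eta)}$. A block $[\eta^\ell,\eta^{\ell+1})$ above $k$ contributes at most $\eta^{-\ell\rho/(2\sigma^2\eta)}$, a geometric series. The total is again $Ck^{-\beta}$ uniformly in $T$.
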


\begin{lemma}[Suboptimal arm lower tail]\label{lem:ucb-suboptimal-lower-tail}
  For any fixed $j\notin\mathcal O$, there exist constants $C,c ,L_1>0$ such that
  \begin{equation*}
  \Pp\{N_j(T)< L_1n_{j,T}\}\le Cn_{j,T}\exp\{-cf_T^2\}+CT^{-\beta},
\end{equation*}
where $ 1< \beta <\rho/(2\sigma^2)$. 
\end{lemma}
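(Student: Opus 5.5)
The plan is to show that if arm $j\notin\mathcal O$ has been pulled fewer than $L_1 n_{j,T}$ times, then during the late window $t\in[T/2,T]$ its UCB index is far above $\mu^\star$. Some other, heavily pulled arm must nevertheless have beaten it, and that can only happen through a large deviation of one of the two sample means. Throughout I use $n_{j,T}=f_T^2/\Delta_j^2\,(1+o(1))$ from Lemma~\ref{lem:verification of conditons1-2 for ucb1}, and $f_t/f_T\to1$ uniformly for $t\in[T/2,T]$ when $f_t=\sqrt{\rho\log t}$. I choose $L_1$ small enough that $\Delta_j/\sqrt{L_1}\ge 4(\Delta_j+1)$, say.

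\textbf{Step 1 (a heavy competitor).} In the window $[T/2,T]$ there are $T/2$ pulls, so some arm $i$ receives at least $T/(2K)$ of them. Let $t^\ast\ge T/2$ be the last time in the window at which $i$ is pulled. Then $N_i(t^\ast-1)\ge T/(2K)-1$, and the index of $i$ at $t^\ast$ is at most $\bar X_i(N_i(t^\ast-1))+f_T\sqrt{2K/T}(1+o(1))$. Since $\mu_i\le\mu^\star$, this is at most $\mu^\star+1/2$ unless $\sup_{k\ge T/(2K)-1}\bar Y_i(k)\ge 1/4$. Arm $i$ is random, so I take a union bound over the $K$ possible arms. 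For each arm, Lemma~\ref{lem:maximal inequality} applied on dyadic blocks $k\in[2^r,2^{r+1}]$ with $2^r\gtrsim T/K$, together with sub-Gaussianity, bounds the probability of this upward deviation by $Ce^{-cT}\le CT^{-\beta}$.

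\textbf{Step 2 (arm $j$ must lose at $t^\ast$).} On $\{N_j(T)<L_1n_{j,T}\}$, at time $t^\ast$ arm $j$ has some count $s=N_j(t^\ast-1)\in[1,L_1n_{j,T})$. Since $i$ was chosen, $\bar X_j(s)+f_{t^\ast}/\sqrt s\le$ index of $i\le\mu^\star+1/2$ off the bad event of Step 1. By the choice of $L_1$, $f_{t^\ast}/\sqrt s\ge (1-o(1))\,f_T/\sqrt{L_1n_{j,T}}\ge 4(\Delta_j+1)(1-o(1))$. Hence $\bar Y_j(s)\le \Delta_j+1/2-f_{t^\ast}/\sqrt s\le -f_T/(2\sqrt s)$ for large $T$. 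Chernoff gives $\Pp(\bar Y_j(s)\le -f_T/(2\sqrt s))\le\exp\{-f_T^2/(8\sigma^2)\}$ for each fixed $s$. A union bound over $s\le L_1 n_{j,T}$ gives $Cn_{j,T}\exp\{-cf_T^2\}$.

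Combining the two steps yields the claimed bound $Cn_{j,T}e^{-cf_T^2}+CT^{-\beta}$; here any $\beta$ works, since the $T$-term is actually exponential.

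The main obstacle is the bookkeeping in Step 2. The comparison happens at the random time $t^\ast$ with the random count $s$, so I must avoid conditioning on $t^\ast$. I handle this by bounding the event $\{\exists s\le L_1n_{j,T}:\bar Y_j(s)\le -f_T/(2\sqrt s)\}$, which depends only on arm $j$'s reward stream and not on timing, and using $\min_{t\in[T/2,T]}f_t\ge(1-o(1))f_T$ deterministically. A secondary care point is the initialization $s\ge1$ and rounding in $n_{j,T}$, which only affect constants.
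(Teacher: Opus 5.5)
Your proof is correct, but it produces the heavily pulled competitor differently from the paper.

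\textbf{The paper's route.} The paper works on the event $H=\{\sum_{r\notin\mathcal O}N_r(T)\le T/4\}$, so that some optimal arm $i$ reaches the fixed count $q=3T/(4m)$. It then takes the decision time $\tau$ at which $i$ is pulled with $N_i(\tau-1)=q$ exactly. This makes the competitor's deviation the fixed-index event $\{\bar Y_i(q)\ge c f_T/\sqrt{\ell}\}$. It pays $\Pp(H^c)\le CT^{-\beta}$ by importing the upper-tail Lemma~\ref{lem:ucb-suboptimal-upper-tail}; the text cites the present lemma there, evidently a typo. This is where the polynomial term and the restriction $\beta<\rho/(2\sigma^2)$ come from.

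\textbf{Your route.} You pigeonhole over all $K$ arms in the late window $[T/2,T]$. You observe correctly that the competitor need not be optimal, since only $\mu_i\le\mu^\star$ enters the index comparison. You handle the random count $N_i(t^\ast-1)$ with the maximal inequality on dyadic blocks. Restricting to $t^\ast\ge T/2$ also gives $f_{t^\ast}/f_T\to1$ directly, whereas the paper gets this from $\tau\ge q\asymp T$.

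\textbf{Shared core.} The inflated bonus $f_T/\sqrt{\ell}\gtrsim\Delta_j/\sqrt{L_1}$ forces a downward deviation $\bar Y_j(\ell)\le -c f_T/\sqrt{\ell}$, which is then union-bounded over $\ell< L_1 n_{j,T}$. This step is the same in both arguments.

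\textbf{What each buys.} Your version is self-contained: it needs neither the upper-tail lemma nor $\rho>2\sigma^2$. It also gives the stronger bound $Ce^{-cT}+Cn_{j,T}e^{-cf_T^2}$, so the statement holds for every $\beta$. The paper's version reuses the event $H$, which its tracking lemmas need anyway.

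One detail worth stating explicitly: on $\{N_j(T)<L_1n_{j,T}\}$, the pigeonholed arm satisfies $i\neq j$ for large $T$, because $L_1n_{j,T}\asymp f_T^2=o(T)$.
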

\begin{lemma}[Suboptimal arm tracking]\label{lem:ucb-suboptimal-witness}
For any fixed $j\notin\mathcal O$, taking $h = \sqrt{n_{j,T}} \log n_{j,T}$ for event $A_h$, there exists constant $C$ such that for sufficiently large $T$,
\begin{align*}
    \Pp(\mathcal{R}_j,A_h,H,U,|N_j(T)-\tilde n_{j,T}| > n_{j,T}^{1/4}\log n_{j,T}) \leq C(\log n_{j,T})n_{j,T}^{-\beta}
\end{align*}
\end{lemma}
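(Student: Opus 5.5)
The plan is to compare the realized \texttt{UCB} indices at the horizon with the equalization equation \eqref{eq:second-order-fluid} that defines $\tilde n_{j,T}$. Then I convert an index discrepancy into a pull-count discrepancy by inverting $n\mapsto f_T/\sqrt n$ near $n_{j,T}$. Since $n_{j,T}\asymp f_T^2/\Delta_j^2$ (Lemma~\ref{lem:verification of conditons1-2 for ucb1}), the slope there is
\[
\tfrac12 f_T n_{j,T}^{-3/2}\asymp \Delta_j/n_{j,T}.
\]
So an index error of size $\eta$ corresponds to a count error of order $\eta\, n_{j,T}/\Delta_j$. The target $|N_j(T)-\tilde n_{j,T}|\le n_{j,T}^{1/4}\log n_{j,T}$ therefore reduces to showing that, on $\mathcal R_j\cap A_h\cap H\cap U$ and outside an event of probability $C(\log n_{j,T})n_{j,T}^{-\beta}$, the realized index of arm $j$ at time $T$ lies within $\eta_T:=c\,n_{j,T}^{-3/4}\log n_{j,T}$ of $\tilde\lambda_T$.

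\textbf{Step 1: approximate equalization of realized indices.} For every arm $a$, let $\tau_a\le T$ be its last pull time. Since the arm pulled at $\tau_a$ had the largest index, the index of $a$ at time $T$ is bounded in terms of the maximal index at $\tau_a$. It is perturbed by three effects.
\begin{itemize}
\item The one-pull jump $f_T/\sqrt{N}-f_T/\sqrt{N+1}\asymp f_T N^{-3/2}$. This is $O(1/n_{j,T})$ on $\mathcal R_j$, and $O(f_T/T^{3/2})$ for the optimal arms on $H$.
\item The drift $f_T-f_{\tau_a}$. For arm $j$, the gap $T-\tau_j$ is at most of order $T/n_{j,T}$ on $\mathcal R_j\cap H$. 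Since $f_t'\asymp 1/(t f_t)$, the drift is $O(1/(f_T n_{j,T}))$. Optimal arms are pulled every $O(1)$ steps on $H$, so their drift is smaller still.
\item The change in the maximal index over $[\tau_a,T]$. This is controlled by the same two bounds.
\end{itemize}
Combining these gives $\max_a I_T(a)-\min_a I_T(a)=O(n_{j,T}^{-1}\log n_{j,T})$, which is well below $\eta_T$.

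\textbf{Step 2: replacing random-time sample means by fluid-time sample means.} I next need $|\bar X_a(N_a(T))-\bar X_a(n_{a,T})|\le \eta_T/4$ for all $a$, so that the realized equalization matches \eqref{eq:second-order-fluid} up to $O(\eta_T)$.
\begin{itemize}
\item For the optimal arms, $N_r(T)-n_{r,T}$ is a sum of suboptimal deviations. Lemma~\ref{lem:local-modulus} then gives fluctuations of order $T^{-3/4}$ up to logs, which is negligible. The event $U$ keeps $\tilde\lambda_T$ pinned by these arms.
\item For arm $j$ this is the main obstacle. A priori, $\mathcal R_j$ only localizes $N_j(T)$ within a factor $\log n_{j,T}$ of $n_{j,T}$, which is too crude for Lemma~\ref{lem:local-modulus}. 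I would bootstrap.
\end{itemize}
The bootstrap goes as follows. First, use Step 1 together with a crude sub-Gaussian bound, uniform over the range $n_{j,T}/\log n_{j,T}\le k\le n_{j,T}\log n_{j,T}$ via Lemma~\ref{lem:maximal inequality}. This gives $|N_j(T)-\tilde n_{j,T}|\lesssim \sqrt{n_{j,T}}\log n_{j,T}$. Combined with $A_h$ (where $h=\sqrt{n_{j,T}}\log n_{j,T}$), this yields $|N_j(T)-n_{j,T}|\lesssim \sqrt{n_{j,T}}\log n_{j,T}$. Second, apply Lemma~\ref{lem:local-modulus} with $H=2\sqrt{n_{j,T}}\log n_{j,T}$ and $x=\eta_T/4$. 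The exponent is then
\[
\frac{n^2x^2}{H}\asymp \frac{n^{1/2}(\log n)^2}{\sqrt n\log n}=\log n_{j,T}.
\]
Choosing the constant $c$ in $\eta_T$ large enough makes the tail at most $n_{j,T}^{-\beta}$.

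\textbf{Step 3: inversion and union bound.} On the resulting event, arm $j$ satisfies $\bar X_j(n_{j,T})+f_T/\sqrt{N_j(T)}=\tilde\lambda_T+O(\eta_T)$. This is the defining equation of $\tilde n_{j,T}$ perturbed by $O(\eta_T)$. By the mean value theorem, with derivative ratio close to one on the box as in Condition~\ref{ass:index-regularity}, it follows that $|N_j(T)-\tilde n_{j,T}|\lesssim \eta_T n_{j,T}/\Delta_j\asymp n_{j,T}^{1/4}\log n_{j,T}$. The failure probability comes from a union over the $O(\log n_{j,T})$ dyadic blocks of candidate values of $N_j(T)$ within $\mathcal R_j$, each contributing at most $Cn_{j,T}^{-\beta}$. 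This produces the bound $C(\log n_{j,T})n_{j,T}^{-\beta}$.
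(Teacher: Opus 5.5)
\textbf{There is a genuine gap in Step 1.} Your architecture differs from the paper's. You equalize the realized indices at the horizon, localize $N_j(T)$ in two stages, and invert; the paper instead uses a witness time at which $N_j$ crosses $\tilde n_{j,T}\pm d$. That architecture can work, but Step~1 does not follow from $\mathcal R_j\cap A_h\cap H\cap U$. What is missing is exactly the probabilistic input that produces the rate.

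The lower bound on arm $j$'s final index must pass through its last selection time $\tau_j$, where $j$ beat an optimal arm $i$ with count $q=N_i(\tau_j-1)$. To conclude that $j$'s final index is at least $\mu^\star$ minus jump terms, you need $\bar Y_i(q)+f_{\tau_j}/\sqrt q\ge 0$, i.e.\ the optimal arm's upper confidence bound must be valid at that random time. None of the conditioning events guarantees this, since nothing forces $q$ to be large. When it fails, $j$ can win with an index well below $\mu^\star$, so your ``change in the maximal index'' bullet, and with it $\max_aI_T(a)-\min_aI_T(a)=O(n_{j,T}^{-1}\log n_{j,T})$, is unjustified.

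This is the paper's second event $\{-\bar Y_i(q)\ge f_\tau/\sqrt q\}$. The paper peels $q$ into blocks $[u,(1+b)u]$, uses $\tau\ge q+n_{j,T}/3$ and $\rho/(2\sigma^2(1+b))>1$ with the maximal inequality, and sums the blocks to get $C(\log n_{j,T})n_{j,T}^{-\beta}$. In your write-up $\beta$ never genuinely appears. Every tail you use (the crude uniform bound, the local-modulus bound, the jump bound) has an exponent you can make as large as you like. So attributing $n_{j,T}^{-\beta}$ to dyadic blocks of $N_j(T)$ in Step~3 is not backed by any estimate.

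Some supporting claims in Step~1 are also wrong:
\begin{itemize}
\item Under \texttt{UCB1}, $n_{j,t}\approx\rho\log t/\Delta_j^2$, so arm $j$ receives $O(1)$ pulls in $[T/2,T]$. Hence $T-\tau_j$ is typically of order $T$, not $T/n_{j,T}$.
\item $H$ does not force each optimal arm to be pulled every $O(1)$ steps.
\item When $|\mathcal O|\ge2$, $N_r(T)-n_{r,T}$ is not a sum of suboptimal deviations.
\end{itemize}

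\textbf{How to repair it within your framework.} Split by the sign of $N_j(T)-\tilde n_{j,T}$.
\begin{itemize}
\item When $N_j(T)>\tilde n_{j,T}+d$, only the lower bound on $j$'s final index is needed. The drift $f_T-f_{\tau_j}$ then enters with a favourable sign, so no control of $T-\tau_j$ is required. Moreover $\tau_j\ge q+N_j(T)\ge q+n_{j,T}/3$ on $A_h$, so the peeling bound above applies verbatim. This covers both your crude and fine localization stages.
\item When $N_j(T)<\tilde n_{j,T}-d$, only the upper bound is needed. It comes from a decision at a time $\tau\ge cT$ at which an optimal arm with count of order $T$ was preferred to $j$. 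The paper takes the time that arm reaches $3T/(4m)$ and uses $f_\tau/f_T=1+O(1/\log T)$; only exponentially small deviations enter there.
\end{itemize}

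\textbf{On the constant in $\eta_T$.} Inflating it is genuinely necessary. With $d=n_{j,T}^{1/4}\log n_{j,T}$ and $h=\sqrt{n_{j,T}}\log n_{j,T}$, the exponent in the local-modulus lemma is $f_T^2d^2/(n_{j,T}h)\asymp\log n_{j,T}$, not the $(\log n_{j,T})^3$ stated in the paper. So either argument really proves the statement with threshold $C\,n_{j,T}^{1/4}\log n_{j,T}$ for a large $C$, which is harmless for the subsequent $L^p$ bounds.
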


\begin{lemma}[Verification of Condition~\ref{ass:concentration} for suboptimal arms]\label{lem:verification of conditon3 for ucb1 suboptimal arm}
For any fixed suboptimal arm $j\notin\mathcal O$ , $s>1$ and $1<p<\beta$,
    \begin{align*}
    &\Gamma_{j,T}\norm{\frac{\tilde n_{j,T}-n_{j,T}}{n_{j,T}}}_s=O(1),
    &\Gamma_{j,T}\norm{\frac{N_j(T)-\tilde n_{j,T}}{n_{j,T}}}_p=o(1),
    \end{align*}
where $1 < \beta < \rho/(2\sigma^2)$. Moreover, for any $r>1$,
    \begin{align*}
    \sqrt{n_{j,T}}\norm{\bar X_j(N_j(T))-\bar X_j(n_{j,T})}_r=o(1).
    \end{align*}
\end{lemma}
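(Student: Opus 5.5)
The plan is to prove the three estimates one at a time from the tail lemmas just stated. Throughout, $\Gamma_{j,T}=f_T$ and $n_{j,T}\asymp f_T^2$ (Lemma~\ref{lem:verification of conditons1-2 for ucb1}), and $\sqrt{\rho\log T}\le f_T$ with $n_{j,T}\gtrsim \log T$.

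\emph{First bound.} We integrate the tail of Lemma~\ref{lem:Tail bound of fluid approximation}:
\[
\mathbb E\Big|\tfrac{f_T(\tilde n_{j,T}-n_{j,T})}{n_{j,T}}\Big|^s=\int_0^\infty s y^{s-1}\Pp(\cdot\ge y)\,dy .
\]
The $Ce^{-cy^2}$ part gives $O(1)$. The $Ce^{-cf_T^2}$ part is not integrable in $y$ on its own, so we truncate at $y\le y_{\max}$. Here $y_{\max}$ is the deterministic range of the normalized quantity: $\tilde n_{j,T}\in[0,T]$, so $y_{\max}\le f_T T/n_{j,T}$. This part contributes at most $y_{\max}^s e^{-cf_T^2}$, which is polynomial in $T$ times $T^{-c\rho}$. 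If $c\rho$ is not large enough to absorb the polynomial factor, I would re-derive Lemma~\ref{lem:Tail bound of fluid approximation} with the deviation $\bar Y$ measured at scale $y/\sqrt{n_{j,T}}$. This yields a Gaussian tail in $y$ all the way up to $y_{\max}$, which is what the lemma's proof actually gives.

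\emph{Second bound.} We split on the good event $G=\mathcal R_j\cap A_h\cap H\cap U$ with $h=\sqrt{n_{j,T}}\log n_{j,T}$, and on $\{|N_j(T)-\tilde n_{j,T}|\le n_{j,T}^{1/4}\log n_{j,T}\}$. On the intersection,
\[
f_T\,\frac{|N_j(T)-\tilde n_{j,T}|}{n_{j,T}}\lesssim f_T\, n_{j,T}^{-3/4}\log n_{j,T}\asymp f_T^{-1/2}\log f_T\to0 .
\]
On its complement within $G$, Lemma~\ref{lem:ucb-suboptimal-witness} bounds the probability by $C\log n_{j,T}\,n_{j,T}^{-\beta}$. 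On $G$ the ratio is at most $f_T\log n_{j,T}$, so the $L^p$ contribution is $(f_T\log n)^p n^{-\beta}\log n\to0$ since $n\asymp f_T^2$ and $\beta\ge1$. The complements of the good events are handled by tails. For $\mathcal R_j^c$ we use Lemmas~\ref{lem:ucb-suboptimal-upper-tail} and \ref{lem:ucb-suboptimal-lower-tail}. For the upper part we integrate the polynomial tail $y^{-\beta}$ against $p\,y^{p-1}(f_T/n_{j,T})^p$; this converges only because $p<\beta$, and that is exactly where the restriction $p<\beta$ enters. For $A_h^c$ and $U^c$ we use Lemma~\ref{lem:Tail bound of fluid approximation} with $y=h f_T/n_{j,T}\asymp\log n_{j,T}$. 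For $H^c$ we use the aggregate upper tail. In each case Hölder combines the moment of $\tilde n$ (from the first bound) with the small probability. I expect this step to be the main obstacle: making the polynomial tails on $N_j(T)$ and the Hölder bookkeeping yield $o(1)$ rather than merely $O(1)$.

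\emph{Third bound.} We set $H_n=|N_j(T)-n_{j,T}|$. On $\mathcal R'=\{|N_j(T)-n_{j,T}|\le n_{j,T}/f_T^{1-\epsilon}\}$, Lemma~\ref{lem:local-modulus} with $H\asymp n/f_T^{1-\epsilon}$ gives
\[
\sqrt n\,\Omega=O_{L^r}\big(\sqrt{H/n}\big)=O_{L^r}\big(f_T^{-(1-\epsilon)/2}\big)\to0 ,
\]
with sub-Gaussian tails so that all $r$-moments are controlled. By the first two bounds and Markov, $\mathcal R'^c$ has probability $o(1)$, and in fact polynomially small by the tail lemmas. There we bound $\sqrt n|\bar X_j(N)-\bar X_j(n)|$ by $\sqrt n\,(|\bar Y_j(N)|+|\bar Y_j(n)|)$, whose $L^{2r}$ norm is $O(\sqrt{\log T})$ by a maximal sub-Gaussian bound over $N\ge n/\log n$ (Lemma~\ref{lem:maximal inequality} with a union over dyadic blocks). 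Cauchy--Schwarz then gives a vanishing contribution as long as $\Pp(\mathcal R'^c)$ decays faster than any power of $\log T$. This holds because the tails are of order $\exp(-c f_T^{2\epsilon})+n^{-\beta}$.
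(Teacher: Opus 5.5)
Your route is the paper's. For the second bound you use the same good events $\mathcal R_j,A_h,H,U$ and $D_j$, and $p<\beta$ enters through the layer-cake integral of the $y^{-\beta}$ tail. For the third bound you use the same local-modulus split. The second bound goes through as you describe, but two other steps fail as written.

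\emph{First bound.} You are right that the $Ce^{-cf_T^2}$ term in Lemma~\ref{lem:Tail bound of fluid approximation} is not integrable in $y$. The paper's ``directly obtain'' passes over exactly this, so its proof has the same hole. Your fallback, however, is false: the tail of $y=f_T|\tilde n_{j,T}-n_{j,T}|/n_{j,T}$ is Gaussian only for $y\lesssim f_T$. Since $\tilde n_{j,T}=f_T^2/(\tillam_T-\mu_j-\bar Y_{j,T})^2$ blows up as $\bar Y_{j,T}$ approaches roughly $\Delta_j$, an $O(1)$ deviation of $\bar Y_{j,T}$ already pushes $y$ to order $T/f_T$.

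Concretely, consider the event $\{\bar X_j(n_{j,T})>\max_{a\neq j}\bar X_a(n_{a,T})\}$. On it, arm $j$ has the largest empirical fluid count, so $\tilde n_{j,T}\ge T/K$. For Gaussian rewards this event has probability $T^{-\rho/(2\sigma_j^2)+o(1)}$. Hence $\E[y^s]\gtrsim (T/f_T)^s\,T^{-\rho/(2\sigma_j^2)+o(1)}\to\infty$ for $s>\rho/(2\sigma_j^2)$. So the bound ``for any $s>1$'' cannot be proved by any re-derivation.

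What you can prove is the range $1<s<\rho/(2\sigma^2)$:
\begin{itemize}
\item On $\{\tilde n_{j,T}\le Kn_{j,T}\}$ one has $y\le Kf_T$, so the $e^{-cf_T^2}$ term contributes only $(Kf_T)^se^{-cf_T^2}\to0$.
\item On $\{\tilde n_{j,T}>Kn_{j,T}\}$, the defining equations force $\bar Y_{j,T}-\bar Y_{i,T}>\Delta_j(1-K^{-1/2})-o(1)$ for an optimal $i$. This event has probability $T^{-\frac{\rho}{2\sigma^2}(1-K^{-1/2})^2+o(1)}$, which beats $y^s\lesssim T^s$ once $K$ is large.
\end{itemize}
This range suffices for Condition~\ref{ass:concentration}, which only needs some $s>1$. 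It also suffices for your H\"older steps in the second bound, provided you pair $L^p$ with moments of order $pa$ for some $a>1$ close to $1$. That is possible because $p<\beta<\rho/(2\sigma^2)$.

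\emph{Third bound.} You claim $\Pp((\mathcal R')^c)$ decays faster than every power of $\log T$. Your own estimate contradicts this: $n_{j,T}^{-\beta}\asymp(\log T)^{-\beta}$ is a fixed power of $\log T$. This is not an artifact of the lemmas, since under UCB the upper tail of $N_j(T)$ is genuinely polynomial (compare the $y^{-\beta}$ term in Lemma~\ref{lem:ucb-suboptimal-upper-tail}). With your $O(\sqrt{\log T})$ bound on the $L^{2r}$ norm, Cauchy--Schwarz gives only $(\log T)^{1/2-\beta/(2r)}$. That does not vanish for $r\ge\beta$, whereas the lemma claims every $r>1$.

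The repair is to sharpen the maximal bound rather than the probability:
\begin{itemize}
\item For $k\ge L_1n_{j,T}$, $\sqrt{n_{j,T}}\,|\bar Y_j(k)|\le L_1^{-1/2}\sup_{k\le T}\sqrt k\,|\bar Y_j(k)|$. By Lemma~\ref{lem:maximal inequality} and a union over $O(\log T)$ dyadic blocks, this supremum is $O_{L^q}(\sqrt{\log\log T})$ for every $q$. Its product with $\Pp((\mathcal R')^c)^{1/(2r)}$ therefore vanishes for all $r$.
\item On $\{N_j(T)<L_1n_{j,T}\}$, Lemma~\ref{lem:ucb-suboptimal-lower-tail} gives probability polynomially small in $T$. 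The crude bound $O(\sqrt{n_{j,T}})$ suffices there.
\end{itemize}
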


\begin{lemma}[Optimal low-count and imbalance events]\label{lem:ucb-opt-low-imbalance}
Assume $m\ge2$, there exist constant $C>0$ and small enough $\kappa > 0$ such that
\begin{equation*}
    B_\kappa =\Bigl\{\min_{i\in O}N_i(T) \ge \kappa T,\quad \sum_{j \notin \mathcal{O}}N_j(T) \le T/f_T^2\Bigr\}
\end{equation*}
satisfies
\begin{equation*}
    \Pp(B_\kappa^c)\le C(\log T)^2 T^{-\beta},
\end{equation*}
where $ 1< \beta <\rho/(2\sigma^2)$. Furthermore, there exist constants $C, c>0, 0 < \eta < 1/4$, such that the imbalanced optimal-selection event
\begin{equation*}
W_\eta=\{\exists t\le T,\ \exists r,s\in O:\ A_t=r,\ N_r(t-1)\ge4\eta T,\ N_s(t-1)\le\eta T\}
\end{equation*}
satisfies
\begin{equation*}
\Pp(W_\eta)\le C \log T\,T^{-c}.
\end{equation*}
\end{lemma}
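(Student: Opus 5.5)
We prove the two claims of Lemma~\ref{lem:ucb-opt-low-imbalance} separately. Both rest on the suboptimal-arm tail bounds already established.

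\emph{The event $B_\kappa$.} We first bound the total suboptimal count. Lemma~\ref{lem:ucb-suboptimal-upper-tail} gives
\[
\Pp\Bigl\{\textstyle\sum_{j\notin\mathcal O}N_j(T)\ge y\Bigr\}\le Ce^{-cy}+Cy^{-\beta}
\quad\text{for } Ln_{j,T}\le y\le T .
\]
Take $y=T/f_T^2$. Since $n_{j,T}\asymp f_T^2=\rho\log T$ by Lemma~\ref{lem:verification of conditons1-2 for ucb1}, this $y$ lies in the admissible range. The resulting probability is $O((T/\log T)^{-\beta})=O((\log T)^{\beta}T^{-\beta})$. The stated bound $(\log T)^2T^{-\beta}$ then requires either $\beta\le 2$ or a slight decrease of $\beta$ inside $(1,\rho/(2\sigma^2))$, which costs nothing.

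On the complementary event, the optimal arms share at least $T(1-1/f_T^2)$ pulls. It remains to show that no optimal arm $i$ is starved, i.e.\ $N_i(T)\ge\kappa T$. Some other optimal arm $r$ then has $N_r(T)\ge T/(2m)$, so it was pulled at some time $t$ with $N_r(t-1)\ge T/(4m)$ while $N_i(t-1)\le \kappa T$. At that time the index comparison forces
\[
\bar X_r(N_r(t-1))+\frac{f_t}{\sqrt{N_r(t-1)}}\;\ge\; \bar X_i(N_i(t-1))+\frac{f_t}{\sqrt{N_i(t-1)}} .
\]
Since $f_t/\sqrt{\kappa T}-f_t/\sqrt{T/(4m)}\ge c f_t/\sqrt{\kappa T}$ for small $\kappa$, this requires either $\bar X_r-\mu^\star\ge c f_t/\sqrt{T}$ or $\mu^\star-\bar X_i(k)\ge c' f_t/\sqrt{k}$ for some $k\le\kappa T$. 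We handle the second event by peeling $k$ over dyadic blocks and applying Lemma~\ref{lem:maximal inequality}. With $\rho>2\sigma^2\beta$, each block contributes $t^{-\beta}$. Summing over the $O(\log T)$ blocks and, for $t$, over a geometric grid gives $(\log T)^2T^{-\beta}$. This is the standard UCB optimal-arm underestimation bound.

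\emph{The event $W_\eta$.} We use the same index comparison but exploit the stronger $4{:}1$ ratio. On $W_\eta$ some optimal $r$ with $N_r(t-1)\ge 4\eta T$ beats an optimal $s$ with $N_s(t-1)\le\eta T$. The bonus gap is then
\[
f_t\Bigl(\frac{1}{\sqrt{\eta T}}-\frac{1}{2\sqrt{\eta T}}\Bigr)=\frac{f_t}{2\sqrt{\eta T}} .
\]
Because $N_r(t-1)\ge4\eta T$ forces $t\ge 4\eta T$, we also have $f_t\asymp\sqrt{\log T}$. Hence $W_\eta$ implies one of two deviations:
\begin{itemize}
\item[(i)] $\sup_{k\ge 4\eta T}\bigl(\bar X_r(k)-\mu^\star\bigr)\ge \sqrt{\rho\log T}/(8\sqrt{\eta T})$;
\item[(ii)] $\sup_{k\le \eta T}\sqrt{k}\,\bigl(\mu^\star-\bar X_s(k)\bigr)\ge c\sqrt{\rho\log T}$.
\end{itemize}
Event (i) has probability at most $\exp(-c\rho\log T/\sigma^2)=T^{-c}$. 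For (ii), peeling over $k$ yields $O(\log T)\,T^{-c}$; the one block with small $k$ is bounded using $f_t\ge f_{4\eta T}$. Taking a union over the finitely many pairs $(r,s)$ gives $\Pp(W_\eta)\le C\log T\,T^{-c}$.

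\emph{Main obstacle.} The delicate step is the peeling argument in (ii) and in the starvation argument. The threshold must scale like $f_t/\sqrt{k}$, uniformly over both $k$ and $t$, and the exponent must remain strictly above $1$ (namely $\beta$) in the $B_\kappa$ part. This is where the condition $\rho>2\sigma^2\beta$ is consumed. I would first apply Lemma~\ref{lem:maximal inequality} blockwise with blocks $k\in[2^\ell,2^{\ell+1})$ and a threshold $c f_t\,2^{\ell/2}$. I would then absorb the constant loss in the exponent by choosing $\kappa$ and $\eta$ small.
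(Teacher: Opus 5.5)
Your architecture matches the paper's. For $B_\kappa$ you take a witness time at which a heavily pulled optimal arm $r$ beats the starved arm $i$, split the bonus gap between an overestimate of $r$ and an underestimate of $i$, peel, and apply Lemma~\ref{lem:maximal inequality}. The $W_\eta$ part needs only some $T^{-c}$, so dyadic peeling is adequate there and it goes through as you describe. Your remark that the suboptimal tail gives $(\log T)^{\beta}T^{-\beta}$, repaired by invoking Lemma~\ref{lem:ucb-suboptimal-upper-tail} with a slightly larger exponent, is correct.

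The gap is in the starvation bound. Its whole content is the nearly sharp exponent: $T^{-\beta}$ for \emph{every} $\beta<\rho/(2\sigma^2)$. Dyadic peeling cannot deliver this. On a block $k\in[2^\ell,2^{\ell+1})$ with threshold $c' f_t 2^{\ell/2}$, Lemma~\ref{lem:maximal inequality} gives $\exp\{-c'^2 f_t^2 2^{\ell}/(2\sigma^2 2^{\ell+1})\}=\exp\{-c'^2 f_t^2/(4\sigma^2)\}$, which is about $T^{-c'^2\rho/(4\sigma^2)}$. That is half the needed exponent. Even with $c'\to 1$ you only cover $\beta<\rho/(4\sigma^2)$. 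When $2\sigma^2<\rho\le 4\sigma^2$, which the hypotheses allow, you get no exponent above $1$ at all.

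Shrinking $\kappa$ and $\eta$ does not help. It pushes $c'$ toward $1$, but the factor $2$ comes from the block ratio, not from $\kappa$. The fix, which is what the paper does, is to peel with ratio $1+b$ for small $b>0$. This gives $T^{-c'^2\rho(1-o(1))/(2\sigma^2(1+b))}$ per block, with $O(\log T/b)$ blocks. Take $c'=(1-\theta)(1-\sqrt{4m\kappa})$, where $\theta$ is the split parameter. Choose $\theta$ and $b$ small, then $\kappa$ small, so that $c'^2\rho/(2\sigma^2(1+b))>\beta$; this is the role of $\chi(\phi,b)\to 1$ in the paper.

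Relatedly, your $r$-branch threshold $c f_t/\sqrt{T}$ discards the factor $1/\sqrt{\kappa}$. You must keep $\theta(1-\sqrt{4m\kappa})f_t/\sqrt{\kappa T}$. With $N_r(t-1)\ge T/(4m)$, that branch then has exponent of order $\theta^2\rho/(m\kappa\sigma^2)$. This exceeds $\beta$ only because $\kappa\to0$ with $\theta$ fixed. A $\kappa$-free constant $c$ would give only $T^{-c^2\rho/(8m\sigma^2)}$.
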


\begin{lemma}[Optimal arm tracking]\label{lem:ucb-optimal-witness}
Assume $m \ge 2$. For any $i\in\mathcal O$, fix $\kappa >0$ and $\eta>0$ as in Lemma~\ref{lem:ucb-suboptimal-lower-tail} and ~\ref{lem:ucb-opt-low-imbalance}. On $B_\kappa \cap W_\eta^c \cap U_g$, if 
\begin{equation*}
    |N_i(T)-\tilde n_{i,T}|>d \qquad \frac{T}{f_T} < g < \frac{T}{f_T^{1/4}} \qquad d \ge \frac{\sqrt{n_{i,T}g}}{f_T} + \frac{T}{2f_T^2}
\end{equation*}
there exists constant $c,C>0$ such that for sufficiently large $T$, 
\begin{equation*}
        \Pp\{U_g,\ B_\kappa,W_\eta^c,\ |N_i(T)-\tilde n_{i,T}|>d\}
        \le
        C\exp\left\{-c\frac{f_T^2}{T}d\right\}
\end{equation*}
\end{lemma}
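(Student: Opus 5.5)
The plan is a \emph{last-pull witness} argument combined with the local modulus bound of Lemma~\ref{lem:local-modulus}. By symmetry it suffices to bound the event $\{N_i(T)>\tilde n_{i,T}+d\}$; the event $\{N_i(T)<\tilde n_{i,T}-d\}$ reduces to it, because some other optimal arm must then be over-allocated (see the first step).

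\emph{Step 1: locate a witness arm and a witness time.} On $B_\kappa$ we have $\sum_{j\notin\mathcal O}N_j(T)\le T/f_T^2$. By Lemma~\ref{lem:verification of conditons1-2 for ucb1} and Lemma~\ref{lem:Tail bound of fluid approximation}, $\sum_{j\notin\mathcal O}\tilde n_{j,T}=O(f_T^2)$ with overwhelming probability. Hence
\[
\sum_{r\in\mathcal O}\bigl(N_r(T)-\tilde n_{r,T}\bigr)
\]
is at most $T/(2f_T^2)$ in absolute value, up to lower-order terms. So if $N_i(T)>\tilde n_{i,T}+d$, there is some $r\in\mathcal O\setminus\{i\}$ with
\[
N_r(T)<\tilde n_{r,T}-d',\qquad d'=\frac{d-T/(2f_T^2)}{m-1}\gtrsim d .
\]
This is exactly why the term $T/(2f_T^2)$ appears in the lower bound on $d$. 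Now let $t\le T$ be the last time arm $i$ is pulled. Then $N_i(t-1)\ge \tilde n_{i,T}+d-1$ and $N_r(t-1)\le N_r(T)<\tilde n_{r,T}-d'$. Because $B_\kappa$ gives $N_i(T)\ge\kappa T$, and $W_\eta^c$ forbids pulling $i$ while $N_r(t-1)\le\eta T$, both counts at time $t-1$ are of order $T$. Hence $t\ge cT$, and all counts lie in a window where the derivative of $n\mapsto n^{-1/2}$ is comparable to $T^{-3/2}$.

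\emph{Step 2: compare the actual index inequality with the empirical fluid equation.} At time $t$ arm $i$ was chosen, so
\[
\bar X_i(N_i(t-1))+\frac{f_t}{\sqrt{N_i(t-1)}}\ \ge\ \bar X_r(N_r(t-1))+\frac{f_t}{\sqrt{N_r(t-1)}} .
\]
The definition of $\tilde{\bm n}_T$ gives
\[
\bar X_i(n_{i,T})+\frac{f_T}{\sqrt{\tilde n_{i,T}}}=\bar X_r(n_{r,T})+\frac{f_T}{\sqrt{\tilde n_{r,T}}} .
\]
Linearizing the exploration bonus on the order-$T$ window, the bonus gap in the first display is smaller than in the second by at least $c f_t(d+d')/T^{3/2}\ge c' f_T d/T^{3/2}$. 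This uses $f_t/f_T\ge c$ for $t\ge cT$, which holds for both $f_t=\sqrt{\rho\log t}$ and $f_t=t^\alpha$; since $f_t$ multiplies both bonuses, only the sign of the comparison matters. Consequently, for $a\in\{i,r\}$, the sample-mean discrepancy
\[
\bar X_a(N_a(t-1))-\bar X_a(n_{a,T})
\]
must be at least $c f_T d/T^{3/2}$ in absolute value. We split this discrepancy through $\bar X_a(\tilde n_{a,T})$. On $U_g$ the piece $|\bar X_a(\tilde n_{a,T})-\bar X_a(n_{a,T})|$ is a fluctuation over a range of width $g$, so it is typically of size $\sqrt g/T$. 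The hypothesis $d\ge \sqrt{n_{i,T}g}/f_T$ makes this at most a small constant times $f_T d/T^{3/2}$, up to an event of probability $\exp\{-c f_T^2 d^2/(Tg)\}\le \exp\{-c f_T^2 d/T\}$. The remaining piece is a fluctuation of the running mean over a range of width about $d$ (plus the overshoot).

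\emph{Step 3: apply the tail bound.} Apply Lemma~\ref{lem:local-modulus} with $n\asymp T$, $H\asymp d$ and $x=cf_Td/T^{3/2}$. This gives
\[
\exp\Bigl\{-c\,\frac{T^2 x^2}{d}\Bigr\}=\exp\Bigl\{-c\,\frac{f_T^2 d}{T}\Bigr\}.
\]
The witness count $N_i(t-1)$ is random, but the supremum in $\Omega(H)$ already covers all counts in the window, so no union bound over $t$ is needed. A union bound over the finitely many pairs $(i,r)$ and over the two signs only changes constants.

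\emph{Main obstacle.} The delicate part is the bookkeeping in Step~2: the window of counts has width $d$, and it must be kept dominant over the $g$-width mismatch between $n$ and $\tilde n$. If $H$ were of order $g+d$, the exponent $d^2/(g+d)$ would degrade. My first attempt is to separate the two fluctuations exactly as above, treating the $\tilde n$-versus-$n$ piece with its own $g$-scale bound. A secondary check is that the Taylor remainder in $n^{-1/2}$ over a relative window of size $g/T\le f_T^{-1/4}$ is negligible against the main linear term, which the assumption $g<T/f_T^{1/4}$ is meant to guarantee.
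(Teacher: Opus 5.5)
Your architecture (a witness pull of arm $i$, the index inequality compared with the empirical-fluid equation, then Lemma~\ref{lem:local-modulus}) is essentially the paper's. However, the one step meant to improve on the paper's bookkeeping fails.

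In Step~2, apply Lemma~\ref{lem:local-modulus} to $\bar X_a(\tilde n_{a,T})-\bar X_a(n_{a,T})$ on a window of width $g$ at level $x\asymp f_Td/T^{3/2}$. This gives $\exp\{-cf_T^2d^2/(Tg)\}$, which is bounded by $\exp\{-c'f_T^2d/T\}$ only when $d\gtrsim g$. The hypotheses allow $d$ down to about $\sqrt{n_{i,T}g}/f_T\asymp\sqrt{Tg}/f_T$, and this is $o(g)$ because $g>T/f_T$. At that endpoint $f_T^2d^2/(Tg)\asymp n_{i,T}/T\asymp 1/m$, so your bound on the discarded event is not even $o(1)$. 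Likewise $\sqrt{gT}/(f_Td)\asymp\sqrt m$ there, which is not a small constant.

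Splitting through $\bar X_a(\tilde n_{a,T})$ therefore does not remove the $d^2/(g+d)$ degradation you anticipated; it just moves it into the $g$-piece. The paper's route does not help either. It uses the first-passage time $\tau$ with $N_i(\tau-1)=\tilde n_{i,T}+d$, one window of width $g+2D$ around $n_{a,T}$, and dyadic peeling in $D$. It arrives at exactly $C\exp\{-cf_T^2D^2/(T(g+2D))\}$ and then asserts domination by $\exp\{-cf_T^2d/T\}$, which has the same defect.

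The loss is intrinsic, not an artifact of either decomposition. Heuristically, terminal index equalization makes $N_i(T)-\tilde n_{i,T}$ of order $T^{3/2}/f_T$ times the increments $\bar X_a(N_a(T))-\bar X_a(n_{a,T})$. Given $\tilde n_{i,T}-n_{i,T}\approx w>0$, the arm-$i$ increment uses samples beyond $n_{i,T}$, which are independent of $\tilde{\mathbf n}_T$. It therefore contributes a fresh fluctuation of size $\sqrt{Tw}/f_T$. Since $\Pp(\tilde n_{i,T}-n_{i,T}\ge w)\approx\exp\{-c(wf_T/T)^2\}$, taking $w\asymp(Td^2)^{1/3}$ gives probability $\gtrsim\exp\{-cf_T^2(d/T)^{4/3}\}$.

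For instance, $g=T/\sqrt{f_T}$ and $d=T/f_T$ are admissible, with $w\asymp T/f_T^{2/3}<g$. The resulting probability is $\exp\{-cf_T^{2/3}\}$, which is not $O(\exp\{-c'f_T\})$ for any $c'>0$, while $\Pp(B_\kappa^c)+\Pp(W_\eta)=O(T^{-c})$ is negligible. What can actually be proved, by your route with peeling or by the paper's, is $C\exp\{-cf_T^2d^2/(T(g+d))\}$. That suffices for Lemma~\ref{lem:verification of conditon3 for ucb1 optimal arm}: with $g=T/\sqrt{f_T}$ it makes $|N_i(T)-\tilde n_{i,T}|$ sub-Gaussian at scale $\sqrt{Tg}/f_T=T/f_T^{5/4}=o(T/f_T)$ on the good event.

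There are two smaller points. First, at the last pull of $i$, both $N_i(t-1)-\tilde n_{i,T}$ and $\tilde n_{r,T}-N_r(t-1)$ can be much larger than $d$. Step~3 therefore needs peeling over the actual gap $D$ (required deviation $\asymp f_TD/T^{3/2}$), not a single window $H\asymp d$. Second, passing from $f_t$ to $f_T$ is not just a matter of sign. The cross term $(f_T-f_t)(\tilde n_{i,T}^{-1/2}-\tilde n_{r,T}^{-1/2})=O(|f_T-f_t|\,g/T^{3/2})$ must be small against $f_Td/T^{3/2}$. That holds for $f_t=\sqrt{\rho\log t}$, where $f_T-f_t=O(1/f_T)$ for $t\ge cT$. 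It fails for $f_t=t^\alpha$, as the paper's own Poly-UCB remark says.
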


\begin{lemma}[Verification of Condition~\ref{ass:concentration} for optimal arms]\label{lem:verification of conditon3 for ucb1 optimal arm}
Assume $m \ge 2$.  For each $i\in \mathcal{O}$ and any constants $s, p>1$,
    \begin{align*}
     &\Gamma_{i,T}\norm{\frac{\tilde n_{i,T}-n_{i,T}}{n_{i,T}}}_s=O(1),
    &\Gamma_{i,T}\norm{\frac{N_i(T)-\tilde n_{i,T}}{n_{i,T}}}_p=o(1).
    \end{align*}
Moreover, for any $r>1$,
    \begin{align*}
    \sqrt{n_{i,T}}\norm{\bar X_i(N_i(T))-\bar X_i(n_{i,T})}_r=o(1).
    \end{align*}
\end{lemma}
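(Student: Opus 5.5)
We verify the three requirements of Condition~\ref{ass:concentration} for an optimal arm $i\in\mathcal O$ with $m\ge2$. Under generalized \texttt{UCB}, $\Gamma_{i,T}=f_T$ and $n_{i,T}\asymp T/m$ by Lemma~\ref{lem:verification of conditons1-2 for ucb1}. In $L^s$ form, the three requirements read:
\begin{itemize}
\item $\|\tilde n_{i,T}-n_{i,T}\|_s=O(T/f_T)$;
\item $\|N_i(T)-\tilde n_{i,T}\|_p=o(T/f_T)$;
\item $\sqrt{T}\,\|\bar X_i(N_i(T))-\bar X_i(n_{i,T})\|_r\to0$.
\end{itemize}

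\textbf{First requirement.} This follows directly from Lemma~\ref{lem:Tail bound of fluid approximation}. Integrate its tail bound: the event that $|\tilde n_{i,T}-n_{i,T}|/(n_{i,T}/f_T)\ge y$ has probability at most $Ce^{-cy^2}+Ce^{-cf_T^2}$. The ratio is deterministically at most $O(f_T)$, since $0\le\tilde n_{i,T}\le T$. The $e^{-cf_T^2}$ term is therefore multiplied by at most $f_T^s$, which vanishes. Hence the $s$-th moment of the ratio is $O(1)$.

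\textbf{Second requirement.} Split on the good event $G=B_\kappa\cap W_\eta^c\cap U_g$, choosing $g=T\log f_T/f_T$, which lies in the window $(T/f_T,\,T/f_T^{1/4})$.

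On $G$, apply Lemma~\ref{lem:ucb-optimal-witness} with threshold $d=\sqrt{n_{i,T}g}/f_T+T/(2f_T^2)+x$. The first two terms are
\[
O\!\left(\frac{T\sqrt{\log f_T}}{f_T^{3/2}}\right)+O\!\left(\frac{T}{f_T^2}\right)=o\!\left(\frac{T}{f_T}\right).
\]
The excess $x$ has tail $Ce^{-cf_T^2x/T}$, so it is of order $T/f_T^2$ in every $L^p$, which is also $o(T/f_T)$. Integrating the tail over $x$ gives $\|(N_i(T)-\tilde n_{i,T})\mathbb 1_G\|_p=o(T/f_T)$.

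On $G^c$, bound $|N_i(T)-\tilde n_{i,T}|\le T$. Then the contribution is at most $T\,\Pp(G^c)^{1/p}$, so it suffices that $f_T\Pp(G^c)^{1/p}\to0$. The three pieces are controlled as follows:
\begin{itemize}
\item $\Pp(B_\kappa^c)\le C(\log T)^2T^{-\beta}$ by Lemma~\ref{lem:ucb-opt-low-imbalance};
\item $\Pp(W_\eta)\le C\log T\,T^{-c}$ by the same lemma;
\item $\Pp(U_g^c)\le Ce^{-c\log^2 f_T}+Ce^{-cf_T^2}$ by Lemma~\ref{lem:Tail bound of fluid approximation}, taking $y=\log f_T$.
\end{itemize}
The first two are polynomially small. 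For \texttt{UCB1}, $f_T=\sqrt{\rho\log T}$ is polylogarithmic, so they easily beat the factor $f_T$. For \texttt{Poly-UCB}, $f_T=T^\alpha$, so we need the exponents in Lemma~\ref{lem:ucb-opt-low-imbalance} to exceed $\alpha p$. I expect this to be the main obstacle: one may need to re-inspect that lemma to get $T^{-c}$ with $c$ arbitrarily large, or an $e^{-cf_T^2}$-type bound in the polynomial case. The $U_g$ term is at most $e^{-c\log^2 f_T}$, which is superpolynomially small in $f_T$ and so also beats the factor $f_T$.

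\textbf{Third requirement.} Let $\Delta=|N_i(T)-n_{i,T}|$, which is $O(T/f_T)$ with high probability by the first two parts. On $\{\Delta\le H\}$ with $H=T\log f_T/f_T$, apply Lemma~\ref{lem:local-modulus} with $n=n_{i,T}$. The tail $\exp\{-cn^2x^2/H\}$ shows that $\bar X_i(N_i(T))-\bar X_i(n_{i,T})$ is of order $\sqrt H/n_{i,T}$. Multiplying by $\sqrt{n_{i,T}}$ gives order $\sqrt{H/n_{i,T}}=O(\sqrt{\log f_T/f_T})\to0$ in every $L^r$. The range restriction $\alpha n\le k$ in that lemma holds on $B_\kappa$.

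On the complement, bound $\sqrt{n_{i,T}}\,|\bar X_i(N_i(T))-\bar X_i(n_{i,T})|$ by the maximal inequality, Lemma~\ref{lem:maximal inequality}. Since $N_i(T)\ge\kappa T$ on $B_\kappa$, this quantity has all moments bounded by polylogarithmic factors. A Hölder split against the small probabilities from the previous step then makes this contribution vanish.
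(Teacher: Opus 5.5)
Your first two requirements are correct and follow the paper's proof. For the first, the deterministic $O(f_T)$ cap on the ratio takes care of the $e^{-cf_T^2}$ term in Lemma~\ref{lem:Tail bound of fluid approximation}. For the second, you split on $B_\kappa\cap W_\eta^c\cap U_g$ and apply Lemma~\ref{lem:ucb-optimal-witness}; your $g=T\log f_T/f_T$ is an admissible alternative to the paper's $g=T/\sqrt{f_T}$. On the bad event, the crude bound $|N_i(T)-\tilde n_{i,T}|\le T$ works for every $p$, because $f_T=\sqrt{\rho\log T}$ is polylogarithmic and the bad-event probability is polynomially small. Your \texttt{Poly-UCB} worry lies outside this lemma. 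The lemma, and Lemmas~\ref{lem:ucb-opt-low-imbalance} and~\ref{lem:ucb-optimal-witness} on which it rests, are proved only for logarithmic \texttt{UCB1}. For \texttt{Poly-UCB} the paper notes that the tracking step, which replaces $f_\tau$ by $f_T$, already fails, and it switches to moving targets.

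\textbf{The gap is in the third requirement.} Your bound on $\{\Delta>H\}$ uses $N_i(T)\ge\kappa T$, which holds only on $B_\kappa$. On $B_\kappa^c$ arm $i$ can be starved, and you give no estimate there. The paper handles this piece by H\"older against $\Pp(B_\kappa^c)\le C(\log T)^2T^{-\beta}$. Because the prefactor is $\sqrt{n_{i,T}}\asymp\sqrt T$, that bound vanishes only when $r<2\beta$ (in particular $r<\rho/\sigma^2$), not for every $r>1$.

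This restriction cannot be removed. Take Gaussian rewards and another optimal arm $i'\in\mathcal O\setminus\{i\}$, and fix a large $M$. The event $\{X_{i,1}\le\mu^\star-f_T-M\}\cap\{\inf_{k\ge1}\bar X_{i'}(k)\ge\mu^\star-M\}$ has probability at least $T^{-\rho/(2\sigma_i^2)-o(1)}$. On it, arm $i$'s index $X_{i,1}+f_t$ never exceeds $\mu^\star-M$, while arm $i'$'s index is always strictly above $\mu^\star-M$. Hence arm $i$ is never pulled after initialization and $N_i(T)=1$. On a further independent event of probability near one, this gives $\sqrt{n_{i,T}}\,|\bar X_i(N_i(T))-\bar X_i(n_{i,T})|\ge c\sqrt T f_T$. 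The $r$-th moment is therefore at least $T^{r/2-\rho/(2\sigma_i^2)-o(1)}$, which diverges for $r>\rho/\sigma_i^2$.

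So you need to add the $B_\kappa^c$ term explicitly. The third claim, in your proof and in the paper's, can only be established for a bounded range of exponents such as $1<r<2\beta$.
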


\begin{proof}[Proof of Lemma~\ref{lem:Tail bound of fluid approximation}]
Recall that the fluid approximations are solutions of 
\begin{equation} \label{eq:definition of n}
    \mu_a+f_T/\sqrt{n_{a,T}} = \lambda_T , \qquad \sum_a n_{a,T}=T,
\end{equation}
while the empirical fluid approximations are solutions of 
\begin{equation}\label{eq:definition of tn}
    \mu_a+\bar Y_{a,T}+f_T/\sqrt{\tilde n_{a,T}}
    = \tilde\lambda_T, \qquad \sum_a \tilde n_{a,T}=T.  
\end{equation}
This implies that $\lambda_T$ and $\tilde \lambda_T$ are the implicit functions that satisfy the following equations.
\begin{equation*}
    \sum_{a=1}^K \frac{f_T^2}{(\lambda_T-\mu_a)^2} = T  \qquad \sum_{a=1}^K \frac{f_T^2}{(\tilde\lambda_T-\mu_a-\bar Y_{a,T})^2} = T 
\end{equation*}
This implies that there exists a function $\Phi(x)$ such that $\tilde\lambda_T = \Phi(\bar{\mathbf Y}_T)$ and $\lambda_T = \Phi(0)$. 
Define the local event
\begin{equation*}
\xi = \{|\bar Y_{a,T}|\le f_T/\sqrt{n_{a,T}}, \ \forall a=1,\ldots,K\}
\end{equation*}
We next apply the first-order Taylor formula with integral remainder to the one-dimensional function $\Phi(x)$ at $x=\bmu$, and the resulting remainder is negligible under event $\xi$. Consequently, 
\begin{equation}\label{eq:common level taylor expansion}
    \tilde\lambda_T-\lambda_T = \bar Y_{w,T} + \mathcal{R}_{\lambda,T} \qquad \mathcal{R}_{\lambda,T} =  o(\frac{1}{\sqrt T}) 
\end{equation}
where $\bar Y_{w,T} = \sum_{b=1}^K \frac{n_{b,T}^{3/2}}
    {\sum_{c=1}^K n_{c,T}^{3/2}}\bar Y_{b,T}$ ,$\mathcal{R}_{\lambda,T} = \int_0^1 (1-\theta) \sum_{b=1}^K\sum_{c=1}^K
\frac{\partial^2 \lambda_T(\theta\bar Y_T)}{\partial \bar Y_{b,T}\partial \bar Y_{c,T}}\bar Y_{b,T}\bar Y_{c,T}\,d\theta .$
We next expand the empirical fluid counts. Again by \eqref{eq:definition of n} and \eqref{eq:definition of tn}, 
\begin{equation*}
     \tilde n_{a,T} = \frac{f_T^2}{(\tilde\lambda_T-\mu_a-\bar Y_{a,T})^2} \qquad n_{a,T} = \frac{f_T^2}{(\lambda_T-\mu_a)^2}
\end{equation*}
This implies that there exists a function $\Psi(x)$ such that  $\tilde n_{a,T} = \Psi(\tilde\lambda_T-\lambda_T-\bar Y_{a,T})$ and $n_{a,T} = \Psi(0)$. Similarly, we apply the first-order Taylor formula with integral remainder to the one-dimensional function $\Psi(x)$ at $x=0$, and the resulting remainder is negligible under event $\xi$. Consequently, 
\begin{equation*}
     \tilde n_{a,T}-n_{a,T} = \frac{2n_{a,T}^{3/2}}{f_T}
    \left\{ \bar Y_{a,T}-(\tilde\lambda_T-\lambda_T)
    \right\} + \mathcal{R}_{n,a,T}, \qquad \mathcal{R}_{n,a,T} = o\left(\frac{n_{a,T}}{f_T}\right)
\end{equation*}
where $\mathcal{R}_{n,a,T} = \int_0^1(1-\theta)\Psi''(tD_{a,T})D_{a,T}^2\,d\theta, D_{a,T} = \tilde\lambda_T-\lambda_T-\bar Y_{a,T}$. Combining with \eqref{eq:common level taylor expansion}, we have
\begin{equation}\label{eq:tn taylor expansion}
    \tilde n_{a,T}-n_{a,T} = \frac{2n_{a,T}^{3/2}}{f_T}
    \left( \bar Y_{a,T} - \bar Y_{w,T}\right) + \mathcal{R}_{a,T} \qquad  \mathcal{R}_{a,T} = o\left(\frac{n_{a,T}}{f_T}\right)
\end{equation}
where $\mathcal{R}_{a,T} = -\frac{2n_{a,T}^{3/2}}{f_T}\mathcal{R}_{\lambda,T} + \mathcal{R}_{n,a,T}$.
Thus, for any $y > 0$
\begin{equation*}
    \Pp\left(\frac{|\tilde n_{a,T} - n_{a,T}|}{n_{a,T}/f_T} \geq y\right) \leq \Pp\left(\frac{n_{a,T}^{3/2}}{f_T}
    \left|\bar Y_{a,T} - \bar Y_{w,T}\right| \geq y \right) + \Pp\left( \left|\mathcal{R}_{a,T}\right| \geq y/2 \right) +  \Pp\left( \xi^c\right)
\end{equation*}
By subgaussian concentration and $n_{i,T} \asymp T/m$ and $n_{j,T} \asymp f_T^2$, where $i \in \mathcal{O}, j \notin \mathcal{O}$ from Lemma~\ref{lem:verification of conditons1-2 for ucb1}, there exists constant $c,C >0$ such that 
\begin{equation*}
    \Pp\left(\frac{|\tilde n_{a,T} - n_{a,T}|}{n_{a,T}/f_T} \geq y\right) \leq  C\exp\{-cy^2\} + C\exp\{-cf_T^2\} 
\end{equation*}
\end{proof}
\begin{proof}[Proof of Lemma~\ref{lem:ucb-suboptimal-upper-tail}]
If \(N_j(T)\ge y\), let \(\tau\) be a decision time at which the suboptimal arm \(j\) is selected, its pre-decision count satisfies \(N_j(\tau-1)=y\), and the index of arm \(j\) is at least as large as that of every optimal arm.  Fix one optimal arm $i \in\mathcal O$ and write $q = N_i(\tau-1)$.  The UCB comparison gives
\begin{equation*}
\barY_j(y)-\barY_i(q)
\ge \Delta_j+f_\tau(q^{-1/2}-y^{-1/2}).
\end{equation*}
For $y \ge Ln_{j,T}$, choose $L$ so large that $f_T/\sqrt{y} \le \Delta_j/4$. Then
\begin{equation*}
    \barY_j(y)-\barY_i(q)\ge 3\Delta_j/4+f_\tau/\sqrt q.
\end{equation*}
Thus either
\begin{equation*}
    \barY_j(y)\ge\Delta/4
    \qquad\text{or}\qquad
    -\barY_i(q) \ge \Delta/2+\frac{f_\tau}{\sqrt q}.
\end{equation*}
The first event has probability at most $\exp\{-cy\}$ for some constant $c > 0$ by Subgaussian concentration. For the second event, the time $\tau$ is random, but on the event under consideration it satisfies $\tau\ge y+q$; since $f_t$ is increasing, the event implies the deterministic-threshold event for each fixed $q$,
\begin{equation*}
    -\barY_i(q) \ge \Delta/2+\frac{f_{y+q}}{\sqrt q}.
\end{equation*}
Since $f_{y+q} = \sqrt{\rho\log(y+q)}$ and $\rho/\sigma^2 > 2$, there exists a constant $c>0$ such that this event has probability at most
$\exp\{-cq\}(y+q)^{-\beta}$ for $\beta < \rho/(2\sigma^2)$. Summing over $q$ gives $Cy^{-\beta}$ for some constant $C > 0$. The upper tail bound is obtained by combining the probabilities of the above two events.

Furthermore, for $\{\sum_{r \notin \mathcal O} N_r(T)\geq y\}$, since number of suboptimal arms is fixed, a union bound over the fixed set of suboptimal arms completes the proof.
\end{proof}

\begin{proof}[Proof of Lemma \ref{lem:ucb-suboptimal-lower-tail}]
Recall that $H =\{\sum_{r \notin \mathcal O} N_r(T) \le T/4\}$. On $H$, the optimal arms receive at least $3T/4$ total pulls. Hence there exists an optimal arm $i\in\mathcal O$ has final count at least $3T/4m$. Let $q=3T/4m$. Define $\tau$ as the decision time at which the optimal arm $i$ is selected, its pre-decision count satisfies $N_i(\tau-1)=q$ and the index of arm $r$ is at least as large as that of the suboptimal arm $j$. For notation simplification, write $\ell=N_j(\tau-1)$. On the event $\{N_j(T)< L_1n_{j,T}\}$, one also has $\ell \le L_1n_{j,T}$. The UCB comparison gives
\begin{equation*}
    \bar Y_i(q)-\bar Y_j(\ell)
    \ge
    -\Delta_j
    +f_\tau\left(\frac1{\sqrt \ell}-\frac1{\sqrt q}\right).
\end{equation*}
Taking small enough constant $L_1$ such that $f_T/\sqrt{L_1n_{j,T}} > \Delta_j$. Then, there exists constant $c$ such that for sufficiently large $T$,
\begin{equation*}
    f_\tau\left(\frac1{\sqrt \ell}-\frac1{\sqrt q}\right)-\Delta_j \ge c\frac{f_T}{\sqrt \ell},
\end{equation*}
which holds by $f_\tau/f_T=1+O(1/\log T)$,
$f_\tau/\sqrt q=o(1)$. Thus, $\{N_j(T)<L_1n_{j,T}\}\cap H$ implies that, for $\ell\le L_1n_{j,T}$, either
\begin{align*}
    \barY_i(q)
    \ge
    \frac{c}{2}\frac{f_T}{\sqrt \ell}
    \qquad\text{or}\qquad
    -\barY_j(\ell)
    \ge
    \frac{c}{2}\frac{f_T}{\sqrt \ell}.
\end{align*}
There exist constants $C,c>0$ such that the first event has probability at most
$Cn_{j,T}\exp\{-cT\}$ and the second event has probability at most $Cn_{j,T}\exp\{-cf_T^2\}$ after union over $\ell \le L_1n_{j,T}$ and the fixed set of optimal arms. Finally, $\Pp(H^c)\le CT^{-\beta}$ by Lemma \ref{lem:ucb-suboptimal-lower-tail}. The lower tail bound is proved by combining above results.
\end{proof}
\begin{proof}[Proof of Lemma~\ref{lem:ucb-suboptimal-witness}]
For notation simplification, define $d = n_{j,T}^{1/4}\log n_{j,T}$. There are two cases.

\emph{Case 1: $N_j(T)-\tilde n_{j,T}>d$. } 
Define $\tau$ as a decision time at which the suboptimal arm $j$ is selected and its pre-decision count satisfies $N_j(\tau-1)=\tilde n_{j,T}+d$. At this time, arm $j$ beats every optimal arm in terms of its UCB index. Fix an optimal arm $i \in\mathcal O$, and write $\ell=N_j(\tau-1), q=N_i(\tau-1).$
The UCB comparison gives
\begin{equation*}
    \bar Y_j(\ell)+\frac{f_\tau}{\sqrt \ell}
        \ge \Delta_j + \bar Y_i(q)+\frac{f_\tau}{\sqrt q}.
\end{equation*}
Recall the definition of the empirical fluid approximation, we have
\begin{equation*}
    \barY_j(n_{j,T}) - \barY_i(n_{i,T}) =  \Delta_j + \frac{f_T}{\sqrt{\tilde n_{i,T}}} - \frac{f_T}{\sqrt{\tilde n_{j,T}}}.
\end{equation*}
This gives
\begin{align}\label{eq:ucb-suboptimal-witness-1}
       \{\barY_j(\ell) - \barY_j(n_{j,T})\}+\{\barY_i(n_{i,T}) - \barY_i(q)\} \ge
       f_\tau\left(\frac{1}{\sqrt q}-\frac{1}{\sqrt {\ell}}\right)
        +f_T\left(\frac{1}{\sqrt{\tilde n_{j,T}}}-\frac{1}{\sqrt{\tilde n_{i,T}}}\right),
\end{align}
where
\begin{equation*}
    - \frac{f_\tau}{\sqrt {\ell}}+\left(\frac{f_T}{\sqrt{\tilde n_{j,T}}}-\frac{f_T}{\sqrt{\tilde n_{i,T}}}\right) \ge \frac{ f_T d}{2(\tilde n_{j,T}+d)^{3/2}} - \frac{f_T}{\sqrt{\tilde n_{i,T}}},
\end{equation*}
since $\ell = \tilde n_{j,T}+d$ and $f_t(t \geq 1)$ is an increasing function. Under events $\mathcal{R}_j \cap A \cap H$, we have $\tilde n_{i,T} \ge n_{i,T} - \xi T$, $\tilde n_{j,T} \le   n_{j,T}+\sqrt{n_{j,T}} \log n_{j,T}$. Combining with event \eqref{eq:ucb-suboptimal-witness-1}, there exists a constant $c$ such that for all sufficiently large $T$, at least one of the following three events occurs:
\begin{equation*}
\left\{ \barY_j(\ell)-\barY_j(n_{j,T}) \ge
\frac{cf_T d}{n_{j,T}^{3/2}} \right\}
\quad \text{or} \quad
\left\{ -\barY_i(q) \ge \frac{f_\tau}{\sqrt q}\right\}
\quad \text{or} \quad
\left\{ \barY_i(n_{i,T}) \ge \frac{c f_T d}{n_{j,T}^{3/2}} \right\}
\end{equation*}

We first consider the first event, under $\{\max_{s \notin \mathcal{O}} |\tilde n_{s,T} - n_{s,T}| \le \sqrt{n_{j,T}} \log n_{j,T} \}$, 
\begin{equation*}
    |\ell - n_{j,T}| \le \sqrt{n_{j,T}} \log n_{j,T}+d
\end{equation*}
Since $n_{j,T} \asymp f_T^2$ by Lemma~\ref{lem:verification of conditons1-2 for ucb1}, Lemma~\ref{lem:local-modulus} implies that there exist constants $c,C>0$ such that
\begin{equation*}
    \Pp\left\{\Omega(\sqrt{n_{j,T}}\log n_{j,T}+d)\ge \frac{cf_Td}{n_{j,T}^{3/2}}\right\}
        \le C\exp\left\{-c(\log n_{j,T})^3\right\}.
\end{equation*}

For the second event, since $1 \le q \le T$, we peel $q$ into geometric blocks $[u,(1+b)u]$, with $b>0$ small enough that $\frac{\rho}{2\sigma^2(1+b)}>1$. Note that we have assumed $\frac{\rho}{2\sigma^2} > 1$. Since $\tau \geq q+n_{j,T}/3$ by the definition, in each block, and there exits constant $C$ such that the block probability at most
\begin{equation*}
    \exp\left\{-\frac{u f_{u+n/3}^2}{2\sigma^2(1+b)u}\right\}
        \le C(u+n_{j,T})^{-\beta}
\end{equation*}
by Lemma \ref{lem:maximal inequality} and $f_t = \sqrt{\rho \log T}$. There are $O(\log n_{j,T})$ blocks with $q \le T$, and their total contribution is $C(\log n_{j,T})n_{j,T}^{-\beta}$ by union bound. 

For the third event, by $n_{r,T} \asymp T$,$n_{j,T} \asymp f_T^2$ and subgaussian concentration, there exists constant $c,C$ such that the probability of this event is at most $C\exp\{-cT\}$. Combining above results, we otbain the probability bound.

\emph{Case 2: $\tilde n_{j,T}-N_j(T)>d$.}  Under event $H= \{\sum_{s\notin \mathcal{O} }N_{s,T} \le T/4\}$, the other optimal arms receive at least $3T/(4m)$  and there exists an optimal arm $i$ has final count at least $3T/(4m)$.  Define $\tau$ as a decision time at which the optimal arm $i$ is selected and its pre-decision count satisfies $N_i(\tau-1)=3T/(4m)$. For notation simplification, write $q=N_i(\tau-1),\ell=N_j(\tau-1),$. The UCB comparison gives
\begin{equation*}
    \barY_i(q)+\frac{f_\tau}{\sqrt q}
        \ge -\Delta_j + \barY_j(\ell)+\frac{f_\tau}{\sqrt \ell}.
\end{equation*}
Recall the definition of the empirical fluid approximation, we have
\begin{equation*}
    \barY_j(n_{j,T}) - \barY_i(n_{i,T}) = \Delta_j + \frac{f_T}{\sqrt{\tilde n_{i,T}}} - \frac{f_T}{\sqrt{\tilde n_{j,T}}}.
\end{equation*}
This gives
\begin{align}\label{eq:ucb-suboptimal-witness-lower-1}
       \{\barY_j(n_{j,T}) - \barY_j(\ell)\}+\{\barY_i(q)-\barY_i(n_{i,T})\} \ge
       f_\tau\left(\frac1{\sqrt {\ell}} - \frac1{\sqrt q}\right)
        +f_T\left(\frac{1}{\sqrt{\tilde n_{i,T}}} -\frac{1}{\sqrt{\tilde n_{j,T}}}\right)
\end{align}
If $\ell \le \alpha n_{j,T}$. For all sufficiently large $T$, $\tilde n_{j,T}\in[n_{j,T}/2,2n_{j,T}]$ under event $A_h$,  $f_\tau/f_T=1+O(1/\log T)$, while $f_T/\sqrt{\ell}\ge f_T/\sqrt{\alpha n_{j,T}}$. Choosing a sufficiently small $\alpha$ such that $f_T/\sqrt{\alpha n_{j,T}} \ge 3\Delta_j$, then there exists a constant $c>0$ such that
\begin{equation*}
        \frac{f_\tau}{\sqrt \ell}-\frac{f_T}{\sqrt{\tilde n_{j,T}}}  = \frac{f_T}{2\sqrt{\ell}}- \frac{f_T - f_\tau}{\sqrt{\ell}} + \frac{f_T}{2\sqrt{\ell}} - \frac{f_T}{\sqrt{\tilde n_{j,T}}} 
        \ge 2c\frac{f_T}{\sqrt \ell}+\Delta_j .
\end{equation*}
Combining with event \eqref{eq:ucb-suboptimal-witness-lower-1} and $\tilde n_{i,T} < T$, at least one of the following four events occurs:
\begin{equation*}
\left\{ - \barY_j(\ell)\ge c\frac{f_T}{\sqrt \ell}
 \right\}
\quad \text{or} \quad
\left\{ \barY_j(n_{j,T}) \ge c\frac{f_T}{\sqrt \ell}
 \right\}
\quad \text{or} \quad
\left\{ -\barY_i(n_{i,T}) \ge \frac{f_T}{\sqrt{T}} \right\}
\quad \text{or} \quad
\left\{ \barY_i(q) \ge \Delta_j \right\}
\end{equation*}

For the first event, since $l \leq \alpha n_{j,T}$ and we peel $q$ into geometric blocks $(u,2u]$. In each block, and there exits constant $c,C>0$ such that the block probability at most $C\exp\{-cf_T^2\}$ by Lemma \ref{lem:maximal inequality}. There are $O(\log n_{j,T})$ blocks and their total contribution is $C\log n_{j,T}\exp\{-cf_T^2\}$ by union bound. 

For the other three events, by subgaussian concentration, $l \leq \alpha n_{j,T}$ and $q = \frac{3T}{4m}$,  there exits constant $c>0$ such that the probability of three events at most $3\exp\{-cf_T^2\}$.

We next consider $\alpha n_{j,T} \le \ell \le \tilde n_{j,T} - d$. Since $f_t = \sqrt{\rho\log t}$ for any $t\geq 1$ and $\tau \geq \frac{3T}{4m}$, $f_\tau/f_T=1+O(1/\log T)$. Combining with $ \ell =N_j(\tau-1)$, $ \ell = O(n_{j,T})$ under event $A_h$, $n_{j,T}=O(f_T^2)$, $\tilde n_{j,T} - N_j(\tau-1) \ge d$ and $d = n_{j,T}^{1/4}\log n_{j,T}$, we have $(f_T - f_\tau)/\sqrt{\ell}=o(\frac{f_Td}{n_{j,T}^{3/2}})$. Thus, there exist a constant $c$ such that 
\begin{equation*}
        \frac{f_\tau}{\sqrt \ell}-\frac{f_T}{\sqrt{\tilde n_{j,T}}} = \frac{f_T}{\sqrt{\ell}}-\frac{f_T}{\sqrt{\tilde n_{j,T}}} - \frac{f_T - f_\tau}{\sqrt{\ell}} 
        \ge \frac{cf_T(\tilde n_{j,T} - N_j(\tau-1))}{n_{j,T}^{3/2}}.
\end{equation*}
Define the dyadic grid
\begin{equation*}
    \mathcal D(d) = \left\{ 2^s d/4: s \ge 0,2^s d/4 \le n_{j,T}\log n_{j,T} \right\}.
\end{equation*}
Choose $D \in\mathcal D(d)$ so that $D \le \tilde n_{j,T} - N_j(\tau-1) <2D$.
Under event $ A_h = \{ \max_{s\notin\mathcal O}|\tilde n_{s,T} - n_{s,T}| \le h\}$, 
\begin{equation*}
    |\ell - n_{j,T}| \le h+2D,
\end{equation*}
Combining with event \eqref{eq:ucb-suboptimal-witness-lower-1}, there exists a constant $c$ such that for all sufficiently large $T$, at least one of the following three events occurs:
\begin{equation*}
\left\{ \barY_j(n_{j,T}) - \barY_j(\ell)\ge \frac{c f_T D}{n_{j,T}^{3/2}}
 \right\}
\quad \text{or} \quad
\left\{ \barY_i(q) \ge \frac{c f_T d}{n_{j,T}^{3/2}} \right\}
\quad \text{or} \quad
\left\{ -\barY_i(n_{i,T}) \ge \frac{2f_T}{\sqrt{n_{i,T}}} \right\},
\end{equation*}
where $D \in D(d)$. For fixed $D$ and $j$, Lemma \ref{lem:local-modulus} bounds the probability of the above event by $C\exp\left\{-c\frac{f_T^2D^2}{n_{j,T}(h+2D)}\right\}$. The dyadic grid $D(d)$ contains $O(\log T)$ points, $h = \sqrt{n_{j,T}} \log n_{j,T}$ and $d = n_{j,T}^{1/4}\log n_{j,T}$. Combining the above observations yields constants $C,c>0$ such that the probability bound of this event is $C\log n_{j,T} \exp\{-c (\log n_{j,T})^3\}$.

For the other two events, by subgaussian concentration and $q = \frac{3T}{4m}$,  there exits constant $c$ such that the probability of three events at most $3\exp\{-c f_T^2\}$. Combining above results, we otbain the probability bound.
\end{proof}

\begin{proof}[Proof of Lemma~\ref{lem:verification of conditon3 for ucb1 suboptimal arm}]
For any fixed suboptimal arm $j \notin \mathcal{O}$, we first verify the condition about $\Gamma_{j,T}$. By Definition~\ref{def:exploration rate}, $\Gamma_{j,T} = \sqrt{\rho\log T}$ in UCB1 and $\Gamma_{j,T} = \omega(1)$. 

Next, we focus on other parts of Condition~\ref{ass:concentration} .By the tail bound for empirical fluid approximations in Lemma~\ref{lem:Tail bound of fluid approximation}, we directly obtain that for any $s>1$,
\begin{equation*}
    \frac{f_T}{\sqrt{n_{j,T}}}\norm{\tilde n_{j,T}-n_{j,T}}_s=O(1) 
\end{equation*}
For notation simplification, define event  $D_j = \{|N_i(T)-\tilde n_{i,T}| > n_{j,T}^{1/4}\log n_{j,T}\}$ for arm $j \notin \mathcal{O}$.
Next, fix any $1<p<\beta$. We decompose
\begin{align*}
\norm{N_j(T)-\tilde n_{j,T}}_p
&\leq 
\norm{|N_j(T)-\tilde n_{j,T}|\mathbb{I}_{\mathcal R^c}}_p
+\norm{|N_j(T)-\tilde n_{j,T}|\mathbb{I}_{\mathcal R\cap(A^c\cup H^c\cup U^c)}}_p  \\
&\quad
+\norm{|N_j(T)-\tilde n_{j,T}|\mathbb{I}_{\mathcal R\cap A\cap D_j^c}}_p
+\norm{|N_j(T)-\tilde n_{j,T}|\mathbb{I}_{\mathcal R\cap A\cap D_j\cap H\cap U}}_p .
\end{align*}
On $\mathcal R^c$, we first use
\begin{align*}
\norm{|N_j(T)-\tilde n_{j,T}|\mathbb{I}_{\mathcal R^c}}_p \leq  \norm{N_j(T)\mathbb{I}_{\mathcal R^c}}_p + \norm{\tilde n_{j,T}\mathbb{I}_{\mathcal R^c}}_p .
\end{align*}
The first term on the right is $o(\sqrt{n_{j,T}})$ by the tail bounds for
$\mathcal R^c$ from Lemmas~\ref{lem:ucb-suboptimal-upper-tail} and
\ref{lem:ucb-suboptimal-lower-tail}, together with the layer-cake representation. For the second term, writing $\tilde n_{j,T}=n_{j,T}+(\tilde n_{j,T}-n_{j,T})$ and using
$f_T\sqrt{n_{j,T}}\norm{\tilde n_{j,T}-n_{j,T}}_s=O(1)$ together with
$\mathbb P(\mathcal R^c)=o(1)$ shows that it is also $o(\sqrt{n_{j,T}})$. Hence the contribution from $\mathcal R^c$ is $o(\sqrt{n_{j,T}})$.

Similarly, on $\mathcal R\cap(A^c\cup H^c\cup U^c)$, we use
\begin{align*}
&\norm{|N_j(T)-\tilde n_{j,T}|
\mathbb{I}_{\mathcal R\cap(A^c\cup H^c\cup U^c)}}_p  \\
&\qquad\leq
\norm{N_j(T)\mathbb{I}_{\mathcal R\cap(A^c\cup H^c\cup U^c)}}_p + \norm{\tilde n_{j,T}\mathbb{I}_{\mathcal R\cap(A^c\cup H^c\cup U^c)}}_p .
\end{align*}
The first term is $o(\sqrt{n_{j,T}})$ by the growth bound $N_j(T)=O(n_{j,T}\log n_{j,T})$ on $\mathcal R$ and the probability bounds for
$A^c\cup H^c\cup U^c$ from Lemmas~\ref{lem:Tail bound of fluid approximation} and~\ref{lem:ucb-suboptimal-upper-tail}. The second term is handled in the same way as above, using
$\tilde n_{j,T}=n_{j,T}+(\tilde n_{j,T}-n_{j,T})$,
$f_T\sqrt{n_{j,T}}\norm{\tilde n_{j,T}-n_{j,T}}_s=O(1)$, and the same negligible probability bound. Therefore, this contribution is also $o(\sqrt{n_{j,T}})$.

On $\mathcal R\cap A\cap D_j^c$, the deterministic bound $|N_j(T)-\tilde n_{j,T}|\leq n_{j,T}^{1/4}\log n_{j,T}$ immediately makes the
corresponding contribution $o(\sqrt{n_{j,T}})$.

Finally, on $\mathcal R\cap A\cap D_j \cap H\cap U$, Lemma~\ref{lem:ucb-suboptimal-witness}
shows that the corresponding $L^p$ contribution is $o(\sqrt{n_{j,T}})$.
Therefore,
\begin{align*}
   \frac{f_T}{n_{j,T}}\norm{N_j(T)-\tilde n_{j,T}}_p=o(1). 
\end{align*}
Finally, for any $r>1$,
\begin{align*}
    \norm{\bar X_j(N_j(T))-\bar X_j(n_{j,T})}_r & \leq \norm{|\bar X_j(N_j(T))-\bar X_j(n_{j,T})|\mathbb{I}_{\{|N_j(T) - n_{j,T}|\leq n_{j,T} /\sqrt{f_T}\}}}_r \\
    & \ \ + \norm{|\bar X_j(N_j(T))-\bar X_j(n_{j,T})|\mathbb{I}_{\{|N_j(T) - n_{j,T}|> n_{j,T} /\sqrt{f_T}\}}}_r 
\end{align*}
For the first term, Lemma~\ref{lem:local-modulus} applies directly on the local event
$\{|N_j(T)-n_{j,T}|\leq n_{j,T}/\sqrt{f_T}\}$ and shows that this term is
$o(1/\sqrt{n_{j,T}})$. For the second term, Lemma~\ref{lem:ucb-suboptimal-upper-tail} and \ref{lem:ucb-suboptimal-lower-tail} gives the probability of $|N_j(T)-n_{j,T}| > n_{j,T}/\sqrt{f_T}$ is $O(1/n_{j,T}^\beta)$ with $\beta > 1$. Therefore,
\begin{align*}
    \sqrt{n_{j,T}}\norm{\bar X_j(N_j(T))-\bar X_j(n_{j,T})}_r=o(1).
\end{align*}
\end{proof}

\begin{proof}[Proof of Lemma~\ref{lem:ucb-opt-low-imbalance}]
We first consider optimal low-count event. 
\begin{equation*}
    B_\kappa = \Bigl\{\min_{i\in O}N_i(T) \ge \kappa T,\quad \sum_{j \notin \mathcal{O}} N_j(T) \le T/f_T^2\Bigr\}
\end{equation*}
The suboptimal part of $B_T^c$ is controlled by Lemma~\ref{lem:ucb-suboptimal-upper-tail}, then it remains to prove optimal part. Consider event $H=\{\sum_{j\notin \mathcal O}N_j(T) \le T/4\}$. By Lemma~\ref{lem:ucb-suboptimal-upper-tail}, there exists constant $C$ such that $\Pp(H^c)\le CT^{-\beta}$ with $1 < \beta< \rho/2\sigma^2$. Next, consider event ${N_i(T) < \kappa T}$ on $H$ for the fixed optimal arm $i$.  If $\kappa$ is small, the other optimal arms receive at least $(3/4-\kappa)T$ pulls, so some $r\in O\setminus\{i\}$ has final count at least $\frac{3/4-\kappa}{m-1}T$. Let $q = \frac{3/4-\kappa}{m-1}T$. Define $\tau$ as the decision time at which the optimal arm $r$ is selected and its pre-decision count satisfies $N_r(\tau-1)=q$. For notation simplification, , write $\ell=N_i(\tau-1)$. The UCB comparison gives
\begin{equation*}
\barY_r(q)-\barY_i(\ell)\ge f_\tau(\frac{1}{\sqrt{\ell}}-\frac{1}{\sqrt{q}}).
\end{equation*}
Since $\ell \leq \kappa T$, $q = \frac{3/4-\kappa}{m-1}T$, $\tau \geq q+\ell$, then there exist $\xi > 0$ such that $f_\tau \geq \sqrt{(1-\xi) \rho \log T}$ for sufficiently large $T$. We next cover the possible values of $\ell$ and $q$ by multiplicative ranges. Fix a small constant $b>0$. For range endpoints $a$ and $c$, consider
\begin{equation*}
    \ell \in(a,(1+b)a],\qquad q\in(c,(1+b)c].
\end{equation*}
There are $O((\log T)^2)$ range pairs. On each pair, 
\begin{equation*}
    f_\tau\left(\frac{1}{\sqrt{\ell}}-\frac{1}{\sqrt{q}}\right) \geq d_{a,c,\xi,\rho},\qquad d_{a,c,\xi,\rho} = \sqrt{(1-\xi) \rho \log T}\left(\frac{1}{\sqrt{(1+b)a}}-\frac{1}{\sqrt{c}}\right)
\end{equation*}
and for $0< \theta <1$, The above UCB comparison event implies that either 
\begin{equation*}
    \barY_r(q) \ge \theta d_{a,c,\xi,\rho} \quad \text{or} \quad -\barY_i(\ell) \ge (1-\theta)d_{a,c,\xi,\rho}
\end{equation*}
For the first branch, the event implies
\begin{equation*}
\max_{c < q \le (1+b)c}S_r(q)\ge c\,\theta d_{a,c,\xi,\rho};
\end{equation*}
for the second branch, it implies
\begin{equation*}
\max_{a < \ell \le (1+b)a}(-S_i(\ell))\ge a(1-\theta)d_{a,c,\xi,\rho}.
\end{equation*}
Using Lemma~\ref{lem:maximal inequality} and choosing
\begin{equation*}
    \theta=\frac{\sqrt{\phi_{a,c}(1+b)}}{1+b+\sqrt{\phi_{a,c}(1+b)}},\qquad \phi_{a,c}=\frac{(1+b)a}{c},
\end{equation*}
then there exists constant $C$ such that the range probability is therefore bounded by
\begin{equation*}
    C\exp\left\{-\frac{\rho}{2\sigma^2}(1-\xi)\chi(\phi_{a,c},b)\log T\right\},
\end{equation*}
where
\begin{equation*}
    \chi(\phi_{a,c}):=
        \frac{(1-\sqrt{\phi_{a,c}})^2}{\left(1+b+\sqrt{\phi_{a,c}(1+b)}\right)^2},
        \qquad
\chi(\phi_{a,c})\longrightarrow1\quad\text{as }\phi_{a,c},b\downarrow 0.
\end{equation*}
By taking $\kappa$ small, all possible ratios $\phi_{a,c}$ are bounded by a number $\phi_{\kappa,m}\downarrow0$ as $\kappa\downarrow0$, and $d_{a,c,\xi,\rho}>0$. Moreover, by choosing $\xi,b$ small and then $\kappa$ small so that
\begin{align*}
    \frac{\rho}{2\sigma^2}(1-\xi)\inf_{0\le \phi_{a,c} \le \phi_{\kappa,m}}\chi(\phi_{a,c},b)>\beta,
\end{align*}
where $1 < \beta < \frac{\rho}{2\sigma^2}$.
Summing over the $O((\log T)^2)$ range pairs and finitely many optimal arms gives
\begin{align*}
    \Pp\{\min_{i\in O}N_i(T)<\kappa T,\ H\}\le C(\log T)^2T^{-\beta}.
\end{align*}
Together with probability of $H^c$ and suboptimal arm event proves the bound on $B_T^c$.

Next, we consider imbalanced optimal-selection event. 
\begin{equation*}
W_\eta = \{\exists t\le T,\ \exists r,s\in O:\ A_t=r,\ N_r(t-1)\ge4\eta T,\ N_s(t-1)\le\eta T\}
\end{equation*}
For notation simplification, write $q=N_r(t-1), \ell=N_s(t-1)$. At time $t$, arm $r$ is selected by the definition of $W_\eta$. Therefore, the UCB comparison gives
\begin{equation*}
     \barY_r(q)-\barY_s(\ell)
        \ge f_t\left(\frac1{\sqrt\ell}-\frac1{\sqrt q}\right)
        \ge \frac{f_t}{2\sqrt\ell}.
\end{equation*}
Since the two arms have the same mean and $t \ge q +\ell \ge 4\eta T$, so there exists a constant $c$ such that $f_t^2\ge c f_T$ for all large $T$.  Therefore, at time $t$, either
\begin{equation*}
    \barY_r(q)\ge \frac{c}{4}\frac{f_T}{\sqrt\ell}
        \qquad\text{or}\qquad
        -\barY_s(\ell)\ge \frac{c}{4} \frac{f_T}{\sqrt\ell}.
\end{equation*}
For the first branch, we use $\ell \le \eta T$ and peel $q \in [4\eta T,T]$ into dyadic ranges, producing $O(\log T)$ ranges. Since there are only finitely many ordered pairs $(r,s)$, the resulting union bound contributes only a constant factor. 
For the second branch, we similarly peel $\ell$ into dyadic ranges, again producing $O(\log T)$ ranges and only finitely many ordered pairs $(r,s)$. Combining with subgaussian concentration and $f_T = \sqrt{\rho \log T}$, there exist constants $c,C>0$ such that
\begin{equation*}
    \Pp(W_\eta) \leq C\log T T^{-c},
\end{equation*}
completing the proof.
\end{proof}
\begin{proof}[Proof of Lemma~\ref{lem:ucb-optimal-witness}]
There are two cases.

\emph{Case 1: $N_i(T)-\tilde n_{i,T}>d$.}  Define $\tau$ as the decision time at which the optimal arm $i$ is selected and its pre-decision count satisfies $N_i(\tau-1) = \tilde n_{i,T}+d$. At time $\tau-1$ the total deficit of the other optimal arms relative to their balance counts is
\begin{equation*}
     \sum_{r\in\mathcal O\setminus\{i\}}(\tilde n_{r,T}-N_r(\tau-1))
        =T-\tilde n_{i,T}-\sum_{r\ne i}N_r(\tau-1)
        =T - \left(\tau-1-\sum_{j \notin \mathcal{O}}N_j(\tau-1)-d \right),
\end{equation*}
where $\sum_{j \notin \mathcal{O}}N_j(\tau-1)$ is the number of suboptimal pulls before $\tau$. Hence, there exists an optimal arm $r \in\mathcal O\setminus\{i\}$ has
\begin{equation*}
     \tilde n_{r,T} - N_r(\tau-1) \ge \frac{d}{m-1}.
\end{equation*}
For notation simplification, write $q=N_r(\tau-1)$ and $\ell = N_i(\tau-1)$. Since $\max_{r \in \mathcal O}|\tilde n_{r,T} - n_{r,T}| \le g \le \frac{T}{f_T^{1/4}}$, there exist a constant such that every balance count is at least $4\eta T$.  Hence the arm $r$ has $ \ell \ge 4\eta T$ for all large $T$.  If $q \le \eta T$, then the decision at time $\tau$ belongs to $W_\alpha$, which is excluded.  Hence $q \ge \eta T$.  Both $\ell$ and $q$ are at most $T$, so they lie in the interval $(\eta T,T]$.

Because arm $i$ is selected at time $\tau$, the UCB index comparison gives
\begin{equation*}
    \barY_i(\ell)+\frac{f_\tau}{\sqrt \ell}
        \ge \barY_r(q)+\frac{f_\tau}{\sqrt q}.
\end{equation*}
Recall the definition of $\tilde n_{i,T}$, we have
\begin{equation*}
    \barY_i(n_{i,T})+\frac{f_T}{\sqrt{\tilde n_{i,T}}}
        = \barY_r(n_{r,T})+\frac{f_T}{\sqrt{\tilde n_{r,T}}}.
\end{equation*}
This gives
\begin{align}\label{eq:ucb-optimal-witness-1}
       \{\barY_i(\ell) - \barY_i(n_{i,T})\}+\{\barY_r(n_{r,T}) - \barY_r(q)\} \ge
        f_\tau\left(\frac1{\sqrt q}-\frac1{\sqrt {\ell}}\right)
        +f_T\left(\frac1{\sqrt{\tilde n_{i,T}}}-\frac1{\sqrt{\tilde n_{r,T}}}\right).
\end{align}
Since $\ell$ and $q$ both lie in the interval $(\eta T,T]$, then there exists constant $c$ such that for sufficiently large $T$, $f_\tau \ge cf_T$. Moreover, by $\ell = \tilde n_{i,T} + d$, $\tilde n_{r,T} - N_r(\tau-1) \ge \frac{d}{m-1}$ and $q = N_r(\tau-1)$, there exists a constant $c$ such that 
\begin{equation}\label{eq:ucb-optimal-witness-2}
    \frac{1}{\sqrt q} - \frac{1}{\sqrt{\tilde n_{r,T}}} \ge c\frac{\tilde n_{r,T}-N_r(\tau-1)}{\tilde  n_{r,T}^{3/2}} \qquad \frac{1}{\sqrt{\tilde n_{i,T}}} - \frac{1}{\sqrt \ell} \ge c\frac{d}{\tilde n_{i,T}^{3/2}}
\end{equation}
For $d>0$, define the dyadic grid
\begin{equation*}
    \mathcal D(d) = \left\{ 2^s d: s \ge 0,2^s d \le T \right\}.
\end{equation*}
Choose $D \in\mathcal D(d)$ so that $D \le d + \tilde n_{r,T}-N_r(\tau-1) <2D$.
Since $\tilde n_{r,T},\tilde n_{i,T}\le T$, equations~\eqref{eq:ucb-optimal-witness-1} and~\eqref{eq:ucb-optimal-witness-2} imply that, for sufficiently large $T$,
\begin{equation}
\label{eq:ucb-optimal-witness-3}
    \barY_i(\ell) - \barY_i(n_{i,T}) \ge c\frac{f_TD}{2T^{3/2}} \qquad \text{or} \qquad \barY_r(n_{r,T}) - \barY_r(q) \ge c\frac{f_TD}{2T^{3/2}}
\end{equation}
Moreover, under events $\{\max_{r\in\mathcal O}|\tilde n_{r,T} - n_{r,T}| \le g \}$,
\begin{equation}
\label{eq:ucb-optimal-witness-4}
    |\ell - n_{i,T}| \le g+2D \qquad |q-n_{r,T}|\le g+2D,
\end{equation}
Combining \eqref{eq:ucb-optimal-witness-3} and \eqref{eq:ucb-optimal-witness-4}, and applying Lemma~\ref{lem:local-modulus}, on events $B_\kappa \cap W_\eta^c \cap \{\max_{r\in\mathcal O}|\tilde n_{r,T} - n_{r,T}|\}$, 
\begin{equation*}
    \Omega_i(g+2D) \ge c\frac{f_TD}{2T^{3/2}} \qquad \Omega_r(g+2D) \ge c\frac{f_TD}{2T^{3/2}}
\end{equation*}
where $D \in D(d)$. For fixed $D$ and $r$, Lemma \ref{lem:local-modulus} bounds the probability of the above event by
$C\exp\left\{-c\frac{f_T^2D^2}{\sigma^2T(g+2D)}\right\}$.
Although the dyadic grid $D(d)$ contains $O(\log T)$ points, the exponent increases rapidly along the grid. Indeed, with $\psi(D)=D^2/(g+2D)$, the conditions $D\ge d$ and $d \ge \frac{\sqrt{n_{i,T}g}}{f_T}$ imply that, for sufficiently large $T$, $2<\psi(2D)/\psi(D)<4$. Hence the exponential terms are dominated by those at the first scale $d$. Since there are only $m$ choices of $r$, combining the above observations yields constants $C,c>0$ such that
\begin{equation*}
        \Pp\{A_T\le a,\ B_\kappa,W_\eta^c,\ N_i(T) - \tilde n_{i,T}>d\}
        \le C\exp\left\{-c\frac{f_T^2}{T}d\right\}
\end{equation*}

\emph{Case 2: $\tilde n_{i,T}-N_i(T)>d$.}
At time $T$ the total deficit of the other optimal arms relative to their balance counts is
\begin{equation*}
     \sum_{r\in\mathcal O\setminus\{i\}}(N_{r,T} - \tilde n_{r,T})
        = \tilde n_{i,T}- N_{i,T} - \sum_{j \notin \mathcal{O}}N_{j,T} + \sum_{j \notin \mathcal{O}}\tilde n_{i,T} \ge d/2
\end{equation*}
where $\sum_{j \notin \mathcal{O}}N_{j,T}$ is the number of suboptimal pulls and the last inequality holds by $d \geq T/2f_T^2$ and $\sum_{j \notin \mathcal{O}}N_{j,T} \leq T/f_T^2$ under $B_\kappa$. Thus, there exists a optimal arm $r$ such that 
\begin{equation*}
    N_{r,T} - \tilde n_{r,T} \ge \frac{d}{2(m-1)}
\end{equation*}
The remaining steps for obtaining the target bound are the same as in \emph{Case 1: $N_i(T)-\tilde n_{i,T}>d$}. We omit the details.
\end{proof}

\begin{proof}[Proof of Lemma~\ref{lem:verification of conditon3 for ucb1 optimal arm}]
For any fixed optimal arm $i \in \mathcal{O}$, we first verify the condition about $\Gamma_{i,T}$. By Definition~\ref{def:exploration rate}, $\Gamma_{i,T} = f_T = \sqrt{\rho\log T}$ in UCB1 and $\Gamma_{i,T} = \omega(1)$. 

By the tail bound for empirical fluid approximations in Lemma~\ref{lem:Tail bound of fluid approximation}, we directly obtain that for any $s>1$,
\begin{equation*}
    \frac{f_T}{n_{i,T}}\norm{\tilde n_{i,T}-n_{i,T}}_s=O(1) \quad \text{or} \quad \Gamma_{i,T}\norm{\frac{\tilde n_{i,T}-n_{i,T}}{n_{i,T}}}_s=O(1)
\end{equation*}
For notation simplification, define event  $\Upsilon_i = \{|N_i(T)-\tilde n_{i,T}|>d, d \ge \frac{\sqrt{n_{i,T}a}}{f_T} + \frac{T}{2f_T^2}\}$ for arm $i \in \mathcal{O}$.
Next,
\begin{align*}
    \norm{N_i(T)-\tilde n_{i,T}}_p & \leq \norm{|N_i(T)-\tilde n_{i,T}|\mathbb{I}_{\Upsilon_i^c}}_p + \norm{|N_i(T)-\tilde n_{i,T}|\mathbb{I}_{U_g\cap\ B_\kappa\cap W_\eta^c\cap\Upsilon_i}}_p \\
    & \ \ + \norm{|N_i(T)-\tilde n_{i,T}|\mathbb{I}_{B^c_\kappa \cup W_\eta \cup U_g^c}}_p 
\end{align*}
On the event $\Upsilon_i^c$, the desired bound 
$|N_i(T)-\tilde n_{i,T}|=o(T/f_T)$ holds directly. 
On the event $U_g \cap B_\kappa\cap W_\eta^c\cap\Upsilon_i$, Lemma~\ref{lem:ucb-optimal-witness}, applied with 
$g = T/\sqrt{f_T}$ for $U_g$, shows that the corresponding contribution to the $L^p$ norm is $o(T/f_T)$.  Finally, the remaining event 
$B_\kappa^c\cup W_\eta\cup U_g^c$ has probability 
$O(\log T \exp\{-cf_T^2\})$ for some constant $c>0$ by Lemmas~\ref{lem:Tail bound of fluid approximation} and~\ref{lem:ucb-opt-low-imbalance}; hence its contribution is negligible by H\"older's inequality. Therefore,
\begin{equation*}
     \frac{f_T}{n_{i,T}}\norm{N_i(T)-\tilde n_{i,T}}_p=o(1) \quad \text{or} \quad \Gamma_{i,T}\norm{\frac{N_i(T) - \tilde n_{i,T}}{n_{i,T}}}_p = o(1)
\end{equation*}
Finally, for any $r>1$,
\begin{align*}
    \norm{\bar X_i(N_i(T))-\bar X_i(n_{i,T})}_r & \leq \norm{|\bar X_i(N_i(T))-\bar X_i(n_{i,T})|\mathbb{I}_{\{|N_i(T) - n_{i,T}|\leq n_{i,T} /\sqrt{f_T}\}}}_r \\
    & \ \ + \norm{|\bar X_i(N_i(T))-\bar X_i(n_{i,T})|\mathbb{I}_{\{|N_i(T) - n_{i,T}|> n_{i,T} /\sqrt{f_T}\} \cap B_\kappa}}_r \\
    & \ \ + \norm{|\bar X_i(N_i(T))-\bar X_i(n_{i,T})|\mathbb{I}_{B^c_\kappa}}_r 
\end{align*}
For the first term, Lemma~\ref{lem:local-modulus} applies directly on the local event
$\{|N_i(T)-n_{i,T}|\leq n_{i,T}/\sqrt{f_T}\}$ and shows that this term is
$o(1/\sqrt{n_{i,T}})$. For the second term, the decomposition $|N_i(T)-n_{i,T}|
\leq |N_i(T)-\tilde n_{i,T}|+|\tilde n_{i,T}-n_{i,T}|$ together with Lemmas~\ref{lem:Tail bound of fluid approximation},
\ref{lem:ucb-opt-low-imbalance}, and~\ref{lem:ucb-optimal-witness}
gives the required tail bound for the non-local event. Hence, by Lemma~\ref{lem:maximal inequality} and H\"older's inequality, the second term is also $o(1/\sqrt{n_{i,T}})$. The third term is handled in the same way: Lemma~\ref{lem:ucb-opt-low-imbalance} gives $\mathbb{P}(B_\kappa^c)=O((\log T)^2T^{-\beta})$ for some $\beta>1$, and Lemma \ref{lem:maximal inequality} combined with H\"older's inequality makes its
contribution $o(1/\sqrt{n_{i,T}})$. Therefore,
\begin{align*}
    \sqrt{n_{i,T}}\norm{\bar X_i(N_i(T))-\bar X_i(n_{i,T})}_r=o(1).
\end{align*}
\end{proof}

\subsection{Poly-UCB}
By Lemma~\ref{lem:verification of conditons1-2 for ucb1}, Poly-UCB already satisfies  Conditions~\ref{ass:index-regularity} and~\ref{ass:index_regularity2}. Since the proof of Condition~\ref{ass:concentration} is very close to that for UCB1, we only give a proof sketch and omit
routine technical details. The key in both cases is to establish tail bounds for $N_a(T)$ and $N_a(T)-\tilde n_{a,T}$. The upper-tail argument for $N_a(T)$ is essentially identical for the two algorithms: in both cases, if a suboptimal arm is pulled too many times, then at some witness time its index must exceed that of an optimal arm, and this index comparison is ruled out by concentration. 

The main difference arises in the sharper tracking bound for $N_a(T)-\tilde n_{a,T}$. In the UCB1 proof, one can compare the index at the decision time $\tau$ to a terminal auxiliary target because $f_\tau$ is comparable to $f_T$ if $\tau = O(T)$.  
For Poly-UCB this step is invalid.  Indeed, for $f_t = t^\alpha$ even a suboptimal arm at $\tau \asymp T$ has $f_\tau - f_T \asymp T^\alpha,$ so the index error created by replacing $f_\tau$ with $f_T$ is of order $T^\alpha$.  Relative to the tracking scale needed for $N_a(T)-\tilde n_{a,T}=o(n_{a,T}/f_T)$, this is too large to be negligible. Hence the Poly-UCB proof of $N_a(T)-\widetilde n_{a,T}$ must use the
actual decision-time scale $f_\tau$ and the exact moving target $\tilde n_{a,\tau}$, rather than a terminal frozen target.  To make the decision-time scale $f_\tau$ usable, the Poly-UCB proof replaces terminal auxiliary targets by moving targets.  At each decision time $\tau$, define $n_{a,\tau}$ and $\tilde n_{a,\tau}$ by the definition of fluid approximations. Then the actual index of an arm at time $\tau$ could be compared to the same decision-time frozen equation.

\end{document}